\theoremstyle{plain}
\newtheorem{prop}[theorem]{\bf Proposition}
\theoremstyle{definition}
\newtheorem{assumption}{Assumption}
\newcommand{\nn}{\nonumber}
\def\QED{~\rule[-1pt]{5pt}{5pt}\par\medskip}
\renewenvironment{proof}{{\bf Proof: \ }}{\hfill \QED}
\let\ALP  \mathcal
\renewcommand{\sp}{\texttt{span}}
\DeclareMathOperator*{\minimize}{minimize}
\renewcommand{\cal}{\mathcal}
\newcommand{\bb}{\mathbb}
\newcommand{\bbm}{\mathbbm}
\newcommand{\mbf}{\mathbf}
\newcommand{\lip}{\textnormal{\texttt{Lip}}} 
\newcommand{\mmd}{\textnormal{\texttt{MMD}}}
\newcommand{\ipm}{\textnormal{\texttt{IPM}}}
\newcommand{\tv}{\textnormal{\texttt{TV}}}
\newcommand{\wass}{\textnormal{\texttt{W}}}
\newcommand{\diam}{\textnormal{\texttt{diam}}}
\newcommand{\msf}{\mathsf}
\newcommand{\absgap}{\textnormal{\texttt{Abs.Gap}}}
\newcommand{\spec}{\textnormal{\texttt{Spec}}}
\newcommand{\bl}{\textnormal{\texttt{BL}}}
\newcommand{\proj}{\textnormal{\texttt{Proj}}}
\newcommand{\ltpi}{{\ensuremath{L^2(\pi)}}}
\newcommand{\ltzpi}{{\ensuremath{L^2_0(\pi)}}}
\newcommand{\lppi}{{\ensuremath{L^p(\pi)}}}
\newcommand{\unif}{{\ensuremath{\|\cdot\|_\infty}}}
\newcolumntype{C}[1]{>{\centering\arraybackslash}p{#1}}
\begin{document}

\title{Hoeffding's Inequality for Markov Chains under Generalized Concentrability Condition}
\author{%
    \name Hao Chen \email chen.6945@osu.edu\\
    \name Abhishek Gupta \email gupta.706@osu.edu\\
    \addr Department of ECE\\
    The Ohio State University\\
    Columbus, OH 43210, USA
    \AND
    \name Yin Sun \email yzs0078@auburn.edu\\
    \addr Department of ECE\\
    Auburn University\\
    Auburn, AL 36849, USA
    \AND
    \name Ness Shroff \email shroff.11@osu.edu\\
    \addr Department of ECE and CSE\\
    The Ohio State University\\
    Columbus, OH 43210, USA
}
\editor{}

\maketitle

\begin{abstract}%
    This paper studies Hoeffding's inequality for Markov chains under the generalized concentrability condition defined via integral probability metric (IPM). The generalized concentrability condition establishes a framework that interpolates and extends the existing hypotheses of Markov chain Hoeffding-type inequalities. The flexibility of our framework allows Hoeffding's inequality to be applied beyond the ergodic Markov chains in the traditional sense. We demonstrate the utility by applying our framework to several non-asymptotic analyses arising from the field of machine learning, including (i) a generalization bound for empirical risk minimization with Markovian samples, (ii) a finite sample guarantee for Ployak-Ruppert averaging of SGD, and (iii) a new regret bound for rested Markovian bandits with general state space. 
\end{abstract}

\begin{keywords}
    Hoeffding's inequality, Markov chains, Dobrushin coefficient, integral probability metric, concentration of measures, ergodicity, empirical risk minimization, stochastic gradient descent, rested bandit
\end{keywords}

\section{Introduction}
\label{sec: introduction}
In this paper, we revisit Hoeffding's inequality for general state space Markov Chains through the lens of integral probability metric (IPM). The general state space considered here is \textit{Polish space}, i.e., a separable completely metrizable space. Concentration inequalities provide upper bounds on the tail probability of large deviation, that is, the deviation of the sum of random variables from its mean. One of the most important and celebrated concentration inequalities was first proposed by \citet{hoeffding1994probability}. Hoeffding's inequality states that for $n$ independent bounded random variables $Y_i\in[a_i, b_i]$, $i=1, \cdots, n$ and some $\epsilon > 0$, the following holds
\begin{align}
\label{eqn: og hoeffding inequality}
    \bb{P}\bigg(\bigg|\sum_{i=1}^n Y_i - \sum_{i=1}^n\bb{E}(Y_i)\bigg| \geq n\epsilon \bigg) \leq 2\exp\bigg(\frac{-2(n\epsilon)^2}{\sum_{i=1}^n(b_i-a_i)^2}\bigg).
\end{align}

Inequalities of this type are also called concentration of measure. Hoeffding's inequality gives a sub-Gaussian bound on the tail probability, i.e., the right-hand side of Equation \ref{eqn: og hoeffding inequality} decays as $\exp[-\Theta(n\epsilon^2)]$. There are numerous other concentration inequalities for \textit{independent} random variables that provide tail bounds with varying decay rates depending on the probability distributions of the random variables. Some related results are Bernstein's inequality and Bennett's inequality with decay rates $\exp[-\Theta(n\epsilon)]$ and $\exp[-\Theta(n\epsilon\log\epsilon + n\epsilon)]$, respectively. Interested readers can refer to \citet[Section 2]{boucheron2013concentration} for a complete overview of this topic.  These results are widely used in statistics, econometrics, causal inference, machine learning, reinforcement learning, empirical process theory, and other fields as a tool for non-asymptotic analysis. 

The independence assumption in the original Hoeffding's inequality limits its usage to applications where dependence among random variables naturally arises. Many research efforts have been dedicated to the extension of this result to random variables with various dependence structures, such as martingales \citep[see Azuma-Hoeffding inequality]{azuma1967weighted}. In particular, Hoeffding-type inequalities for Markov chains have attracted much attention during the past few decades, due to their prevalence in statistics, machine learning, and stochastic dynamic systems. For example, Markovian dependence occurs broadly in Markov chain Monte Carlo (MCMC), time series prediction, and reinforcement learning. 

Hoeffding-type inequalities for finite state space Markov chains are well-studied in the past \citep{davisson1981error, watanabe2017finite}. See \citet{moulos2019optimal, moulos2020hoeffding} for a recent survey on this topic. For \textit{general} state space Markov chains, Hoeffding's inequality was originally studied \citep{glynn2002hoeffding} under Doeblin's condition \citep{douc2018markov} which implies uniform ergodicity of Markov chains. More relaxed conditions were not considered until very recently. 

Unlike finite or countably infinite state space Markov chains, uniform ergodicity for general state space Markov chains is not met easily. As implied by Doeblin's condition, the Markov chain is required to jump, within a few steps, to anywhere in the state space with nontrivial probability given an arbitrary initial condition. Physical processes, such as the motion of vehicles and the articulation of robotic arms, often naturally follow continuity, which means the system state is not likely to evolve far from its starting point within a short period of time. Therefore, uniform ergodicity assumption can lead to two major inefficiencies in the analysis: (i) excluding many ``well-behaving'' systems, such as a stable linear time-invariant system; and (ii) yielding loose bounds for the tail probability, which are made precise in Section \ref{sec: comparison}.

For example, a common assumption in reinforcement learning is that the system is uniformly ergodic under any control policy. However, many reinforcement learning algorithms are tested in OpenAI Gym \citep{brockman2016openai} environments that are deterministic robotic systems. They do not satisfy the uniform ergodicity under a deterministic control policy, as the singular nature of Dirac measures violates the minorization condition. However, under a stabilizing control policy, these systems satisfy weaker versions of ergodicity, such as ergodicity in the sense of the Wasserstein metric. Similarly, in multi-armed bandit algorithms, the usual assumption is that the reward distributions are i.i.d. However, bandits that are driven by physical systems feature reward samples that are Markovian in nature \citep{tekin2012online, anantharam1987asymptotically} with hard-to-determine convergence properties. The existing results on this topic assume finite state space and Markov chains are uniformly ergodic, which is a strong assumption to impose on physical systems and leads to limited utility in practice. Online risk minimization with temporally correlated samples is another example. A generalization error bound was proposed for $\beta$-mixing processes \citep{agarwal2012generalization}. In the context of Markovian processes, this corresponds to uniform ergodicity which, again, is very rigid to use in practice. 

The aforementioned bottlenecks prohibit the application of Hoeffding-type inequality in the analysis of learning algorithms in dynamic systems which are commonly found in reinforcement learning. Various research efforts have been dedicated to the relaxation of uniform ergodicity used in \citet{glynn2002hoeffding}. In \citet{miasojedow2014hoeffding} and \citet{fan2021hoeffding}, the Markov transition kernel is viewed as a linear operator on $L^2(\pi)$ where $\pi$ is the invariant distribution. By assuming a positive spectral gap of the transition kernel, the authors are able to show Hoeffding's inequalities for geometrically ergodic Markov chains. A recent study \citep{sandric2021hoeffding} shows a version of Hoeffding's inequality for Markov chains by using the Wasserstein metric in place of the total variation (TV) distance. By considering a weaker metric for the Markov chain convergence condition, Hoeffding's inequality now can be extended to non-irreducible Markov chains which are not covered by the classical ergodicity conditions (Example \ref{exa: non-irreducible mc}). 

Despite the various research efforts made to relax the uniform ergodicity, they still leave something to be desired---\textit{does there exist a flexible notion of ergodicity that generalizes the existing ones and adapts to different use cases in practice?} We answer this question positively by proposing a novel notion of Markov chains convergence via integral probability metric (IPM) \citep{muller1997integral} and studying Hoeffding's inequality for Markov chains converging in general integral probability metric. IPM is a statistical distance function that has been successfully applied to generative adversarial networks (GAN), distributionally robust optimization (DRO), and other machine learning problems (see Section \ref{sec: ipm applications}). It can be intuitively thought of as the maximal discrepancy between to probability distribution seen by a class of functions $\mbf F$, or generators:
\begin{align*}
    \ipm_\mbf F(\mu, \nu) = \sup_{f\in\mbf F} |\bb E_\mu f - \bb E_\nu f|,
\end{align*}
where $f$ is referred to as the witness function. IPM provides a framework that recovers many probability metrics by choosing the appropriate function class (or generator) $\mbf F$ (Table \ref{tab: probability metrics in IPM form}). The strength of the probability metric can be easily controlled by the size of $\mbf F$. This allows us to extend the convergence of Markov chains in terms of total variation metric to a wide array of distance functions. Our main results, formally stated in Theorem \ref{thm: main result time dep} and \ref{thm: main result time indep} show that, depending on the function applied to the Markov chain, Hoeffding-type inequality holds when the Markov chain converges in a weaker sense (than TV metric). In most cases, it is unnecessary to enforce the ergodicity on the Markov chain if the functions of interest lie in a smaller set than bounded measurable functions. Thus, our results consider richer classes of Markov chains and the corresponding function classes for computing the concentration of measure results, which have not been considered before. Additionally, we refine the existing results by providing a sharper tail bound than those assuming the ergodicity. Our assumption on the Markov is adaptive to the functions applied to the Markov chain, which makes our theory a flexible framework that generalizes many existing studies.

The remainder of the paper is organized as follows. The rest of Section \ref{sec: introduction} contains the summary of the main contributions of this paper and an introduction to the notations. Section \ref{sec: preliminary} is dedicated to the review of the integral probability metric and its application. Section \ref{sec: main results} presents the main results of this paper and the proofs are deferred to the Appendix. Section \ref{sec: comparison} reviews the existing Hoeffding-type inequalities in the literature and establishes their connections with our framework. Applications of Hoeffding's inequality to machine learning are presented in Section \ref{sec: application}. 

\begin{table}[t]
    \centering
    \begin{tabular}{|c|c|c|}
        \hline
        \textbf{Assumptions on Markov Chains} &  \textbf{Reference}\\
        \hline
        Doeblin's condition/Uniformly ergodic & \citet{glynn2002hoeffding}\\
        \hline
        V-Uniformly ergodic & \citet{douc2011consistency}\\
        \hline
        Geometric ergodic & \citet{miasojedow2014hoeffding}\\
        \hline
        Non-irreducible converging in Wasserstein metric & \citet{sandric2021hoeffding}\\
        \hline
        Periodic & \citet{liu2021hoeffding}\\
        \hline
    \end{tabular}
    \caption{Existing Hoeffding's inequalities for Markov chains}
    \label{tab:my_label}
\end{table}

\subsection{Contributions}
The contributions of this paper are summarized as follows.

\begin{enumerate}
    \item\textit{Extending the Notion for Markov Chain Convergence.}
    We propose, via IPM, the \textbf{generalized concentrability condition} and\textbf{ IPM Dobrushin coefficient} for general state space Markov chain. We argue that these two new concepts along with IPM itself should be treated as a framework to study the convergence of Markov chains. We demonstrate their unifying power by recovering and extending the traditional conditions on Markov chains assumed for concentration inequalities, such as the uniform ergodicity \citep{douc2018markov, meyn2012markov}, V-uniformly ergodicity \citep{douc2011consistency}, Wasserstein ergodicity \citep{sandric2021hoeffding, rudolf2018perturbation}, and weak convergence \citep{diaconis1999iterated, breiman1960strong}. 
    
    The weak convergence of Markov chains to an invariant distribution is studied in \citet{diaconis1999iterated} and \citet{breiman1960strong} under the iterated random operator theory. The convergence of Markov chains in the Wasserstein metric is studied in \citet{hairer2011asymptotic} and \citet{butkovsky2014subgeometric}. Our motivation for further expanding the notion of Markov chain convergence is to increase the flexibility of the theory to accommodate the ever-changing real-world challenges. Characterizing the Markov chain convergence with IPM is suitable for this task as the strength of the distance function can be controlled via the choice of generators. 

    \item\textit{Hoeffding's Inequality under Weaker Assumptions on the Markov Chain.}
    We relax the assumptions on the Markov chains required by Hoeffding-type inequalities. In the main results, we emphasize the link between the functions applied on the Markov chains and the generator of the IPM. Armed with this insight, one can identify the unnecessarily strong conditions used in some existing works. For example, the empirical mean of a bounded Lipschitz function concentrates around its mean (under invariant distribution) with Hoeffding-type tail bound as soon as the Markov chain converges in the bounded Lipschitz metric under a certain speed. However, according to the existing state-of-the-art result \citep[see][Theorem 1]{sandric2021hoeffding}, the convergence needs to happen under the Wasserstein-1 metric which is known to be slightly stronger. 
    
    \item\textit{Unifying and Strengthening Different Versions of Concentration Inequalities.}
    Our work unifies, and in some cases strengthens, multiple previous results on Hoeffding's inequality for general state space Markov chains. Many existing results on Hoeffding-type inequalities for Markov chains emerge as special cases under our framework, including \citet{glynn2002hoeffding, douc2011consistency, fan2021hoeffding, sandric2021hoeffding} to name a few. For example, we strengthen the Hoeffding inequality for uniformly ergodic Markov chain derived in \citet{glynn2002hoeffding} by a tighter upper bound on the martingale difference term and exploit the Hoeffding lemma for martingales with bounded martingale differences. This also allowed us to derive the constants in the tail bound. In Section \ref{sec: comparison}, we establish the link with various Hoeffding-type inequalities in the literature. 

    Our tail bounds also feature explicit constants, whereas some results in the literature only provide the existence of constants \citep[see][Chapter 23]{douc2018markov} rendering them less convenient to use in practice.
    
    \item\textit{Applications to Learning Problems.}
    We provide three examples of machine learning and reinforcement learning in Section \ref{sec: application}, where our concentration result is used to weaken the hypotheses under which the results have been proved previously. The first example covers the well-studied generalization error bound for empirical risk minimization with non-i.i.d. samples \citep{agarwal2012generalization}. We derive the generalization bound of empirical risk minimization with Markovian samples, where the sample-generating Markov chain converges in IPM. This result extends the result in \citet{agarwal2012generalization}. The second example tackles the finite sample analysis of averaging of iterates of the stochastic gradient descent (SGD), extending the results in \citet{gupta2020some}. The last example establishes the logarithmic regret bound for general state space Markovian bandit under a UCB-type algorithm, which extends the existing finite state results reported in \citet{anantharam1987asymptotically} and \citet{tekin2012online}. 

\end{enumerate}
\subsection{Notations}
\label{sec: notations}
Consider a measure space $(\bb{X}, \cal X, \zeta)$ equipped with Borel $\sigma$-algebra $\cal X$ and a non-negative $\sigma$-finite measure $\zeta$. Assume $\bb{X}$ is a Polish space, that is, a separable completely metrizable topological space with metric $\mathsf d$. Consider a measurable weight function $V: \bb{X} \to [1, \infty)$ and define the $V$-norm for measurable functions $f: \bb{X}\to \bb{R}$ as
\begin{align}
    \|f\|_V = \sup_{x\in\bb{X}}\frac{|f(x)|}{V(x)}.
\end{align}
The integral of a function is written as $\mu(f) = \int_{\bb{X}} f d\mu$. Let $\bb{M}(\bb{X})$ denote the set of measurable real-valued function on $\bb{X}$ and $\bb{M}_V(\bb{X}) = \{f\in\bb{M}(\bb{X}) : \|f\|_V<\infty\}$. Let $\cal P(\cal X)$ denote the set of probability measures on $\cal X$ and $\cal P_V(\cal X) = \{\mu \in\cal P(\cal X) : \mu(V) < \infty\}$. When $V\equiv 1$ for $\forall x\in\bb{X}$, $\bb{M}_1(\bb{X})$ is denoted as $L^\infty(\bb{X}, \zeta)$ or $L^\infty(\zeta)$ when there is not ambiguity, and $V$-norm on $\bb{M}_1(\bb{X})$ is denoted as $\|\cdot\|_\infty$, i.e. the uniform norm. Note that $\cal P_1(\cal X)$ is the set of probability measures on $\cal X$ with finite first moment. Let $\bb{L}(\bb{X})$ denote the set of real-valued Lipschitz continuous functions on $\bb{X}$ and $\lip(f)$ denote the Lipschitz constant of $f\in\bb{L}(\bb{X})$. Let $\bb{C}_b(\bb{X})$ denote the set of real-valued continuous bounded functions on $\bb{X}$. Let $E$ be a set and scaling of $E$ by $a\in[0, \infty)$ is written as $a\cdot E = \{ax : x \in E\}$.

Let $\{X_n\}_{n\geq 0}$ be a Markov chain on $\bb{X}$ with Markov transition kernel $P:\bb{X}\times\cal X \to [0, 1]$. We assume the Markov chain to be time-homogeneous throughout the paper. Denote the $n$ step transition probability as $P^n(x, dy) = \bb{P}(X_n\in dy | X_0 = x)$ for $n \geq 1$. We write $\bb{P}_{\mu}(X_n)$ as the probability distribution of $X_n$ given initial state distribution $\mu$. If $\mu = \delta_x$ for some $x\in\bb{X}$, where $\delta_x$ is the Dirac measure at $x$, we also write it as $\bb{P}_{x}(X_n)$. For a $f\in\bb M(\bb X)$, we denote $P^nf(x) = \int_{\bb X}f(x')P^n(x, dx')$ for $n\geq 1$.

Let $a\vee b \coloneqq \max\{a, b\}$ and $a\wedge b\coloneqq \min\{a, b\}$. Let $\bb{R}_+ = \{x\in\bb{R}: x\geq 0\}$ be the set of non-negative real numbers. Let $\bb{Z}_+ = \{x\in\bb{Z}: x > 0\}$ be the set of positive integers. Let $\bb{N}$ denote the set of natural numbers and we follow the convention that $0\in\bb{N}$.

We summarize the notations for various distance functions used throughout the paper in Table \ref{tab: prob metric notation list}.
\begin{table}[t]
    \centering
    \begin{tabular}{|c|c|}
         \hline
         Notation & Description \\
         \hline
         $\msf d$ & metric for a Polish space\\
         \hline
         $\tv$ & Total variation (TV) metric \\
         \hline
         $\wass_{p, \msf d}$ & Wasserstein-($p$, $\msf d$) metric\\
         \hline
         $\|\cdot\|_\bl$ & Bounded Lipschitz norm of real-valued functions\\
         \hline
         $\bl$ & Bounded Lipschitz metric between probability measures\\
         \hline
         $\|\cdot\|_\ltpi$ & $\ltpi$-norm for functions and probability measures\\
         \hline
    \end{tabular}
    \caption{List of notation for distance functions}
    \label{tab: prob metric notation list}
\end{table}

\section{Overview of IPM}
\label{sec: preliminary}

To facilitate our discussion on IPM, we briefly recall its fundamental properties from \citet{muller1997integral} where the term \textit{integral probability metric} was first coined. Before that, it was known as the \textit{probability metric with a $\zeta$-structure} \citep{zolotarev1984probability}. 

\subsection{Definition and Properties}
An IPM is characterized by its generating function class (or generator) as shown in Definition \ref{def: ipm}. For general choice of $\mbf F$, $\ipm_{\mbf F}$ is a pseudometric with the following properties:
\begin{enumerate}
    \item $\ipm_{\mbf F}(\mu_1, \mu_1) = 0$, \label{weak positive definiteness}
    \item $\ipm_{\mbf F}(\mu_1, \mu_2) = \ipm_{\mbf F}(\mu_2, \mu_1)$,
    \item $\ipm_{\mbf F}(\mu_1, \mu_3) \leq \ipm_{\mbf F}(\mu_1, \mu_2) + \ipm_{\mbf F}(\mu_2, \mu_3)$,
\end{enumerate}
for all $\mu_1, \mu_2, \mu_3 \in\cal P_V(\cal X)$. $\ipm_{\mbf F}$ becomes a metric when $\ipm_{\mbf F}(\mu_1, \mu_2) = 0 \Leftrightarrow \mu_1 = \mu_2$. Examples of $\mbf F$ such that $\ipm_\mbf F$ is a metric are provided in Table \ref{tab: probability metrics in IPM form}. For instance,  when $\mbf F$ is a reproducing kernel Hilbert space (RKHS) \citet{aronszajn1950theory}, $\ipm_{\mbf F}$ is a metric if and only if the kernel function associated with the RKHS is characteristic \citep{sriperumbudur2010hilbert}. To the best of our knowledge, a general sufficient condition on $\mbf F$ for $\ipm_{\mbf F}$ to be a metric is still an open problem in the literature. 
\begin{definition}
\label{def: ipm}
An \text{integral probability (pseudo)metric} $\ipm_{\mbf F}$ on $\cal P_V(\cal X)$ generated by a function class $\mbf F \subseteq \bb{M}_V(\bb{X})$ is defined as
\begin{align}
\label{eqn: ipm def}
    \ipm_{\mbf F}(\mu, \nu) \coloneqq \sup_{h\in\mbf F}|\mu(f) - \nu(f)|, \quad \text{for }\mu,\ \nu\in\cal P_V(\cal X),
\end{align} 
where $f$ is referred to as the witness function, and $\mbf F$ is called the generator. 
\end{definition}

\begin{definition}
\label{def: maximal generator}
    Let $\mbf F\subseteq \bb{M}_V(\bb{X})$. We define the \textbf{maximal generator} of $\mbf F$ as 
    \begin{align}
        \mbf G_{\mbf F} = \{f\in\bb{M}_V(\bb{X}) : |\mu(f) - \nu(f)| \leq \ipm_{\mbf F}(\mu, \nu),\ \text{for all } \mu, \nu\in\cal P_V(\cal X)\}.
    \end{align}
\end{definition}
Given generator $\mbf F$, one can define a unique maximal generator as in Definition \ref{def: maximal generator}. The definition of the maximal generator allows the comparison between IPMs with different generators, which a formally stated in the following propositions.
\begin{prop}
\label{thm: equivalent ipm}
    Let $\mbf F \subseteq \mbf D \subseteq \bb{M}_V(\bb{X})$, and $\mu,\ \nu \in \cal P_V(\cal X)$. The following statements hold
    \begin{enumerate}
        \item $\ipm_{\mbf F}(\mu, \nu) \leq \ipm_{\mbf D}(\mu, \nu)$.
        \item $\mbf G_{\mbf F} \subseteq \mbf G_{\mbf D}$.
        \item If $\mbf D \subseteq \mbf G_{\mbf F}$, then $\ipm_{\mbf F}$ and $\ipm_{\mbf D}$ are identical.
    \end{enumerate}
\end{prop}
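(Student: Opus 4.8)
The plan is to derive all three claims directly from Definitions \ref{def: ipm} and \ref{def: maximal generator}; each is a monotonicity statement about a supremum, so no machinery beyond unwinding definitions is required.

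For claim 1, I would note that $\ipm_{\mbf F}(\mu,\nu)$ and $\ipm_{\mbf D}(\mu,\nu)$ are suprema of the \emph{same} functional $f \mapsto |\mu(f) - \nu(f)|$ taken over the nested index sets $\mbf F \subseteq \mbf D$. Enlarging the feasible set cannot decrease a supremum, so $\ipm_{\mbf F}(\mu,\nu) \leq \ipm_{\mbf D}(\mu,\nu)$ for every $\mu,\nu \in \cal P_V(\cal X)$.

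For claim 2, I would fix an arbitrary $f \in \mbf G_{\mbf F}$ and arbitrary $\mu,\nu \in \cal P_V(\cal X)$. By the defining property of the maximal generator, $|\mu(f) - \nu(f)| \leq \ipm_{\mbf F}(\mu,\nu)$, and by claim 1 the right-hand side is at most $\ipm_{\mbf D}(\mu,\nu)$. Since $\mu,\nu$ were arbitrary, $f$ meets the condition defining $\mbf G_{\mbf D}$, hence $\mbf G_{\mbf F} \subseteq \mbf G_{\mbf D}$. For claim 3, one inequality is again claim 1 applied to $\mbf F \subseteq \mbf D$, giving $\ipm_{\mbf F}(\mu,\nu) \leq \ipm_{\mbf D}(\mu,\nu)$. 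For the reverse, the hypothesis $\mbf D \subseteq \mbf G_{\mbf F}$ says every $f \in \mbf D$ satisfies $|\mu(f) - \nu(f)| \leq \ipm_{\mbf F}(\mu,\nu)$; taking the supremum over $f \in \mbf D$ yields $\ipm_{\mbf D}(\mu,\nu) \leq \ipm_{\mbf F}(\mu,\nu)$. Combining the two bounds gives identical pseudometrics on all of $\cal P_V(\cal X)$.

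I do not expect a genuine obstacle; the statements are essentially definitional. The only care needed is bookkeeping with quantifiers — in claims 2 and 3 one must first fix $\mu,\nu$, apply the pointwise bound valid for all witness functions, and only then take the supremum over the generator — and observing that the witness variable in Definition \ref{def: ipm} ranges over the generator (so the clash between the symbols $h$ and $f$ there is immaterial).
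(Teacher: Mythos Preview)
Your proof is correct. The paper does not supply its own proof of this proposition; it is recalled without argument from \citet{muller1997integral} as part of the background on IPMs, so there is nothing substantive to compare against. Your definitional unwinding is exactly the standard route and would be accepted as-is.
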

\begin{prop}
\label{thm: characterization of maximum generator}
    Let $\mbf F \subseteq \bb{M}_V(\bb{X})$. Then:
    \begin{enumerate}
        \item $\mbf G_{\mbf F}$ contains the convex hull of $\mbf F$;
        \item If $f\in\mbf G_{\mbf F}$, then $\alpha f+ \beta \in\mbf G_{\mbf F}$, for all $\alpha\in[-1, 1]$ and $\beta\in\bb{R}$;
        \item $\mbf G_{\mbf F}$ is closed under the uniform norm.
    \end{enumerate}
\end{prop}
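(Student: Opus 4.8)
The plan is to prove all three items directly from the definition of $\mbf G_{\mbf F}$, relying only on two elementary facts: every $\mu\in\cal P_V(\cal X)$ is a probability measure, so $\mu(\beta)=\beta$ for a constant $\beta$, and $V\ge 1$, so $\|\cdot\|_V\le\|\cdot\|_\infty$ and in particular $\bb{M}_V(\bb{X})$ contains all bounded measurable functions. Before starting I would record that for $f\in\bb{M}_V(\bb{X})$ and $\mu\in\cal P_V(\cal X)$ the integral $\mu(f)$ is finite, since $|f|\le\|f\|_V\,V$ gives $\mu(|f|)\le\|f\|_V\,\mu(V)<\infty$; this makes every expression below meaningful.

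For item~1, I would take an arbitrary finite convex combination $f=\sum_{i=1}^k\lambda_i f_i$ with $f_i\in\mbf F$, $\lambda_i\ge 0$, $\sum_i\lambda_i=1$, note $\|f\|_V\le\sum_i\lambda_i\|f_i\|_V<\infty$ so that $f\in\bb{M}_V(\bb{X})$, and then use linearity of the integral and the triangle inequality to get
\begin{align*}
    |\mu(f)-\nu(f)|=\Big|\sum_{i=1}^k\lambda_i\big(\mu(f_i)-\nu(f_i)\big)\Big|\le\sum_{i=1}^k\lambda_i\sup_{h\in\mbf F}|\mu(h)-\nu(h)|=\ipm_{\mbf F}(\mu,\nu)
\end{align*}
for all $\mu,\nu\in\cal P_V(\cal X)$, which is exactly the membership condition for $\mbf G_{\mbf F}$; the special case $k=1$ also yields $\mbf F\subseteq\mbf G_{\mbf F}$. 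For item~2, I would fix $f\in\mbf G_{\mbf F}$, $\alpha\in[-1,1]$, $\beta\in\bb{R}$, observe that $\mu(\alpha f+\beta)-\nu(\alpha f+\beta)=\alpha\big(\mu(f)-\nu(f)\big)$ because the constant cancels, hence $|\mu(\alpha f+\beta)-\nu(\alpha f+\beta)|=|\alpha|\,|\mu(f)-\nu(f)|\le|\mu(f)-\nu(f)|\le\ipm_{\mbf F}(\mu,\nu)$, and conclude by noting $\|\alpha f+\beta\|_V\le|\alpha|\,\|f\|_V+|\beta|<\infty$.

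For item~3, given $(f_n)\subseteq\mbf G_{\mbf F}$ with $\|f_n-f\|_\infty\to 0$, I would first note that $f$ is measurable as a uniform limit of measurable functions and that $\|f\|_V\le\|f_n\|_V+\|f_n-f\|_\infty<\infty$ for $n$ large, so $f\in\bb{M}_V(\bb{X})$. Then for any $\mu,\nu\in\cal P_V(\cal X)$ I would insert $f_n$ and use $|\mu(f)-\mu(f_n)|\le\mu(|f-f_n|)\le\|f-f_n\|_\infty$ (and the analogue for $\nu$) together with $f_n\in\mbf G_{\mbf F}$ to obtain
\begin{align*}
    |\mu(f)-\nu(f)|\le|\mu(f)-\mu(f_n)|+|\mu(f_n)-\nu(f_n)|+|\nu(f_n)-\nu(f)|\le\ipm_{\mbf F}(\mu,\nu)+2\|f-f_n\|_\infty,
\end{align*}
and letting $n\to\infty$ gives $|\mu(f)-\nu(f)|\le\ipm_{\mbf F}(\mu,\nu)$, i.e.\ $f\in\mbf G_{\mbf F}$.

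None of the three steps involves a genuine difficulty; the only points deserving care are the bookkeeping that each newly formed function still lies in $\bb{M}_V(\bb{X})$ (which is where $V\ge 1$ enters) and the verification that all integrals in the estimates are finite (which is where $\mu,\nu\in\cal P_V(\cal X)$ is used). If anything here can be called the ``main obstacle,'' it is merely keeping track that $\mbf G_{\mbf F}$ is defined relative to $\cal P_V(\cal X)$ rather than all finite measures, so that the bound $\mu(V)<\infty$ is available throughout.
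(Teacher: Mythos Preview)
Your proof is correct. The paper does not actually supply its own proof of this proposition: it is recalled from \citet{muller1997integral} as background and stated without argument, so there is nothing in the paper to compare against. Your direct verification from the definition of $\mbf G_{\mbf F}$---triangle inequality for convex combinations, cancellation of the additive constant together with $|\alpha|\le 1$ for affine shifts, and a three-term splitting plus $\|f-f_n\|_\infty\to 0$ for uniform closure---is exactly the standard proof one finds in that source, and your bookkeeping that each new function remains in $\bb{M}_V(\bb{X})$ is the only place any care is needed.
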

When $\mbf F = \{\bbm{1}_B : B\in\cal X\}$ and $\mbf D = \{\|f\|_\infty \leq 1 : f\in\bb{M}_V(\bb{X})\}$, we can see that $\mbf D \subseteq \mbf G_\mbf F$ as $\mbf G_\mbf F$ contains all absolute convex combination of $\mbf F$ and is closed under the uniform norm by Proposition \ref{thm: characterization of maximum generator}. Using Proposition \ref{thm: equivalent ipm} (3), we can conclude that $\ipm_\mbf F = \ipm_\mbf D$. In fact, they are equivalent representations of the TV metric (differ by a factor of 2). 
\begin{prop}
\label{prop: absolute convexity of max gen}
    If $\mbf F \subseteq \mbf D \subseteq \bb{M}_V(\bb{X})$, and $\mbf D$  is absolutely convex, contains the constant functions, and is closed with respect to pointwise convergence, then $\mbf D = \mbf G_{\mbf F}$.
\end{prop}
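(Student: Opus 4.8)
The statement is an equality of function classes, so the plan is to prove the two inclusions $\mbf D\subseteq\mbf G_{\mbf F}$ and $\mbf G_{\mbf F}\subseteq\mbf D$ separately. The inclusion $\mbf D\subseteq\mbf G_{\mbf F}$ is the routine one: $\mbf F\subseteq\mbf D$ and the stability properties assumed of $\mbf D$ include those listed for $\mbf G_{\mbf F}$ in Proposition~\ref{thm: characterization of maximum generator} (absolute convexity dominates the convex-hull and $\alpha f+\beta$ properties, and closedness under pointwise limits dominates uniform-norm closedness), so $\mbf F$ and $\mbf D$ carry the same IPM, and then Proposition~\ref{thm: equivalent ipm} together with the very definition of the maximal generator gives $|\mu(f)-\nu(f)|\le\ipm_{\mbf F}(\mu,\nu)$ for every $f\in\mbf D$ and all $\mu,\nu\in\cal P_V(\cal X)$, i.e.\ $f\in\mbf G_{\mbf F}$.

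The substance is $\mbf G_{\mbf F}\subseteq\mbf D$, which I would obtain by a Hahn--Banach / bipolar argument on $\bb M_V(\bb X)$ equipped with the topology of bounded pointwise convergence, whose continuous linear functionals are exactly the finitely supported signed measures $\ell=\sum_{i=1}^k a_i\delta_{x_i}$. Argue by contraposition: suppose $f\notin\mbf D$. Since $\mbf D$ is absolutely convex and (by hypothesis) closed in this topology, a separation theorem yields such an $\ell$ with $\langle\ell,f\rangle>\sup_{g\in\mbf D}|\langle\ell,g\rangle|$ (absolute convexity turns the one-sided separating inequality into the two-sided bound on the right, and $0\in\mbf D$ makes that supremum nonnegative and finite). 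Because every constant function lies in $\mbf D$, finiteness of the right-hand side forces $\sum_i a_i=0$, so splitting $\ell$ into its positive and negative parts writes $\ell=m(\mu-\nu)$ with $m=\tfrac12\sum_i|a_i|$ and $\mu,\nu$ finitely supported probability measures, which lie in $\cal P_V(\cal X)$ since $V$ is finite at each point. If $m=0$ then $\ell=0$, contradicting $\langle\ell,f\rangle>0$; otherwise, bounding the right-hand supremum below via $\mbf F\subseteq\mbf D$ by $\sup_{g\in\mbf F}|\langle\ell,g\rangle|=m\,\ipm_{\mbf F}(\mu,\nu)$, we get $m\,|\mu(f)-\nu(f)|=|\langle\ell,f\rangle|>m\,\ipm_{\mbf F}(\mu,\nu)$, and dividing by $m$ shows $f\notin\mbf G_{\mbf F}$.

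The main obstacle is the topological hygiene in the separation step: one must commit to a locally convex topology on $\bb M_V(\bb X)$ for which, simultaneously, (i) ``closed with respect to pointwise convergence'' is the right notion of closedness for $\mbf D$ --- this should be read as closedness under \emph{bounded} pointwise limits, which is also the form in which $\mbf G_{\mbf F}$ inherits the property, via dominated convergence against the $V$-bound --- and (ii) the dual is exactly the finitely supported signed measures, so that the separating functional can be massaged into a genuine difference of probability measures once the constants are known to lie in $\mbf D$. The remaining pieces --- the degenerate case $m=0$, and checking that finitely supported measures belong to $\cal P_V(\cal X)$ --- are routine.
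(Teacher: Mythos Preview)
The paper does not supply a proof of Proposition~\ref{prop: absolute convexity of max gen}; it is quoted as background from \citet{muller1997integral}. So there is nothing in the paper to compare against, and the relevant question is whether your argument is correct.

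Your ``routine'' inclusion $\mbf D\subseteq\mbf G_{\mbf F}$ is where the problem lies, and in fact this inclusion is \emph{false} under the stated hypotheses---which means the proposition itself, as transcribed in the paper, cannot hold. Take $\bb X=\{0,1\}$ with $V\equiv 1$, so that $\bb M_V(\bb X)\cong\bb R^2$. Let $\mbf F=\{0\}$ and $\mbf D=\bb R^2$. Then $\mbf D$ is absolutely convex, contains all constants, and is closed under pointwise convergence, and $\mbf F\subseteq\mbf D$; but $\ipm_{\mbf F}\equiv 0$, so $\mbf G_{\mbf F}$ is exactly the diagonal $\{(c,c):c\in\bb R\}$, which is strictly smaller than $\mbf D$. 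Your argument for this direction asserts that because $\mbf D$ enjoys the same closure properties that Proposition~\ref{thm: characterization of maximum generator} records for $\mbf G_{\mbf F}$, ``$\mbf F$ and $\mbf D$ carry the same IPM.'' That is a non sequitur: Proposition~\ref{thm: characterization of maximum generator} says that applying those operations to elements of $\mbf G_{\mbf F}$ keeps you inside $\mbf G_{\mbf F}$; it does not say that any set closed under those operations and containing $\mbf F$ must generate the same IPM as $\mbf F$. Nothing in the hypotheses prevents $\mbf D$ from being much larger than the closure of $\mbf F$ under those operations.

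Your Hahn--Banach argument for $\mbf G_{\mbf F}\subseteq\mbf D$ is sound and is almost certainly the intended content of the result in \citet{muller1997integral}: the maximal generator is the \emph{smallest} absolutely convex, pointwise-closed class containing $\mbf F$ and the constants, not the unique one. The separation in the product topology on $\bb R^{\bb X}$ (whose dual is exactly the finitely supported signed measures), followed by the observation that the presence of all constants forces the separating functional to have total mass zero, is the standard route and you have it right. The ``bounded pointwise'' refinement you mention is unnecessary here: closedness under arbitrary pointwise limits already gives closedness in the product topology, and that is what the separation theorem needs. So your second paragraph proves the correct statement; it is only the first paragraph---and the equality sign in the proposition---that should be discarded.
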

\begin{table}[H]
    \centering
    \begin{tabular}{|C{0.34\textwidth}|p{0.5\textwidth}| p{0.1\textwidth}|}
        \hline
        Function class $\mbf F$&  Description of $\ipm_{\mbf F}$ & Notation\\
        \hline
        $\{f\in\bb{C}_b(\bb{X}): \|f\|_\infty \leq 1\}$ & Continuous and bounded function on $\bb X$. Convergence in this IPM implies weak convergence \citep{shorack2000probability}. & \\
        \hline
        $\{\bbm{1}_B : B\in\cal X\}$ or $\{f: \|f\|_\infty \leq 1\}$ & It is 2 times the total variation distance. & $\tv$\\
        \hline
        $\{f\in\bb M(\bb X): \|f\|_\infty + \lip(f) \leq 1\}$ & It is the Dudley metric \citep{dudley2018real} which metrizes the weak topology on Polish spaces. & \\
        \hline
        $\{f\in\bb M(\bb X): \max\{\|f\|_\infty, \lip(f)\}\leq 1\}$ & It is the bounded Lipschitz metric \citep{hille2022explicit,czapla2024central} which is equivalent to Dudley metric. & $\bl$\\
        \hline
        $\{f\in\bb M(\bb X): \lip(f) \leq 1\}$ & It is the Kantorovich–Rubinstein dual representation of the Wasserstein-1 metric on a Polish space 
        \citep{edwards2011kantorovich}. & $\wass_1$\\
        \hline
        $\{f\in\bb M(\bb X): \|f\|_{\bb H_k} \leq 1\}$ & Closed unit ball in a RKHS $\bb H_k$ with kernel $k$. It is the maximum mean discrepancy. & $\mmd_k$.\\
        \hline
        $\{f\in\bb M(\bb X): \|f\|_{\bb W} \leq 1\}$ & Closed unit ball in a Sobolev space $\bb W$. This is the Sobolev IPM used in the Sobolev GAN \citep{mroueh2017sobolev}. & \\
        \hline
    \end{tabular}
    \caption{Probability metrics in the form of IPM}
    \label{tab: probability metrics in IPM form}
\end{table}

\subsection{Connections with Classical Probability Metrics and Weak Convergence}
As a general form of probability distance function, IPM recovers many classical probability metrics with different choices of generators. We collect some of the most well-known probability metrics and their generators in Table \ref{tab: probability metrics in IPM form}. 

One of the most important properties of probability metrics is the metrization of weak topology on the space of probability measures. A sequence of probability measure $\mu_n$ converge weakly to $\mu$ if $\mu_n(f) \to \mu(f)$ for all $f\in\bb{C}_b(\bb{X})$ \citep{billingsley2013convergence}, i.e., the continuous and bounded functions on $\bb{X}$. A probability metric is said to metrize the weak topology if convergence in this probability metric is equivalent to weak convergence. 

\begin{definition}
\label{def: weak convergence properties}
Let $\bb{X}$ be a Polish space and let $V: \bb{X}\to[1,\infty)$ be any weight function. $\ipm_{\mbf F}$ is said to have:
    \begin{enumerate}
        \item Property ($W_1$): if $\lim_{n\to \infty}\ipm_{\mbf F}(\mu_n, \mu) =0$ $\Leftrightarrow$ $\mu_n$ converges to $\mu$ weakly;
        \item Property ($W_2$): if $\lim_{n\to\infty}\ipm_\mbf F(\mu_n, \mu) = 0$ for all $\{\mu_n\}_{n\in\bb N} \subset\cal P_V(\cal X)$ such that $\mu_n \to \mu \in\cal P_V(\cal X)$ weakly as $n\to\infty$;
        \item Property ($W_3$): if $\lim\inf_{n\to\infty}\ipm_{\mbf F}(\mu_n, \nu_n) \geq \ipm_{\mbf F}(\mu, \nu)$, for all $(\mu_n),\ (\nu_n)\subset\cal P_V(\cal X)$ such that $\mu_n \to \mu $ and $\nu_n \to \nu$ weakly as $n\to\infty$ with limits $\mu,\nu\in\cal P_V(\cal X)$.
    \end{enumerate}
\end{definition}
Property ($W_1$) means $\ipm_{\bb F}$ is equivalent to weak convergence or induces a weak topology on $\cal P_V(\cal X)$. Property ($W_2$)

\begin{prop}
\label{thm: conditions for weak convergence properties}
Let $W_1$, $W_2$, and $W_3$ be defined above. The following holds:
\begin{enumerate}
    \item If $\mbf F$ has a uniformly bounded span, is equicontinuous, and contains the function
    \begin{align}
        x\mapsto f_{n, A}(x) \coloneqq \max\{0, 1/n-d(x, A)\},\nn
    \end{align}
    for every closed set $A\subset\bb{X}$ and all $n\in\bb{N}$, then $\ipm_{\mbf F}$ has property $(W_1)$.
    \item An $\ipm_{\mbf F}$ has property $(W_2)$ if and only if $\mbf F$ is equicontinuous and has uniformly bounded span. 
    \item If $\mbf F \subset \bb{C}_b(\bb{X})$, then $(W_3)$ holds.
\end{enumerate}
\end{prop}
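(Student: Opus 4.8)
The plan is to treat the three parts separately, with most of the work in (1) and (2); part (3) is essentially immediate, and the ``weak convergence $\Rightarrow$ IPM convergence'' half of (1) will follow from the sufficiency half of (2), since the hypotheses in (1) subsume those in (2). For part (3): if $\mu_n\to\mu$ and $\nu_n\to\nu$ weakly, then for each fixed $f\in\mbf F\subseteq\bb{C}_b(\bb{X})$ we have $\mu_n(f)\to\mu(f)$ and $\nu_n(f)\to\nu(f)$, so $|\mu_n(f)-\nu_n(f)|\to|\mu(f)-\nu(f)|$; hence $\liminf_n\ipm_{\mbf F}(\mu_n,\nu_n)\geq|\mu(f)-\nu(f)|$ for every $f\in\mbf F$, and taking the supremum over $\mbf F$ yields $(W_3)$.

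For the sufficiency direction of (2), assume $\mbf F$ is equicontinuous with $c:=\sup_{f\in\mbf F}\osc(f)<\infty$. Since adding a constant to a witness function leaves $\ipm_{\mbf F}$ unchanged, I may replace each $f$ by $f-\inf_{\bb X}f$ and assume $0\leq f\leq c$ for all $f\in\mbf F$; each such $f$ is then bounded and, by equicontinuity, continuous, i.e. $f\in\bb{C}_b(\bb{X})$. Now fix a weakly convergent sequence $\mu_n\to\mu$ in $\cal P_V(\cal X)$ and $\epsilon>0$. The sequence together with its limit forms a relatively weakly compact set, so by Prokhorov's theorem on the Polish space $\bb X$ it is tight: there is a compact $K\subseteq\bb X$ with $\mu(\bb X\setminus K)<\epsilon$ and $\sup_n\mu_n(\bb X\setminus K)<\epsilon$. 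On $K$ the family $\mbf F|_K$ is uniformly bounded and equicontinuous, hence totally bounded in $(C(K),\unif)$ by Arzel\`a--Ascoli, so there exist $f_1,\dots,f_m\in\mbf F$ such that every $f\in\mbf F$ satisfies $\min_{j}\sup_{x\in K}|f(x)-f_j(x)|<\epsilon$. Splitting $\mu_n(f-f_j)$ into its integrals over $K$ and $\bb X\setminus K$ and using $0\leq f,f_j\leq c$ gives $|\mu_n(f)-\mu_n(f_j)|\leq\epsilon(1+2c)$, and likewise for $\mu$; the triangle inequality then yields $\sup_{f\in\mbf F}|\mu_n(f)-\mu(f)|\leq 2\epsilon(1+2c)+\max_{1\leq j\leq m}|\mu_n(f_j)-\mu(f_j)|$. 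The last term vanishes as $n\to\infty$, being a maximum over finitely many functions in $\bb{C}_b(\bb{X})$, so $\limsup_n\ipm_{\mbf F}(\mu_n,\mu)\leq 2\epsilon(1+2c)$, and letting $\epsilon\downarrow 0$ proves $(W_2)$.

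For the necessity direction of (2), assume $(W_2)$. If $\mbf F$ were not equicontinuous at some $z\in\bb X$, pick $z_k\to z$ and $f_k\in\mbf F$ with $|f_k(z_k)-f_k(z)|\geq\delta_0>0$; then $\delta_{z_k}\to\delta_z$ weakly while $\ipm_{\mbf F}(\delta_{z_k},\delta_z)\geq\delta_0$, a contradiction. If the span were unbounded, fix $x_0\in\bb X$ and work with the centered functions $\bar f:=f-f(x_0)$, which have the same oscillations and leave $\ipm_{\mbf F}$ unchanged (the measures are probability measures); passing to a subsequence and possibly flipping a sign, there are $f_k\in\mbf F$ and points $p_k$ with $H_k:=|\bar f_k(p_k)|\to\infty$. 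Then $\mu_k:=(1-1/H_k)\delta_{x_0}+(1/H_k)\delta_{p_k}\in\cal P_V(\cal X)$ converges weakly to $\delta_{x_0}$ (since $1/H_k\to0$ and continuous bounded functions are bounded), yet $\ipm_{\mbf F}(\mu_k,\delta_{x_0})\geq|\mu_k(\bar f_k)|=1$, again contradicting $(W_2)$. Finally, for the remaining half of (1) (which uses only the presence of the $f_{n,A}$ in $\mbf F$): assume $\ipm_{\mbf F}(\mu_n,\mu)\to0$. For any closed $A\subseteq\bb X$ the function $n f_{n,A}=\max\{0,1-n\,d(\cdot,A)\}$ satisfies $\bbm{1}_A\leq nf_{n,A}\leq 1$ and decreases pointwise to $\bbm{1}_A$ as $n\to\infty$; since $f_{n,A}\in\mbf F$ we have $\mu_m(f_{n,A})\to\mu(f_{n,A})$ as $m\to\infty$, so $\limsup_m\mu_m(A)\leq\limsup_m n\mu_m(f_{n,A})=n\mu(f_{n,A})$, and dominated convergence as $n\to\infty$ gives $\limsup_m\mu_m(A)\leq\mu(A)$ for every closed $A$. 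By the portmanteau theorem \citep{billingsley2013convergence} this means $\mu_m\to\mu$ weakly, completing $(W_1)$.

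The crux is the sufficiency direction of (2): one must upgrade weak convergence, which controls one bounded continuous test function at a time, to uniform control over the whole generator $\mbf F$. The two ingredients making this work on a possibly non-compact Polish space are Prokhorov tightness, which confines almost all the mass to a compact $K$, and Arzel\`a--Ascoli, which replaces the uncountable family $\mbf F|_K$ by a finite net; the uniform span bound is then exactly what keeps the residual mass outside $K$ from spoiling the estimate. The other parts are routine once the right test objects are identified --- Dirac sequences and two-point mixtures for the necessity in (2), and the ``bump'' functions $f_{n,A}$ together with the portmanteau theorem for the nontrivial half of (1).
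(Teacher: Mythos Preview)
The paper does not supply its own proof of this proposition; it is quoted in Section~\ref{sec: preliminary} as background material from \citet{muller1997integral}, so there is nothing in the paper to compare against line by line. Your argument is correct and is essentially the classical proof: Prokhorov tightness plus Arzel\`a--Ascoli for the sufficiency half of (2), Dirac sequences and two-point mixtures for the necessity half, the portmanteau theorem via the bump functions $f_{n,A}$ for the nontrivial implication in (1), and the elementary $\liminf$ argument for (3). One cosmetic remark: in the necessity of bounded span, the phrase ``possibly flipping a sign'' is unnecessary, since the absolute value in the definition of $\ipm_{\mbf F}$ already handles both signs of $\bar f_k(p_k)$; your computation $|\mu_k(\bar f_k)|=1$ goes through regardless.
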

\begin{remark}
With $\mbf F = \{f\in\bb{C}_b(\bb{X}): \|f\|_\infty \leq 1\}$, $\ipm_\mbf F$ implies weak convergence. However, the converse ($W_2$ in Definition \ref{def: weak convergence properties}) fails to hold due to Proposition \ref{thm: conditions for weak convergence properties}, as $\mbf F$ is uniformly bounded but not equicontinuous. However, one can verify that the generator of the Dudley metric has these two properties due to the additional constraint on the Lipschitz constant. Thus, the Dudley metric is known to induce weak topology on the space of probability measures \citep{dudley2018real}.
\end{remark}

\subsection{Applications of IPM in Learning and Optimization}
\label{sec: ipm applications}

IPM has received much attention from the statistics and machine learning communities in recent years. Three of the most notable use cases are the formulation of critics in generative adversarial networks (GAN), the formulation of the ambiguity set in distributionally robust optimization, and hypothesis testing. 

The first GAN \citep{goodfellow2020generative} uses a binary classifier trained with cross-entropy loss (KL divergence) as its critic. Since then, numerous critic designs have been proposed based on different probability distance functions other than KL divergence. Some belong to the family of $f$-divergence\footnote{Also known as $\phi$-divergence in some papers.}, such as the $f$-GAN \citep{nowozin2016f} which includes the original GAN as a special case. Others fall into the family of IPM, including MMD-GAN \citep[e.g.,][]{dziugaite2015training, li2017mmd, arbel2018gradient, binkowski2018demystifying}, Wasserstein-GAN \citep[e.g.,][]{arjovsky2017wasserstein, gulrajani2017improved, adler2018banach, xu2021towards}, and Sobolev GAN \citep{mroueh2017sobolev}. These successful developments exploit the flexible framework by carefully picking the IPM generators such that the resulting training objectives have desirable computational properties and regularization effects. Sobolev-GAN \citep{mroueh2017sobolev} is one such example. Based on a novel Sobolev IPM proposed in the same paper, Sobolev-GAN is empirically shown to be better suited for sequence generation compared to Wasserstein-GAN due to its conditional CDF matching structure \citep{mroueh2017sobolev}. Additionally, Sobolev IPM enforces an intrinsic smoothness constraint \citep{mroueh2017sobolev} which corresponds to the gradient penalty used in Wasserstein-GAN \citep{gulrajani2017improved}. Unlike Wasserstein distance, Sobolev IPM enjoys a closed form that allows easy estimation from data. Other IPMs with novel generators used in GAN training include Cramér distance in Cramér-GAN \citep{bellemare2017cramer} and the Fisher IPM in Fisher-GAN \citep{mroueh2017fisher}. 

Distributionally robust optimization (DRO) is another topic that popularizes the use of IPM. For a comprehensive review of DRO, we refer readers to the recent survey papers \citet{rahimian2019distributionally} and \citet{lin2022distributionally}. The connection between DRO and IPM lies in the construction of the ambiguity set\footnote{Also know as the uncertainty set.} which is a set of probability measures within a certain distance from a predefined nominal probability measure. Traditionally, the distance function used for the ambiguity set is the KL-divergence which yields a dual problem that is finite-dimensional and convex \citep{hu2013kullback}. More recent studies have considered distance functions from the IPM family, such as the MMD-DRO \citep[e.g.,][]{zhu2021kernel, staib2019distributionally} and Wasserstein-DRO \citep[e.g.,][]{mohajerin2018data, blanchet2019quantifying, gao2023distributionally, gao2022wasserstein}. DRO under general IPM is studied in \citet{husain2020distributional} where the authors unify the previous studies and reveal an interesting link between IPM-DRO and the previously mentioned GANs with different IPMs. 

The use of MMD is well-celebrated in statistical hypothesis testing, such as two-sample tests \citep[e.g.,][]{gretton2006kernel, gretton2012kernel} and online change detection \citep[e.g.,][]{li2015m, flynn2019change, arlot2019kernel}, due to its superior performance. Other applications of IPM include AI fairness \citep{kim2022learning} with a novel generator optimized for computational performance, quantum machine learning using MMD as the loss function \citep{PhysRevA.98.062324}, etc.

The computation property is one of the main reasons IPM is well-received across many communities in machine learning. For empirical estimation of different instances of IPM, please refer to \citet{sriperumbudur2012empirical} and \cite{sriperumbudur2010non}. For the dimension-free approximation of IPM, see \citet{han2021class}.

\section{Main Results}
\label{sec: main results}
In this section, we present the main result of this paper: Hoeffding's inequality for Markov chains converging in $\ipm_{\mbf F}$ for a nonempty generator $\mbf F \subseteq \bb M(\bb X)$. To avoid unnecessary notation clutter, the generator $\mbf F$ of an IPM is always treated as its maximal generator $\mbf G_{\mbf F}$ in the sequel unless stated otherwise. Thus, we use $\ipm_{\mbf F}$ in place of $\ipm_{\mbf G_{\mbf F}}$ whenever possible. 

In section \ref{sec: gen con}, we state the Hoeffding-type inequality under the generalized concentrability condition. In section \ref{sec: dobrushin}, we establish the connection between the concentrability constant and Dobrushin coefficient. 

\subsection{Hoeffding's Inequality under the Generalized Concentrability}
\label{sec: gen con}

Consider a Markov transition kernel $P: \bb X\times \cal X \to  [0, 1]$ with invariant probability measure $\mu\in\cal P(\cal X)$. The kernel $P$ is said to satisfy the\textbf{ generalized concentrability condition} under $\ipm_\mbf F$ if 
\begin{align}
\label{eqn: gen con}
     \Gamma_{\mbf F} \coloneqq \sup_{x\in\bb X} \sum_{i=1}^\infty\ipm_{\mbf F}(P^i(x, \cdot), \pi) < \infty,
\end{align}
and $\Gamma_{\mbf F}$ is called the \textbf{concentrability constant} of $P$.

In Theorem \ref{thm: main result time dep}, we state Hoeffding's inequality with time-dependent functions of the Markov chain under the generalized concentrability condition. The proof is deferred until Appendix \ref{app: proof of ipm hoeffding time dep}. The functions applied to the Markov chain are assumed to have bounded spans. Particularly, at time $i\in\bb N$, let $f_i: \bb X\to [a_i, b_i]$ be the function applied the Markov chain, where $a_i\leq b_i$ and $a_i, b_i\in\bb R$. Let $\sp(f_i) = b_i - a_i < \infty$ denote the span of $f_i$. We define $M_i = \inf\{m>0: f_i \in m\cdot \mbf F\}$ as the \textbf{minimal stretch} of $\mbf F$ to include $f_i$. A similar definition is also used in \citet[Definition 2]{husain2020distributional}. We subscript the probability and expectation notation $\bb{P}_\mu, \bb{E}_\mu$ with a probability measure $\mu$ to indicate the initial distribution of the Markov chain. 

\begin{theorem}[Time-dependent case]
\label{thm: main result time dep}
    Let $\{X_n\}_{n\in\bb{N}} \subset\bb{X}$ be a Markov chain with transition kernel $P$. Assume that the Markov kernel $P$ admits a unique invariant probability measure $\pi$ such that Equation \ref{eqn: gen con} is satisfied. For $i\in\{0, \cdots, n-1\}$ and $n\in\bb{Z}_+$, suppose $f_i$ has bounded span with minimal stretch $M_i\in(0, \infty)$ such that $f_i\in M_i \cdot \mbf F$. Let $\tilde{S}_{n} \coloneqq \sum_{i=0}^{n-1} f_i(X_i)$. 
    
    Then, $\epsilon >  n^{-1}[M_1^{\max}\Gamma_{\mbf F} + \sp(f_0)]$ and any initial distribution $\mu\in\cal P(\cal X)$, we have
    \begin{align}
    \label{eqn: Hoeffding ineq IPM time dep}
        \bb{P}_\mu\bigg[\bigg|\Tilde{S}_n - \sum_{i=0}^{n-1} \pi(f_i)\bigg| > n\epsilon\bigg]\leq 2\exp\bigg(-\frac{2\{n\epsilon - [M_1^{\max}\Gamma_{\mbf F} + \sp(f_0)]\}^2}{\sum_{i=0}^{n-1} [M_{i+1}^{\max}\Gamma_{\mbf F} + \sp(f_i)]^2}\bigg),
    \end{align}
    where $M_i^{\max} = \max\{M_i, \cdots, M_{n-1}\}$ for $i\in\{0, \cdots, n-1\}$ and $M_i^{\max} = 0$ for $i\geq n$.
\end{theorem}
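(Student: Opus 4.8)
The plan is to reduce the quantity $\tilde S_n - \sum_{i=0}^{n-1}\pi(f_i)$ to a martingale difference sequence with bounded increments and then invoke the Azuma–Hoeffding inequality. The standard device here is the Poisson equation / telescoping trick: for each $i$, write $f_i - \pi(f_i)$ as $g_i - P g_i$ where $g_i = \sum_{k\ge 0}\bigl(P^k f_i - \pi(f_i)\bigr)$, provided this series converges. The convergence is exactly what the generalized concentrability condition \eqref{eqn: gen con} buys us: since $f_i \in M_i\cdot\mbf F$, we have $|P^k f_i(x) - \pi(f_i)| = |P^k(x,\cdot)(f_i) - \pi(f_i)| \le M_i\,\ipm_{\mbf F}(P^k(x,\cdot),\pi)$, and summing over $k\ge 1$ gives $\|g_i - (f_i - \pi(f_i))\|_\infty \le M_i\Gamma_{\mbf F}$, hence $g_i$ is well-defined and bounded, with oscillation controlled by $M_i\Gamma_{\mbf F} + \sp(f_i)$ (the $\sp(f_i)$ coming from the $k=0$ term $f_i - \pi(f_i)$ itself).

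\medskip

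\textbf{Key steps, in order.} (1) Define $g_i$ via the resolvent series and verify it solves $g_i - Pg_i = f_i - \pi(f_i)$ with $\|g_i\|_V$-type bounds; in particular bound the span/oscillation of $g_i$ and of $Pg_i$. (2) Telescope: write
\begin{align*}
\tilde S_n - \sum_{i=0}^{n-1}\pi(f_i) = \sum_{i=0}^{n-1}\bigl(g_i(X_i) - Pg_i(X_i)\bigr) = \sum_{i=0}^{n-1}\bigl(g_i(X_i) - Pg_i(X_{i-1})\bigr) - Pg_0(X_{-1}) + \text{boundary terms},
\end{align*}
more carefully regrouping so that $D_i := g_i(X_i) - Pg_{i}(X_{i-1})$ (with the convention handling $i=0$ via the initial term $g_0(X_0) - \pi(f_0) = f_0(X_0)-\pi(f_0)$, which is why $\sp(f_0)$ rather than $M_0\Gamma_{\mbf F}+\sp(f_0)$ appears in the $i=0$ slot of the threshold) forms a martingale difference sequence with respect to the natural filtration $\mathcal F_i = \sigma(X_0,\dots,X_i)$, using $\mathbb E[g_i(X_i)\mid\mathcal F_{i-1}] = Pg_i(X_{i-1})$. (3) Bound each $|D_i|$ by its conditional span: $|D_{i+1}| \le \osc(g_{i+1})$ and, crucially, the conditional range given $\mathcal F_i$ has width at most $M_{i+1}\Gamma_{\mbf F} + \sp(f_{i+1})$ — one should actually track that the relevant bound uses $M_{i+1}^{\max}$ because the threshold and denominator in the statement are phrased in terms of the running maxima; this likely comes from a slightly cruder uniform bound on $\osc(Pg_{i+1})$ in terms of $\sup_{j\ge i+1} M_j$. (4) Peel off the non-centered boundary term: $n\epsilon$ must exceed the deterministic contribution $M_1^{\max}\Gamma_{\mbf F}+\sp(f_0)$ coming from $g_0(X_0)-\pi(f_0)$ plus the residual $Pg$-term, shifting the effective deviation to $n\epsilon - [M_1^{\max}\Gamma_{\mbf F}+\sp(f_0)]$. (5) Apply Azuma–Hoeffding (the martingale version of Hoeffding's lemma, as the authors mention they do in the contributions section) with the per-step bounds $c_{i} = M_{i+1}^{\max}\Gamma_{\mbf F}+\sp(f_i)$ to get the stated $2\exp\bigl(-2(n\epsilon-[\cdots])^2/\sum c_i^2\bigr)$.

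\medskip

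\textbf{The main obstacle} I expect is the careful bookkeeping of constants — specifically, (a) getting the oscillation of $Pg_i$ bounded by $M_i\Gamma_{\mbf F}$ (not something larger), since $Pg_i(x) - Pg_i(y) = \sum_{k\ge 1}\bigl(P^k f_i(x) - P^k f_i(y)\bigr)$ and each term is at most $M_i\,\ipm_{\mbf F}(P^k(x,\cdot),P^k(y,\cdot)) \le M_i[\ipm_{\mbf F}(P^k(x,\cdot),\pi)+\ipm_{\mbf F}(\pi,P^k(y,\cdot))]$, which naively gives $2M_i\Gamma_{\mbf F}$ — so one needs the sharper one-sided argument $\|Pg_i - \pi(f_i)\|_\infty \le M_i\sum_{k\ge1}\ipm_{\mbf F}(P^k(x,\cdot),\pi) \le M_i\Gamma_{\mbf F}$ and then the conditional span of $D_{i+1} = g_{i+1}(X_{i+1}) - Pg_{i+1}(X_i)$ given $\mathcal F_i$ is $\osc$ of $x\mapsto g_{i+1}(x) - \pi(f_{i+1})$ over the support of $P(X_i,\cdot)$, bounded by $\sup_x |g_{i+1}(x)-\pi(f_{i+1})| + \sup_x|Pg_{i+1}(x)-\pi(f_{i+1})|$... this is exactly where the $M^{\max}$ versus $M$ distinction and the appearance of $\sp(f_i)$ rather than $\osc(g_i)$ must be reconciled, and (b) handling the edge effect at $i=0$ cleanly, since $X_{-1}$ does not exist and the first martingale difference must be defined directly as $f_0(X_0) - \pi(f_0)$ plus the correction from $g_1$. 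Once the constants are pinned down, the probabilistic content is entirely standard Azuma–Hoeffding.
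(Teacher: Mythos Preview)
Your proposal has a genuine gap at the telescoping step. You define the per-function Poisson solution $g_i = \sum_{k\ge 0}\bigl(P^k f_i - \pi(f_i)\bigr)$ and then claim in step (2) that
\[
\tilde S_n - \sum_{i=0}^{n-1}\pi(f_i) \;=\; \sum_{i=0}^{n-1}\bigl(g_i(X_i) - Pg_i(X_i)\bigr)
\]
rearranges into the martingale-difference sum $\sum_i\bigl(g_i(X_i) - Pg_i(X_{i-1})\bigr)$ plus boundary terms. But because $g_i$ changes with $i$, nothing cancels between consecutive summands: the discrepancy between the two sums is
\[
g_0(X_0)-Pg_0(X_0)\;+\;\sum_{i=1}^{n-1}\bigl(Pg_i(X_{i-1}) - Pg_i(X_i)\bigr),
\]
an $O(n)$ sum of non-vanishing, non-martingale terms, not a boundary residual of size $O(1)$. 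Your own per-$f_i$ Poisson trick telescopes only in the time-\emph{independent} case, where all $g_i$ coincide.

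The paper avoids this by using the Doob (backward) martingale
\[
g_i(x_0^i)\;=\;\sum_{l=0}^{i} f_l(x_l)\;+\;\sum_{l=i+1}^{n-1} P^{l-i} f_l(x_i)\;=\;\bb E_\mu\bigl[\tilde S_n \,\big|\, X_0=x_0,\dots,X_i=x_i\bigr],
\]
so that $D_i = g_i(X_0^i)-g_{i-1}(X_0^{i-1})$ is automatically a martingale difference. Equivalently, the correct time-dependent Poisson object is the \emph{finite-horizon, forward-looking} function $h_i(x)=\sum_{l=i}^{n-1}\bigl(P^{l-i}f_l(x)-\pi(f_l)\bigr)$, which satisfies $h_i - Ph_{i+1} = f_i-\pi(f_i)$ with $h_n\equiv 0$; with this choice the telescope closes cleanly. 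Bounding the oscillation of $h_i$ (or of the paper's $g_i$ in its last coordinate) then necessarily involves \emph{all} of $f_i,\dots,f_{n-1}$, which is precisely why $M_{i+1}^{\max}$ --- not $M_i$ --- appears in the bound. Your guess that $M^{\max}$ arises from ``a slightly cruder uniform bound'' is therefore off: it is intrinsic to the correct decomposition, not an artifact of loose bookkeeping. Once you switch to $h_i$ (or the Doob martingale), steps (3)--(5) go through essentially as you outline, via Hoeffding's lemma for bounded martingale differences and the final recentering from $\bb E_\mu[\tilde S_n]$ to $\sum_i\pi(f_i)$.
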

\begin{proof}
    See Appendix \ref{app: proof of ipm hoeffding time dep}.
\end{proof}
Markov chains satisfying Equation \ref{eqn: gen con} are called Markov chains converging in $\ipm_{\mbf F}$ with constant $\Gamma_{\mbf F}$. $\Gamma_\mbf F$ defined here is similar to the hypothesis in \citet{sandric2021hoeffding} (also see $\Gamma_\wass$ in Equation \ref{eqn: gamma for wasserstein}). In fact, one can recover their result by picking $\mbf F = \{f\in\bb M(\bb X): \lip(f) \leq 1\}$. We will demonstrate how our framework recovers several existing Hoeffding-type inequalities in the literature in Section \ref{sec: comparison}. Before that, some remarks are in order.

\begin{remark}
\label{rmk: main insight}
    The functions $f_i$ applied to the Markov chain are linked to the generator $\mbf F$ of the IPM by a multiplicative factor $M_i$. This allows the users to adaptively choose the strength of the IPM according to the functions used on the Markov chain. 
\end{remark}

\begin{remark}
    The numerator of the exponent in Equation \ref{eqn: Hoeffding ineq IPM time dep} is different from the i.i.d. case in Equation \ref{eqn: og hoeffding inequality}. This is due to $\tilde S_n$ is centered at $\sum_{i=0}^{n-1} \pi(f_i)$ instead of $\bb E_{\mu}\sum_{i=0}^{n-1} f_i(X_i)$ in the tail probability.
\end{remark}

\begin{remark}
    The supremum in Equation \ref{eqn: gen con} may be considered restrictive by some. However, our proof technique requires that in the most general case of $\mbf F$. We discuss an alternative condition in Section \ref{sec: dobrushin} with additional constraints of $\mbf F$.
\end{remark}

    Our tail bound in Equation \ref{eqn: Hoeffding ineq IPM time dep} does not have obscure constants. Some results such as Theorem \ref{thm: v hoeffding ineq} and the ones presented in \citet[Chapter 23.3]{douc2018markov} contains some constants without explicit forms, which are less convenient for computation purposes.
    The proof technique we employed combined with the use of function span makes our tail bound tighter than certain existing results as explained in Remark \ref{rmk: gamma tv}.

    

As a byproduct of the time-dependent case, we recover the time-independent case in Theorem \ref{thm: main result time indep}. The proof of Theorem \ref{thm: main result time dep} can be reused with $f_i =f$ for $i\in\{0,\cdots,n-1\}$. 

\begin{theorem}[Time-independent case]
\label{thm: main result time indep}
    Let $\{X_n\}_{n\in\bb{N}} \subset\bb{X}$ be a Markov chain with transition kernel $P$. Assume that the Markov kernel $P$ admits a unique invariant probability measure $\pi$ and satisfies Equation \ref{eqn: gen con}. Suppose $f$ has bounded span with minimal stretch $M\in (0, \infty)$ such that $f \in M\cdot \mbf F$. For $n\in\bb{Z}_+$, let $S_{n} \coloneqq \sum_{i=1}^{n} f(X_i)$. 
    
    Then, for $\epsilon > n^{-1}[M\Gamma_{\mbf F} + \sp(f)]$ and any initial distribution $\mu\in\cal P(\cal X)$, we have,
    \begin{align}
    \label{eqn: ipm tail bound time indep}
        \bb{P}_\mu[|S_n - n\pi(f)| > n\epsilon]\leq 2\exp \bigg\{-\frac{2\{n\epsilon - [M\Gamma_{\mbf F} + \sp(f)]\}^2}{n[M\Gamma_{\mbf F} + \sp(f)]^2}\bigg\}.
    \end{align}
\end{theorem}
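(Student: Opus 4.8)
The plan is to reduce everything to a martingale–difference decomposition and then apply Azuma–Hoeffding. Since Theorem~\ref{thm: main result time indep} follows from Theorem~\ref{thm: main result time dep} by taking $f_i = f$ for all $i$ (so $M_i = M$, $\sp(f_i) = \sp(f)$, hence $M_i^{\max} = M$ for $i < n$), it suffices to prove the time-dependent statement; I would carry out the argument there and then specialize. So below I describe the proof of Theorem~\ref{thm: main result time dep}.

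First I would center each summand: write $\tilde S_n - \sum_{i=0}^{n-1}\pi(f_i) = \sum_{i=0}^{n-1}\big(f_i(X_i) - \pi(f_i)\big)$. The key device is a Poisson-equation / telescoping-sum style decomposition. For each $i$, split $f_i(X_i) - \pi(f_i)$ using the conditional expectations $g_i^{(k)}(x) \coloneqq P^k f_i(x) = \ex{f_i(X_{i+k}) \mid X_i = x}$; note $g_i^{(0)} = f_i$ and, because $\pi$ is invariant and $f_i \in M_i\cdot\mbf F$ so that $|P^k f_i(x) - \pi(f_i)| = |\delta_x P^k(f_i) - \pi(f_i)| \le M_i\,\ipm_{\mbf F}(P^k(x,\cdot),\pi)$, the "tail" correction terms are controlled by $\Gamma_{\mbf F}$. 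Concretely, I would define, for $j = 0,\dots,n-1$, the quantity $D_j \coloneqq \sum_{i=0}^{j}\big(P^{\,j-i} f_i(X_j) - P^{\,j-i+1} f_i(X_{j-1})\big) + (\text{boundary terms})$ so that the $D_j$'s form a martingale-difference sequence with respect to the natural filtration $\mathcal F_j = \sigma(X_0,\dots,X_j)$, and $\sum_{i=0}^{n-1}(f_i(X_i)-\pi(f_i))$ equals $\sum_{j=0}^{n-1} D_j$ up to an explicit deterministic remainder bounded by $M_1^{\max}\Gamma_{\mbf F} + \sp(f_0)$ — this remainder is exactly what produces the shift $n\epsilon \mapsto n\epsilon - [M_1^{\max}\Gamma_{\mbf F}+\sp(f_0)]$ in the numerator. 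Getting the bookkeeping of this decomposition exactly right — identifying which index ranges contribute to $D_j$, verifying the martingale property, and isolating the leftover deterministic term — is the part that needs care.

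Next I would bound the oscillation (span) of each martingale difference $D_j$. Each $D_j$ is a sum over $i \le j$ of one-step differences of the form $P^{j-i}f_i(X_j) - \pi(\cdot)$-type quantities; the $i = j$ term contributes $f_j(X_j) - Pf_j(X_{j-1})$, whose span is at most $\sp(f_j)$ (since $Pf_j$ has span at most $\sp(f_j)$, and one can do slightly better using that $X_j$ is drawn from a single kernel — but $\sp(f_j)$ suffices), while each $i < j$ term contributes something whose oscillation in $X_j$ is at most $M_i \cdot \osc$ of $\ipm_{\mbf F}(P^{j-i}(x,\cdot),\pi)$ over $x$, bounded by $2M_i\sup_x\ipm_{\mbf F}(P^{j-i}(x,\cdot),\pi)$, and summing over $i$ and using $M_i \le M_{i}^{\max}$ together with $\sum_{k\ge1}\sup_x\ipm_{\mbf F}(P^k(x,\cdot),\pi) \le \Gamma_{\mbf F}$ gives a bound of the form $\osc(D_j) \le M_{j+1}^{\max}\Gamma_{\mbf F} + \sp(f_j)$. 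This is precisely the term squared in the denominator of Equation~\ref{eqn: Hoeffding ineq IPM time dep}. Being sharp here — using the Hoeffding lemma for bounded martingale differences (oscillation, not just absolute bound) rather than a cruder $|D_j|$ estimate — is what the authors advertise as giving a tighter bound than prior work, so I would be careful to keep the factor of $2$'s tight.

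Finally, I would invoke the Azuma–Hoeffding inequality: since $\{D_j\}$ is a martingale-difference sequence with $\osc(D_j) \le c_j \coloneqq M_{j+1}^{\max}\Gamma_{\mbf F} + \sp(f_j)$, for any $t > 0$ we have $\bb P_\mu\big(|\sum_{j=0}^{n-1} D_j| > t\big) \le 2\exp\big(-2t^2/\sum_{j=0}^{n-1} c_j^2\big)$, valid for every initial distribution $\mu$ (the bound on $\osc(D_j)$ is deterministic and holds pointwise, so the initial law is irrelevant). Taking $t = n\epsilon - [M_1^{\max}\Gamma_{\mbf F} + \sp(f_0)]$, which is positive exactly under the stated hypothesis $\epsilon > n^{-1}[M_1^{\max}\Gamma_{\mbf F}+\sp(f_0)]$, and combining with the deterministic remainder bound from the decomposition step yields Equation~\ref{eqn: Hoeffding ineq IPM time dep}. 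The main obstacle, as noted, is the first step: setting up the telescoping martingale decomposition so that (a) the $D_j$ genuinely satisfy the martingale-difference property, (b) the deterministic leftover is exactly $M_1^{\max}\Gamma_{\mbf F}+\sp(f_0)$, and (c) the per-term oscillation bound comes out as $M_{j+1}^{\max}\Gamma_{\mbf F}+\sp(f_j)$ with the constants claimed; everything after that is a direct application of Azuma–Hoeffding.
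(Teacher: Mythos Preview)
Your overall strategy is exactly the paper's: form the Doob martingale $g_j(X_0^j)=\bb E_\mu[\tilde S_n\mid\cal F_j]$, bound the span of each increment via the IPM and $\Gamma_{\mbf F}$, then apply Hoeffding's lemma for bounded martingale differences followed by Chernoff/Markov. The paper separates the re-centering step: it first gets Equation~\ref{eqn: tail bound of alternative center} for $\tilde S_n-\bb E_\mu[\tilde S_n]$, and only afterwards shifts to $\sum_i\pi(f_i)$ using $|\bb E_\mu f_i(X_i)-\pi(f_i)|\le M_i\,\ipm_{\mbf F}(\mu P^i,\pi)$; the ``deterministic remainder'' you describe is handled this way rather than as a leftover of the telescoping.

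There is, however, a concrete index error in your $D_j$. Since $g_j(X_0^j)=\sum_{l\le j}f_l(X_l)+\sum_{l>j}P^{l-j}f_l(X_j)$, the increment is
\[
D_j=\sum_{l\ge j}\big(P^{l-j}f_l(X_j)-P^{l-j+1}f_l(X_{j-1})\big),
\]
a sum over \emph{future} indices $l\ge j$, not past ones $i\le j$ as you wrote. (For $i<j$ the term $f_i(X_i)$ is already $\cal F_{j-1}$-measurable and contributes nothing to the $j$-th increment.) With this fix your oscillation analysis becomes well-posed and matches the paper's Lemma~\ref{lem: span of g_i}: the $l=j$ term gives $\sp(f_j)$, and the $l>j$ terms give $\sum_{l>j}M_l\,\ipm_{\mbf F}(\delta_{X_j}P^{l-j},\delta_{x^*}P^{l-j})$, bounded via the triangle inequality through $\pi$ by $2M_{j+1}^{\max}\Gamma_{\mbf F}$ (note the factor of $2$ in the paper's $A_j$, which your sketch drops). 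This also explains the otherwise mysterious appearance of $M_{j+1}^{\max}$ rather than $M_j^{\max}$ in the denominator.
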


\subsection{Relation to Dobrushin Coefficient}
\label{sec: dobrushin}

Dobrushin contraction coefficient was first proposed by R. Dobrushin \citep{dobrushin1956central} for discrete state space Markov chains. Later studies by \citet{del2003contraction} extended the contraction coefficient to general state space Markov chains. The traditional Dobrushin coefficient describes the contraction rate of the Markov kernel on the space of probability measures with respect to the TV metric. The formal definition can be found in Chapter 18 of \citet{douc2018markov}. \cite{rudolf2018perturbation} studied the properties of the coefficient where $\ipm_\mbf F$ is the Wasserstein metric and called it the \textit{generalized Dobrushin coefficient} of the Markov kernel.

In this section, we establish the connection between the Dobrushin coefficient and the concentrability constant. It is beneficial to the understanding of the generalized concentrability condition as it provides a way to estimate the concentrability constant under scenarios. As demonstrated later in section \ref{sec: average of rsa}, the concentrability constant of a Markov chain generated by a random operator can be estimated from the contraction coefficient of the operator. 

We start by defining the \textbf{IPM Dobrushin coefficient} as the analog of the usual Dobrushin coefficient. 

\begin{definition}[IPM Dobrushin Coefficient]
\label{def: ipm dobrushin coefficient}
    The IPM Dobrushin contraction coefficient of $P$ with generator $\mbf F$ is defined as 
    \begin{align}
        \Delta_\mbf F (P)= \sup\{\ipm_\mbf F(\xi P, \xi' P) : \xi, \xi'\in\cal P(\cal X), 0<\ipm_\mbf F(\xi, \xi')\leq 1\}.
    \end{align}
\end{definition}

\begin{remark}
    One can recover the usual definition by simply setting $\mbf F = \mbf F_\tv \coloneqq \{f\in\bb M(\bb X): \|f\|_\infty\leq 1\}$.
\end{remark}

The following two definitions are crucial to the connection between the IPM Dobrushin coefficient and the concentrability constant. 
\begin{definition}[Stability of generator]
\label{def: stability of gen}
    For $f\in\mbf F$, suppose there exists $\rho = \inf \{ m > 0: Pf \in m \cdot \mbf F\}$\footnote{ It suffices to consider the positive stretch of $\mbf F$, since $\mbf F$ is assumed to the maximal generator which implies it is a balanced convex hull.}. $\mbf F$ is said to be sable by $P$ if $\rho \leq 1$.
\end{definition}
\begin{definition}[Nonexpansiveness in IPM]
    $P$ is said to be nonexpansive in $\ipm_\mbf F$ if for any $\xi, \xi' \in \cal P(\cal X)$
    \begin{align*}
        \ipm_\mbf F (\xi P, \xi' P) \leq \ipm_\mbf F (\xi, \xi')
    \end{align*}
\end{definition}
For some choices of $\mbf F$, Definition \ref{def: stability of gen} can be satisfied without additional conditions.
\begin{prop}
\label{thm: good choices of gen}
    Suppose $P$ admits invariant distribution $\pi$. The following choices of $\mbf F$ are stable by $P$,
    \begin{enumerate}
        \item $\bb B_\unif \coloneqq \{f\in\bb M(\bb X): \|f\|_\infty \leq 1\}$\footnote{$\bb B_\unif$ and $\bb B_{L^\infty(\pi)}$ are different, especially when $\pi$ is not supported on the whole space.};
        \item $\bb B_\lppi \coloneqq \{f\in\bb M(\bb X): \|f\|_\lppi\leq 1\}$ for $p\in[1, \infty]$.
    \end{enumerate}
\end{prop}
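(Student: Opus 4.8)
The plan is to verify Definition \ref{def: stability of gen} directly in each case by establishing the norm contraction $\|Pf\|\le\|f\|$ in the relevant norm: once this holds, $f\in\mbf F$ (a closed unit ball) forces $Pf\in\mbf F = 1\cdot\mbf F$, so the stretch $\rho = \inf\{m>0 : Pf\in m\cdot\mbf F\}$ is at most $1$, which is exactly stability. Two elementary facts do all the work: each $P(x,\cdot)$ is a probability measure, and $\pi$ is invariant, i.e. $\pi P = \pi$.

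First I would dispose of $\bb B_\unif$. If $\|f\|_\infty\le 1$, then for every $x\in\bb X$ we have $|Pf(x)| \le \int_{\bb X}|f(y)|\,P(x,dy) \le \|f\|_\infty\int_{\bb X}P(x,dy) = \|f\|_\infty \le 1$, using that $P(x,\cdot)$ has total mass one. Hence $\|Pf\|_\infty\le 1$, so $Pf\in\bb B_\unif$ and $\rho\le 1$.

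For $\bb B_\lppi$ with $p\in[1,\infty)$, the key step is Jensen's inequality applied to the convex function $t\mapsto|t|^p$ against the probability measure $P(x,\cdot)$, yielding the pointwise bound $|Pf(x)|^p\le P(|f|^p)(x)$; integrating against $\pi$ and invoking invariance gives $\int|Pf|^p\,d\pi \le \int P(|f|^p)\,d\pi = \int|f|^p\,d(\pi P) = \int|f|^p\,d\pi \le 1$, so $Pf\in\bb B_\lppi$ and $\rho\le 1$. The case $p=\infty$ needs a small twist, since one cannot argue pointwise on the $\pi$-null set where $|f|$ exceeds $c:=\|f\|_{L^\infty(\pi)}$: invariance gives $\int P(x,\{|f|>c\})\,\pi(dx) = (\pi P)(\{|f|>c\}) = \pi(\{|f|>c\}) = 0$, so that null set is $P(x,\cdot)$-negligible for $\pi$-almost every $x$, and for such $x$ the integral defining $Pf(x)$ is bounded by $c\le 1$.

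I expect no genuine obstacle. The only points deserving a moment's attention are (a) the $p=\infty$ case above, where the naive pointwise bound fails and one must route through invariance, and (b) the implicit well-definedness of $Pf(x)$ for $f\in\bb B_\lppi$ when $P(x,\cdot)$ is not dominated by $\pi$ — but the same Jensen/invariance computation shows $P(|f|^p)(x)<\infty$, hence $P|f|(x)<\infty$, for $\pi$-almost every $x$, which suffices since membership in $\bb B_\lppi$ is only a $\pi$-a.e. requirement.
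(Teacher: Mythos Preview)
Your proof is correct and follows essentially the same approach as the paper: part (1) is identical, and for part (2) the paper simply cites \citet[Proposition 1.6.3]{douc2018markov} for the fact that $P$ has operator norm $1$ on $L^p(\pi)$, whereas you supply the standard Jensen-plus-invariance argument behind that proposition directly. Your treatment is more self-contained, and your explicit handling of the $p=\infty$ case and the $\pi$-a.e.\ well-definedness of $Pf$ are careful points the paper's citation glosses over.
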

\begin{proof}
    1) It is obvious that $Pf = \int_{\bb X} f(y)P(x, dy) \leq \|f\|_\infty$. Thus, $f\in \bb B_\unif \Rightarrow Pf \in B_\unif$. 2) By \citet[Proposition 1.6.3]{douc2018markov}, the operator norm of $P$ on $\lppi$ is 1, and the proof is completed. 
\end{proof}
It is easy to determine that $P$ is nonexpansive when $\mbf F$ is stable by $P$ which is formally stated as follows.
\begin{prop}
    If $\mbf F$ is stable by $P$, then $P$ is nonexpansive in $\ipm_\mbf F$.
\end{prop}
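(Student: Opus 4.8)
The plan is to unwind the definition of stability and show directly that nonexpansiveness of the kernel on measures follows from the fact that $P$ maps $\mbf F$ into itself. The key observation is the ``adjoint'' relationship $\mu(Pf) = (\mu P)(f)$, which lets us translate a statement about how $P$ acts on witness functions into a statement about how $P$ acts on probability measures.

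First I would recall that, since $\mbf F$ is treated as its maximal generator $\mbf G_{\mbf F}$, it is absolutely convex (balanced and convex) by Proposition \ref{thm: characterization of maximum generator}; hence the minimal stretch $\rho = \inf\{m>0 : Pf \in m\cdot\mbf F\}$ in Definition \ref{def: stability of gen} being $\le 1$ is equivalent to saying $Pf \in \mbf F$ for every $f\in\mbf F$ (using closure under the uniform norm to handle the infimum, or simply noting $\bigcap_{m>1} m\cdot\mbf F = \mbf F$ by absolute convexity and the closure property). So stability means precisely $P(\mbf F)\subseteq\mbf F$.

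Next, fix $\xi,\xi'\in\cal P(\cal X)$. For any $f\in\mbf F$ I would write
\begin{align*}
    |(\xi P)(f) - (\xi' P)(f)| = |\xi(Pf) - \xi'(Pf)|.
\end{align*}
Since $Pf\in\mbf F$ by stability, the right-hand side is bounded above by $\sup_{h\in\mbf F}|\xi(h) - \xi'(h)| = \ipm_\mbf F(\xi,\xi')$. Taking the supremum over $f\in\mbf F$ on the left gives $\ipm_\mbf F(\xi P, \xi' P) \le \ipm_\mbf F(\xi,\xi')$, which is exactly nonexpansiveness. One minor point to check along the way is that $\xi P$ and $\xi' P$ lie in $\cal P_V(\cal X)$ so that the IPM is well-defined on them; this is where one may need $P$ to map $\cal P_V$ into itself, which in the intended applications (e.g. $\mbf F = \bb B_\lppi$) holds because $P$ admits the invariant distribution $\pi$ and preserves the relevant moment condition.

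I do not expect a genuine obstacle here; the statement is essentially a one-line consequence of the adjoint identity once the definition of stability is correctly interpreted. The only place requiring a little care is the first step — justifying that ``$\rho\le 1$'' upgrades to the clean inclusion $Pf\in\mbf F$ rather than just $Pf\in(1+\varepsilon)\mbf F$ for all $\varepsilon>0$ — and this is handled by the absolute convexity and uniform-norm closure of the maximal generator from Proposition \ref{thm: characterization of maximum generator}. So the proof is short: interpret stability as $P(\mbf F)\subseteq\mbf F$, then use $\xi(Pf) = (\xi P)(f)$ and take suprema.
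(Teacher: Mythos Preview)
Your proposal is correct and follows essentially the same approach as the paper: both use the adjoint identity $(\xi P)(f)=\xi(Pf)$, then invoke stability to conclude $Pf\in\mbf F$, and finally take the supremum over $f\in\mbf F$. Your additional care in justifying why $\rho\le 1$ actually yields $Pf\in\mbf F$ (via the absolute convexity and closure properties of the maximal generator) is a welcome refinement that the paper's proof leaves implicit.
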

\begin{proof}
    From the definition of $\ipm_\mbf F$ and nonexpansiveness, we have
    \begin{align*}
        \ipm_\mbf F (\xi P, \xi' P) &= \sup_{f\in\mbf F} \bigg|\int_{\bb X} f d(\xi P) - \int_{\bb X} f d(\xi' P)\bigg| = \sup_{f\in\mbf F} \bigg|\int_{\bb X} Pf d\xi - \int_{\bb X} Pf d\xi'\bigg|\\
        &\leq \sup_{f\in\mbf F} \bigg|\int_{\bb X} f d\xi - \int_{\bb X} f d\xi'\bigg| = \ipm_\mbf F (\xi, \xi'),
    \end{align*}
    where the last inequality follows from the fact that $\mbf F$ is stable by $P$.
\end{proof}
The above development allows us to upper-bound the concentrability constant with IPM Dobrushin coefficient. As we will see in Lemma \ref{thm: theta chain conv in wass} in the example of SGD (Section \ref{sec: average of rsa}), the Markov chain generated by SGD is contractive in Wasserstein-1 metric and the concentrability constant can be easily bounded by the corresponding IPM Dobrushin coefficient. 
\begin{prop}
\label{eqn: ipm dobrushin bounds concentrability constant}
    Suppose $P$ is nonexpansive in $\ipm_\mbf F$, we have for $\mu\in\cal P(\cal X)$,
    \begin{align*}
        \sum_{i=0}^\infty \ipm_\mbf F(\mu P^i, \pi) \leq \ipm_\mbf F(\mu, \pi) \sum_{i=0}^\infty \Delta_\mbf F(P^i).
    \end{align*}
\end{prop}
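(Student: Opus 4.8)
The plan is to prove the inequality term by term in the series, bounding each $\ipm_{\mbf F}(\mu P^i, \pi)$ by $\ipm_{\mbf F}(\mu, \pi)\cdot\Delta_{\mbf F}(P^i)$ and then summing. The key observation is that $\pi$ is invariant, so $\pi = \pi P^i$ for every $i$, which lets us rewrite $\ipm_{\mbf F}(\mu P^i, \pi) = \ipm_{\mbf F}(\mu P^i, \pi P^i)$. The quantity $\ipm_{\mbf F}(\mu P^i, \pi P^i)$ is exactly the kind of object that $\Delta_{\mbf F}(P^i)$ is designed to control: by Definition \ref{def: ipm dobrushin coefficient} applied to the kernel $P^i$ (note $P^i$ is itself a Markov transition kernel with the same invariant distribution, so the definition applies verbatim), we have $\ipm_{\mbf F}(\xi P^i, \xi' P^i) \leq \Delta_{\mbf F}(P^i)$ whenever $0 < \ipm_{\mbf F}(\xi, \xi') \leq 1$.

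First I would handle the normalization. If $\ipm_{\mbf F}(\mu, \pi) = 0$, then since $P$ is nonexpansive in $\ipm_{\mbf F}$ we get $\ipm_{\mbf F}(\mu P^i, \pi) = \ipm_{\mbf F}(\mu P^i, \pi P^i) \leq \ipm_{\mbf F}(\mu, \pi) = 0$ for all $i$ (applying nonexpansiveness $i$ times), so both sides of the claimed inequality are zero and we are done. Otherwise, set $r = \ipm_{\mbf F}(\mu, \pi) > 0$; if $r \le 1$, then directly $\ipm_{\mbf F}(\mu P^i, \pi P^i) \le \Delta_{\mbf F}(P^i)$, and since we want the bound $r\,\Delta_{\mbf F}(P^i)$, I would instead exploit positive homogeneity: by the definition of $\ipm_{\mbf F}$ as a supremum over a balanced convex generator (recall $\mbf F$ is treated as its maximal generator $\mbf G_{\mbf F}$, hence absolutely convex by Proposition \ref{thm: characterization of maximum generator}), one checks that $\ipm_{\mbf F}$ scales appropriately, or more simply one argues as follows: whenever $0 < \ipm_{\mbf F}(\xi,\xi')$, writing $s = \ipm_{\mbf F}(\xi,\xi')$, pick any $f \in \mbf F$ witnessing $\ipm_{\mbf F}(\xi P^i, \xi' P^i)$ up to $\varepsilon$; then $|\xi P^i(f) - \xi' P^i(f)| = |\xi(P^i f) - \xi'(P^i f)| \le s \cdot \sup\{|\xi(g)-\xi'(g)|/s : \ldots\}$, and more cleanly, one uses the general scaling bound $\ipm_{\mbf F}(\xi P^i, \xi' P^i) \le \Delta_{\mbf F}(P^i)\,\ipm_{\mbf F}(\xi,\xi')$ valid for all $\xi \ne \xi'$, which follows from Definition \ref{def: ipm dobrushin coefficient} by a standard rescaling argument (if $\ipm_{\mbf F}(\xi,\xi') > 1$, split the IPM or use the fact that $\Delta_{\mbf F}$ is a seminorm-like contraction constant). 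Applying this with $\xi = \mu$, $\xi' = \pi$ gives $\ipm_{\mbf F}(\mu P^i, \pi P^i) \le \Delta_{\mbf F}(P^i)\,\ipm_{\mbf F}(\mu,\pi)$ for every $i \ge 0$.

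Then I would sum over $i$ from $0$ to $\infty$:
\begin{align*}
\sum_{i=0}^\infty \ipm_{\mbf F}(\mu P^i, \pi) = \sum_{i=0}^\infty \ipm_{\mbf F}(\mu P^i, \pi P^i) \le \sum_{i=0}^\infty \Delta_{\mbf F}(P^i)\,\ipm_{\mbf F}(\mu,\pi) = \ipm_{\mbf F}(\mu,\pi)\sum_{i=0}^\infty \Delta_{\mbf F}(P^i),
\end{align*}
using $\Delta_{\mbf F}(P^0) = \Delta_{\mbf F}(\ident) = 1$ for the $i=0$ term (or simply noting $\ipm_{\mbf F}(\mu,\pi) \le \ipm_{\mbf F}(\mu,\pi)\cdot 1$). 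The interchange of sum and inequality is legitimate since all terms are nonnegative (monotone convergence for series).

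The main obstacle I anticipate is making the scaling step rigorous: Definition \ref{def: ipm dobrushin coefficient} only directly bounds $\ipm_{\mbf F}(\xi P, \xi' P)$ when the "input distance" $\ipm_{\mbf F}(\xi,\xi')$ lies in $(0,1]$, so to get the clean homogeneous bound $\ipm_{\mbf F}(\xi P^i,\xi' P^i) \le \Delta_{\mbf F}(P^i)\ipm_{\mbf F}(\xi,\xi')$ for arbitrary input distance I need a rescaling argument that uses the absolute convexity of the maximal generator $\mbf G_{\mbf F}$ — specifically, that for $a \ge 1$, the generator being balanced convex implies a controllable relationship between $\ipm_{\mbf F}$ evaluated at measures and scaled versions of the witness functions. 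In the case $\ipm_{\mbf F}(\mu,\pi) \le 1$ (which covers, e.g., the TV and bounded-Lipschitz cases where the IPM is bounded by a constant) the argument is immediate and I would present that cleanly first, remarking that the general case follows by the same rescaling used implicitly in the definition of $\Delta_{\mbf F}$.
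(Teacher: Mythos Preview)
Your approach is exactly what the paper does: bound each term via $\ipm_{\mbf F}(\mu P^i,\pi)=\ipm_{\mbf F}(\mu P^i,\pi P^i)\le \Delta_{\mbf F}(P^i)\,\ipm_{\mbf F}(\mu,\pi)$ and sum. The paper's entire proof is the single sentence ``Repeated application of the nonexpansiveness of $P$ yields the desired result,'' so your write-up is in fact more careful than the original---in particular, the homogeneity/scaling issue you flag (that Definition~\ref{def: ipm dobrushin coefficient} only literally controls pairs with $\ipm_{\mbf F}(\xi,\xi')\le 1$) is a genuine gap that the paper does not address; your instinct to handle it via the balanced-convex structure of the maximal generator, or to note that it is automatic whenever $\ipm_{\mbf F}$ is bounded (TV, BL, etc.), is the right way to patch it.
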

\begin{proof}
    Repeated application of the nonexpansiveness of $P$ yields the desired result.
\end{proof}

As mentioned in the previous section, it is tempting to remove the supremum in Equation \ref{eqn: gen con}. However, we cannot accomplish that in Theorem \ref{thm: main result time dep} for two reasons: (i) it is at the expanse of restricting the initial condition; (ii) our proof in Lemma \ref{lem: span of g_i} require the supremum to be bounded. Nevertheless, with the nonexpansiveness of $P$, we can at least circumvent (ii) in the following theorem. 

\begin{theorem}
\label{thm: main result dobrushin}
    Let $\{X_n\}_{n\in\bb{N}}$ be a Markov chain with transition kernel $P$. Assume that the Markov kernel $P$ admits a unique invariant probability measure $\pi$ such that
    \begin{align}
    \label{eqn: dobrushin con}
        \tilde\Gamma_\mbf F \coloneqq \sum_{i=0}^\infty\Delta_\mbf F(P^i) < \infty,
    \end{align}
    where $\mbf F$ is one of the choices in Proposition \ref{thm: good choices of gen}. For $i\in\{0,\cdots,n-1\}$ and $n\in\bb{Z}_+$, suppose $f_i$ has bounded span with minimal stretch $M_i\in(0, \infty)$ such that $f_i \in M_i\cdot\mbf F$. For $n\in\bb Z_+$, let $\tilde{S}_{n} \coloneqq \sum_{i=0}^{n-1} f_i(X_i)$. 
    
    Then, for $\epsilon > (2n)^{-1}\ipm_\mbf F(\mu, \pi)\tilde A_0$ and any initial distribution $\mu\in\cal P(\cal X)$ such that $\ipm_\mbf F (\mu, \pi) < \infty$, we have
    \begin{align}
    \label{eqn: Hoeffding ineq IPM time dep bdd}
        \bb{P}_\mu\bigg[\bigg|\Tilde{S}_n - \sum_{i=0}^{n-1} \pi(f_i)\bigg| > n\epsilon\bigg]\leq 2\exp\bigg(-\frac{2[n\epsilon - \ipm_\mbf F(\mu, \pi)\tilde A_0/2]^2}{\sum_{i=0}^{n-1} \tilde A_i^2}\bigg),
    \end{align}
    where
    \begin{align}
        \tilde A_i = 2\sum_{l=i}^{n-1} M_l\Delta_\mbf F(P^{l-i}) + \sp(f_i), \quad i\in\{0, \cdots, n-1\}.
    \end{align}
\end{theorem}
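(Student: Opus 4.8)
The plan is to mimic the proof strategy of Theorem \ref{thm: main result time dep} but replace the single step where the supremum over the initial condition in \eqref{eqn: gen con} is invoked with the nonexpansiveness bound from Proposition \ref{eqn: ipm dobrushin bounds concentrability constant}. First I would set up the Doob martingale decomposition: writing $\tilde S_n - \sum_{i=0}^{n-1}\pi(f_i) = \sum_{k=0}^{n-1} D_k$, where $D_k = \bb E_\mu[\tilde S_n - \sum_i \pi(f_i) \mid X_0,\dots,X_k] - \bb E_\mu[\tilde S_n - \sum_i \pi(f_i)\mid X_0,\dots,X_{k-1}]$ is a martingale difference sequence with respect to the natural filtration. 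The key quantity to control is the conditional span (oscillation) of each $D_k$ given the past, since once we have $D_k \in [\alpha_k(X_{0:k-1}), \alpha_k(X_{0:k-1}) + L_k]$ for deterministic $L_k$, the Azuma--Hoeffding inequality for martingales with bounded differences yields a sub-Gaussian tail with variance proxy $\tfrac14\sum_k L_k^2$. This is exactly the machinery already developed for Theorem \ref{thm: main result time dep}; the only thing that changes is the bookkeeping for $L_k$.

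Second, I would bound the span of each martingale difference. Conditioning on $X_{0:k}$ versus $X_{0:k-1}$, the difference $D_k$ only involves the contributions of $f_k(X_k), f_{k+1}(X_{k+1}),\dots,f_{n-1}(X_{n-1})$ through the Markov property, i.e. $D_k$ is (up to the known pre-$k$ part) $f_k(X_k) + \sum_{l=k+1}^{n-1} P^{l-k}f_l(X_k)$ minus its conditional expectation given $X_{k-1}$. Its span as a function of the randomness in $X_k$ is at most $\sp(f_k) + \sum_{l=k+1}^{n-1}\osc(P^{l-k}f_l)$. Now the crucial point: by the stability hypothesis on $\mbf F$ (Definition \ref{def: stability of gen}, guaranteed for the choices in Proposition \ref{thm: good choices of gen} via the fact that the operator norm of $P$ is $1$), $f_l \in M_l\cdot\mbf F$ implies $P^{l-k}f_l \in M_l\cdot\mbf F$, and the oscillation of a function in $M_l\cdot\mbf F$ seen through the IPM Dobrushin contraction is controlled by $2M_l\,\Delta_\mbf F(P^{l-k})$ — because applying $P$ contracts the spread of the centered function in the $\ipm_\mbf F$-sense, and oscillation is twice the sup of the centered function. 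This gives $L_k \le 2\sum_{l=k}^{n-1} M_l\Delta_\mbf F(P^{l-k}) + \sp(f_k) = \tilde A_k$ (with $\Delta_\mbf F(P^0)$ interpreted as $1$, which subsumes the standalone $f_k$ term), matching the claimed $\tilde A_i$.

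Third, I would handle the centering correction. Unlike the i.i.d. case, $\tilde S_n$ is centered at $\sum_i\pi(f_i)$ rather than at $\bb E_\mu \tilde S_n$, so the martingale $\sum_k D_k$ has a nonzero mean equal to $\bb E_\mu\tilde S_n - \sum_i\pi(f_i)$; this is the standard bias-versus-invariant-measure gap. I would bound $|\bb E_\mu\sum_{i=0}^{n-1}f_i(X_i) - \sum_{i=0}^{n-1}\pi(f_i)| = |\sum_i (\mu P^i(f_i) - \pi(f_i))| \le \sum_i M_i\,\ipm_\mbf F(\mu P^i, \pi)$, and then apply Proposition \ref{eqn: ipm dobrushin bounds concentrability constant} after swapping the order of summation: $\sum_{i=0}^{n-1} M_i\sum$-type rearrangement yields $\le \ipm_\mbf F(\mu,\pi)\sum_{i=0}^{n-1}M_i\sum_{j\ge 0}\Delta_\mbf F(P^j)$... more carefully, $\sum_i M_i \ipm_\mbf F(\mu P^i,\pi) \le \ipm_\mbf F(\mu,\pi)\sum_i M_i \Delta_\mbf F(P^i)$, and one checks this is at most $\tfrac12\ipm_\mbf F(\mu,\pi)\tilde A_0$ since $\tilde A_0 = 2\sum_{l=0}^{n-1}M_l\Delta_\mbf F(P^l) + \sp(f_0) \ge 2\sum_{l=0}^{n-1}M_l\Delta_\mbf F(P^l)$. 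Shifting $n\epsilon$ by this bias term inside the Azuma--Hoeffding bound then produces exactly \eqref{eqn: Hoeffding ineq IPM time dep bdd}, with the threshold condition $\epsilon > (2n)^{-1}\ipm_\mbf F(\mu,\pi)\tilde A_0$ ensuring the shifted numerator is positive.

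\textbf{Main obstacle.} The delicate step is the second one — rigorously converting ``$P^{l-k}f_l$ lies in $M_l\cdot\mbf F$ and $\ipm_\mbf F$ contracts under $\Delta_\mbf F$'' into the pointwise oscillation bound $\osc(P^{l-k}f_l) \le 2M_l\Delta_\mbf F(P^{l-k})$ uniformly over the relevant $X_k$. One must carefully track how the Dobrushin coefficient, which is defined as a contraction ratio on $\ipm_\mbf F$-balls of measures, transfers to a bound on the spread of the function $y \mapsto P^{l-k}f_l(y)$ across any two points (equivalently, across Dirac measures $\delta_y, \delta_{y'}$, whose $\ipm_\mbf F$-distance may exceed $1$, requiring a rescaling/telescoping argument through intermediate steps), and this is where the restriction to the specific generators of Proposition \ref{thm: good choices of gen} (where stability holds with $\rho\le 1$ automatically) is essential. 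The rest is the established Doob-martingale-plus-Azuma argument from the proof of Theorem \ref{thm: main result time dep}.
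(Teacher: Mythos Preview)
Your proposal is correct and follows essentially the same route as the paper: the paper's proof is verbatim that of Theorem \ref{thm: main result time dep} with Lemma \ref{lem: span of g_i} replaced by Lemma \ref{lem: span of g_i alt}, the latter carrying out exactly your second step (bounding $|P^{l-i}f_l(x_i)-P^{l-i}f_l(x^*)|\le M_l\,\ipm_\mbf F(\delta_{x_i}P^{l-i},\delta_{x^*}P^{l-i})\le M_l\,\Delta_\mbf F(P^{l-i})\,\ipm_\mbf F(\delta_{x_i},\delta_{x^*})$), and your centering-correction step matches the final displayed chain in Appendix \ref{app: proof of ipm hoeffding time dep}. The obstacle you flag is resolved more simply than you anticipate: for the specific generators in Proposition \ref{thm: good choices of gen} one has $\ipm_\mbf F(\delta_x,\delta_y)\le 2$ directly (trivially for $\bb B_\unif$, and via H\"older's inequality for $\bb B_\lppi$), so no rescaling or telescoping through intermediate measures is needed.
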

\begin{proof}
    See Appendix \ref{app: proof of main result dobrushin}.
\end{proof}

\section{Comparison to Existing Results}
\label{sec: comparison}
In this section, we demonstrate the generalization power of our result by recovering some notable Hoeffding's inequality for Markov chains in the recent literature. We focus our attention on the general state space setting, thus excluding results obtained under finite state space settings. According to the assumption on the Markov chains, we separate the existing Hoeffding-type inequalities into three categories: (i) uniformly ergodic chains, (ii) geometric ergodic chains, and (iii) non-irreducible chains. We start from the strongest assumption (uniform ergodicity) and move toward more relaxed assumptions. We demonstrate that our theory is a natural extension and generalizes the previous results.

\subsection{Ergodic Chains}
Traditionally, the ergodicity of the Markov chain is defined with respect to the \textit{total variation metric} (TV). The TV metric can be written as IPM with $\bb M_1(\bb{X})$ as its generator. For $\mu, \nu \in\cal P(\cal X)$, 
\begin{align}
\tv(\mu, \nu) \coloneqq \sup_{E\in\cal X} |\mu(E) - \nu(E)| = \frac{1}{2} \sup_{f\in\bb M_1(\bb{X})} |\mu(f) - \nu(f)|.
\end{align}
The definition of the TV metric indicates that it ranges between 0 and 1, whereas IPM is between 0 and 2 by Definition \ref{def: ipm}. We say an irreducible, aperiodic, and positive Markov kernel $P:\bb{X}\to\cal X$ is said to be geometrically ergodic if it converges to its invariant distribution $\pi$ in the following sense. For $n\in\bb{N}$ and $x\in\bb{X}$, there exist $\beta\in[0, 1)$ and measurable function $\alpha: \bb{X}\to[0, \infty)$ with $\pi(\alpha) < \infty$ such that
\begin{align}
    \|P^n(x, \cdot) - \pi\|_\tv \leq \alpha(x)\beta^{n}.
\end{align}
The Markov kernel $P$ is \textit{uniformly geometrically ergodic} if $\sup_{x\in\bb X}\alpha(x)$ is finite. Since uniform ergodicity implies geometric ergodicity \citep[Proposition 15.2.3]{douc2018markov}, we refer to uniform geometric ergodicity as uniform ergodicity in the sequel.

We start our discussion with uniformly ergodic Markov chains. We provide two versions of Hoeffding inequality with this type of Markov chains in Proposition \ref{thm: doebline Hoeffding inequality} and \ref{thm: tv d coeff Hoeffding inequality}. Proposition \ref{thm: doebline Hoeffding inequality} assumes \textit{Doeblin's condition} (Assumption \ref{asp: Doeblin condition}), and  Proposition \ref{thm: tv d coeff Hoeffding inequality} poses requirements on the \textit{TV Dobrushin coefficient}. For aperiodic Markov chains, Doeblin's condition (Assumption \ref{asp: Doeblin condition}) is equivalent to the uniform ergodicity of the Markov kernel $P$ \citep[Theorem 16.0.2]{meyn2012markov}. Thus, the latter is more general as it allows subgeometrically ergodic Markov chains. As an extension to the usual ergodic condition, Hoeffding's inequality under $V$-\textit{uniformly ergodicity} (Definition \ref{def: v uniform ergodicity}) is studied in \citet{douc2011consistency} which we present in Proposition \ref{thm: v hoeffding ineq}. Instead of the TV metric, the \textit{$V$-total variation metric} (Definition \ref{def: v norm for measures}) is used to measure the convergence. Additionally, \citet{miasojedow2014hoeffding} and \citet{fan2021hoeffding} view a Markov kernel as a bounded linear operator on $L^2(\pi)$ and study the Hoeffding-type inequalities under the \textit{$L^2(\pi)$-geometric ergodicity}. We summarize the result from \citet{fan2021hoeffding} in Proposition \ref{thm: l2 hoeffding ineq}. Additionally, certain periodic Markov chains can also satisfy Doeblin's condition. In fact, the extension of Hoeffding's inequality to periodic Markov chains was recently studied in \citet{liu2021hoeffding} which we omit here as it is similar in form to \citet{glynn2002hoeffding}.

We demonstrate in this section that the aforementioned results fit into our framework which allows easy interpolation and extension. 

\subsubsection{MC converging in TV Metric}
\label{sec: mc converging in tv}

In this section, we review the main result of \citet{glynn2002hoeffding} which states a Hoeffding-type inequality for uniformly ergodic Markov chains. We start by introducing Doeblin's condition in Assumption \ref{asp: Doeblin condition} which is the main assumption on Markov chains used by \citet{glynn2002hoeffding}. 

\begin{assumption}
    [Doeblin's Condition]
    \label{asp: Doeblin condition}
        A Markov kernel $P$ is said to satisfy Doeblin's condition if there exists a probability measure $\phi$ on $\bb{X}$, $\lambda > 0$, and a positive integer $m\in\bb{Z}_+$ such that $P^m(x, \cdot) \geq \lambda\phi(\cdot)$ for all $x\in\bb{X}$.
\end{assumption}
Assumption \ref{asp: Doeblin condition} is also known as the minorization condition and $\phi$ is called the minorization measure. It is closely related to the definition of a \textit{small set} \citep[see][Definition 9.1.1]{douc2018markov}. If $P$ satisfies Assumption \ref{asp: Doeblin condition}, then the state space $\bb{X}$ is a ($m$, $\lambda\phi$)-small set and $P$ is $\phi$-irreducible. Theorem 16.2.4 in \citet{meyn2012markov} indicates that Assumption \ref{asp: Doeblin condition} directly implies the geometric convergence of $P$ in TV metric:
\begin{align}
\label{eqn: contraction under doeblin}
    \tv (P^n(x, \cdot) - \pi) \leq 2(1-\lambda\phi(E))^{\lfloor n/m\rfloor},
\end{align}
 for all $x\in E \subset \bb X$. Thus, under Assumption \ref{asp: Doeblin condition} the Markov chain is uniformly ergodic, and the following proposition shows a Hoeffding-type inequality for Markov chains as such.

\begin{prop}[\citealp{glynn2002hoeffding}]
\label{thm: doebline Hoeffding inequality}
    Suppose that the Markov chain $\{X_n\}_{n\in\bb{N}}$ with transition kernel $P$ and invariant probability measure $\pi$ satisfies Assumption \ref{asp: Doeblin condition} and let function $f:\bb{X} \to \bb{R}$, $f\in L^\infty(\bb{X})$. Let $S_n = \sum_{i=0}^{n-1} f(X_i)$, $u = (m+1)\|f\|_\infty/\lambda$, where $m$ and $\lambda$ are described in Assumption \ref{asp: Doeblin condition}. Then, for any $\epsilon > 0$ and $\epsilon > u/n$, we have
    \begin{equation}
    \label{eqn: doebline tail bound}
        \bb{P}_x(|S_n - n\pi(f)| \geq n\epsilon) \leq 2 \exp\left\{-\frac{(n\epsilon - 2 u)^2}{2n u^2}\right\}.
    \end{equation}
\end{prop}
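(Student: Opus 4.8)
The plan is the classical two-step argument: first absorb the Markovian dependence into a single martingale via the Poisson equation, and then apply the Azuma--Hoeffding inequality for martingales with bounded differences. Doeblin's condition enters only to guarantee that the relevant Poisson solution is bounded, through the geometric contraction estimate in Equation~\ref{eqn: contraction under doeblin}.

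Concretely, set $\hat f \coloneqq f - \pi(f)$ and $g \coloneqq \sum_{k=0}^\infty P^k\hat f$. Since $\pi$ is invariant, $P^k\hat f(x) = \int f\,\mathrm d\bigl(P^k(x,\cdot)-\pi\bigr)$, so $|P^k\hat f(x)| \le \osc(f)\,\|P^k(x,\cdot)-\pi\|_\tv$, and Equation~\ref{eqn: contraction under doeblin} (with $E=\bb X$, so $\phi(E)=1$) shows this series converges uniformly; hence $g \in L^\infty(\bb X)$ and $g - Pg = \hat f$. Define $D_i \coloneqq g(X_i) - Pg(X_{i-1}) = g(X_i) - g(X_{i-1}) + \hat f(X_{i-1})$ for $i\ge 1$. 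Since $\bb E[g(X_i)\mid\mathcal{F}_{i-1}] = Pg(X_{i-1})$ for the natural filtration $\mathcal{F}_i = \sigma(X_0,\dots,X_i)$, the $D_i$ form a martingale difference sequence, and conditionally on $\mathcal{F}_{i-1}$ each $D_i$ ranges over an interval of length at most $\osc(g)\le 2\|g\|_\infty$ (the term $Pg(X_{i-1})$ is $\mathcal{F}_{i-1}$-measurable and lies in the range of $g$). Telescoping,
\[
  S_n - n\pi(f) \;=\; \sum_{i=0}^{n-1}\hat f(X_i) \;=\; \sum_{i=1}^n D_i \;-\; g(X_n) \;+\; g(X_0),
\]
so for $n\epsilon > 2u$ (which forces $n\epsilon > 2\|g\|_\infty$ once the bound $\|g\|_\infty\le u$ below is in hand) the event $\{|S_n-n\pi(f)|\ge n\epsilon\}$ implies $|\sum_{i=1}^n D_i|\ge n\epsilon - 2\|g\|_\infty$, and Azuma--Hoeffding gives $\bb P_x(|S_n-n\pi(f)|\ge n\epsilon) \le 2\exp\bigl(-(n\epsilon-2\|g\|_\infty)^2/(2n\|g\|_\infty^2)\bigr)$; the remaining range $u/n < \epsilon \le 2u/n$ is handled by observing that the claimed right-hand side then exceeds $1$.

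It remains to bound the Poisson solution, and this is where the explicit constant $u=(m+1)\|f\|_\infty/\lambda$ is produced --- I expect this constant-tracking to be the only delicate point, the structural steps being routine. Crudely, $\|g\|_\infty \le \osc(f)\sum_{k\ge 0}(1-\lambda)^{\lfloor k/m\rfloor} = \osc(f)\,m/\lambda$ by summing in blocks of length $m$; a more economical accounting --- splitting $P^m = \lambda\phi + (1-\lambda)R$ to exploit $\pi(g)=0$, treating the $k=0$ term separately, and retaining $\osc$ rather than $2\|\cdot\|_\infty$ wherever possible --- sharpens this to $\|g\|_\infty \le u$, after which the displayed tail bound is exactly Equation~\ref{eqn: doebline tail bound}. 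As an alternative consistent with the paper's own framework, one may instead verify the generalized concentrability condition directly: with $\mbf F_\tv \coloneqq \{f\in\bb M(\bb X): \|f\|_\infty\le 1\}$ one has $\ipm_{\mbf F_\tv} = 2\,\tv$, so Equation~\ref{eqn: contraction under doeblin} gives $\Gamma_{\mbf F_\tv} = \sup_x\sum_{i\ge 1}\ipm_{\mbf F_\tv}(P^i(x,\cdot),\pi) \le 4\sum_{i\ge 1}(1-\lambda)^{\lfloor i/m\rfloor} < \infty$; since the minimal stretch of $f$ in $\mbf F_\tv$ is $M=\|f\|_\infty$ and $\sp(f)=\osc(f)$, Theorem~\ref{thm: main result time indep} (after the harmless reindexing between $\sum_{i=1}^n$ and $\sum_{i=0}^{n-1}$) yields a bound of the same shape, with a universal constant of order $m\|f\|_\infty/\lambda$ in place of $u$.
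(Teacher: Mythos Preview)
Your proposal is correct, and in fact reproduces the original argument of \citet{glynn2002hoeffding} that the proposition cites: the paper itself does not supply a proof of this statement but merely quotes it, and then in Remark~\ref{rmk: gamma tv} shows how its own framework recovers (and sharpens) the bound in the special case $m=1$.

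The one soft spot is the constant-tracking you flag yourself: the crude estimate $\|g\|_\infty \le \osc(f)\,m/\lambda \le 2m\|f\|_\infty/\lambda$ does not immediately give $(m+1)\|f\|_\infty/\lambda$, and your sketch of the refinement (``splitting $P^m = \lambda\phi + (1-\lambda)R$ \ldots'') is a gesture rather than a computation. This is exactly the Glynn--Ormoneit bookkeeping, so it can be filled in, but as written it is the only step not fully justified. Everything else --- the Poisson equation construction, the martingale-difference range $\osc(g)$, the telescoping boundary term bounded by $2\|g\|_\infty$, the Azuma constant, the monotonicity in $\|g\|_\infty\le u$, and the trivial-bound observation on the range $u<n\epsilon\le 2u$ --- is in order.

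Your alternative route via Theorem~\ref{thm: main result time indep} with $\mbf F = \mbf F_\tv$ is precisely what the paper does in Remark~\ref{rmk: gamma tv}, though the paper carries this out only for $m=1$ (where $\Gamma_\tv \le 2/\lambda$). A small difference worth noting: the paper's own martingale machinery (Lemma~\ref{lem: martingale decomp of mc} and Lemma~\ref{lem: span of g_i}) uses the \emph{finite-horizon} functions $g_i(x_0^i)=\sum_{l\le i}f_l(x_l)+\sum_{l>i}P^{l-i}f_l(x_i)$ rather than the infinite-sum Poisson solution $g=\sum_{k\ge 0}P^k\hat f$ you use; both yield bounded martingale differences under the same contraction estimate, but the finite-horizon version avoids having to establish convergence of the Poisson series and is what produces the $\sp(f)$ term separately from $M\Gamma_{\mbf F}$ in the paper's bound.
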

\begin{remark}
\label{rmk: gamma tv}
    Suppose Assumption \ref{asp: Doeblin condition} is satisfied with $m=1$ and Equation \ref{eqn: contraction under doeblin} holds. If we take $E=\bb X$ and denote $\Gamma_{\tv}$ as the convergence constant under the TV metric, then we have, for all $x\in\bb X$, 
    \begin{align*}
        \Gamma_\tv &= \sum_{n=0}^{\infty} \tv(P^n(x, \cdot) - \pi) \leq  \sum_{n=0}^{\infty} 2(1-\lambda)^n = \frac{2}{\lambda}.
    \end{align*}
    We can see that $\Gamma_\tv <\infty$ as soon as $0 < \lambda \leq 1$. Therefore, Theorem \ref{thm: main result time indep} holds with $\mbf F =\bb M_1(\bb X)$ and the resulting tail bound reads:
    \begin{align}
    \label{eqn: recovered doebline tail bound}
        \bb{P}_\mu[|S_n - n\pi(f)| > n\epsilon]\leq 2\exp \bigg\{-\frac{2\{n\epsilon - [2\|f\|_\infty/\lambda + \sp(f)]\}^2}{n[2\|f\|_\infty/\lambda + \sp(f)]^2}\bigg\},
    \end{align}
    where $M$ and $\Gamma_\mbf F$ in Equation \ref{eqn: ipm tail bound time indep} are replaced with $\|f\|_\infty$ and $\Gamma_\tv = 2/\lambda$, respectively. Comparing Equation \ref{eqn: doebline tail bound} to Equation \ref{eqn: recovered doebline tail bound}, we can observe that 
    \begin{align*}
        \sp(f) \leq 2\|f\|_\infty/\lambda,
    \end{align*}
    due to $\lambda\leq 1$. Consequently, for $\epsilon > n^{-1}(4\|f\|_\infty/\lambda)$, we have
    \begin{align}
    \label{eqn: tighter than doeblin tail bound}
        \exp\bigg\{-\frac{2\{n\epsilon - [2\|f\|_\infty/\lambda + \sp(f)]\}^2}{n[2\|f\|_\infty/\lambda + \sp(f)]^2}\bigg\} \leq \exp\bigg\{-\frac{\{n\epsilon - 4\|f\|_\infty/\lambda\}^2}{2n[2\|f\|_\infty/\lambda]^2}\bigg\},
    \end{align}
    and the right-hand side readily recovers the tail bound in Equation \ref{eqn: doebline tail bound}. Equation \ref{eqn: tighter than doeblin tail bound} also shows our tail bound is tighter than \citet{glynn2002hoeffding}, especially when $\lambda$ is close to 0, i.e., when the convergence rate is slow. This discrepancy is a result of different proof techniques used to obtain the tail bounds. We refer readers to Appendix \ref{app: proof of ipm hoeffding time dep} for details.
\end{remark}

The following result found in \citet[Corollary 23.2.4]{douc2018markov} has an alternative hypothesis that resembles ours in Theorem \ref{thm: main result dobrushin}. Indeed, it can be readily recovered by setting $\mbf F = \bb M_1(\bb X)$ in Theorem \ref{thm: main result dobrushin}.

\begin{prop}[\citealp{douc2018markov}]
\label{thm: tv d coeff Hoeffding inequality}
    Suppose the Markov chain $\{X_n\}_{n\in\bb{N}}$ with transition kernel $P$ admits an invariant probability measure $\pi$ is uniformly ergodic. Set 
    \begin{align}
        \tilde\Gamma_\tv = \sum_{n=0}^{\infty} \Delta(P^n) < \infty.
    \end{align}
    Given $f\in\bb M_1(\bb X)$ and $\sp(f) < \infty$. Let $S_n = \sum_{i=0}^{n-1} f(X_i)$. Then, for all $\mu\in\cal P(\cal X)$ and $\epsilon \geq n^{-1}\tv(\mu, \pi)(1+\Gamma_\tv)\sp(f)$,
    \begin{align}
    \label{eqn: tv d tail bound}
        \bb{P}_\mu (|S_n - n\pi(f)| > n\epsilon)\leq 2\exp\bigg\{-\frac{2n[\epsilon - n^{-1}\tv(\mu, \pi)(1+\Gamma_\tv)\sp(f)]^2}{\sp(f)^2(1+\Gamma_\tv)^2}\bigg\}
    \end{align}
\end{prop}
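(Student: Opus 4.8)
The plan is to derive this statement as the special case $\mbf F = \mbf F_\tv \coloneqq \{f \in \bb M(\bb X) : \|f\|_\infty \le 1\}$ of Theorem~\ref{thm: main result dobrushin}, reduced to time-independent functions $f_i \equiv f$. The first step is to set up the dictionary between the abstract IPM objects and their classical TV counterparts. Note that $\mbf F_\tv$ is precisely the generator $\bb B_\unif$ appearing in Proposition~\ref{thm: good choices of gen}, hence it is stable by $P$ and Theorem~\ref{thm: main result dobrushin} applies; moreover $\ipm_{\mbf F_\tv} = 2\,\tv$ by the identity recalled at the beginning of Section~\ref{sec: comparison}, and $\Delta_{\mbf F_\tv}(P^i) = \Delta(P^i)$ by the remark following Definition~\ref{def: ipm dobrushin coefficient}, so that $\tilde\Gamma_{\mbf F_\tv} = \sum_{i\ge 0}\Delta(P^i) = \tilde\Gamma_\tv$ is finite under the hypothesis. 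A useful consequence of working with $\tv$ is that $\ipm_{\mbf F_\tv}(\mu,\pi) = 2\,\tv(\mu,\pi) \le 2 < \infty$ for \emph{every} $\mu \in \cal P(\cal X)$, which is why the conclusion is stated for all initial distributions without an integrability restriction.

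The second step is to pin down the minimal stretch. Since $S_n - n\pi(f)$ is unchanged when $f$ is replaced by $f + c$ for a constant $c$ (both $S_n$ and $n\pi(f)$ shift by $nc$), I may assume $f$ is centered so that $\sup f = -\inf f = \tfrac12\sp(f)$, whence $\|f\|_\infty = \tfrac12\sp(f)$; then, since $f/m \in \mbf F_\tv \iff \|f\|_\infty \le m$, the minimal stretch is $M \coloneqq \inf\{m > 0 : f \in m\cdot\mbf F_\tv\} = \tfrac12\sp(f)$ (the degenerate case $\sp(f)=0$ being trivial). Substituting $M_l \equiv M$ and $f_i \equiv f$ into the definition of $\tilde A_i$ in Theorem~\ref{thm: main result dobrushin} gives
\begin{align*}
\tilde A_i = 2M\sum_{l=i}^{n-1}\Delta(P^{l-i}) + \sp(f) = \sp(f)\Bigl(1 + \sum_{j=0}^{n-1-i}\Delta(P^j)\Bigr).
\end{align*}
Using $\Delta(P^0) = 1$ and the monotonicity of the partial sums, $\sum_{j=0}^{k}\Delta(P^j) \le \sum_{j=0}^{\infty}\Delta(P^j) = \tilde\Gamma_\tv$, I obtain the uniform bound $\tilde A_i \le (1 + \Gamma_\tv)\sp(f)$ for all $i$, writing $\Gamma_\tv = \tilde\Gamma_\tv$.

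The final step is to feed these estimates into the conclusion of Theorem~\ref{thm: main result dobrushin}. Its admissibility threshold $\epsilon > (2n)^{-1}\ipm_{\mbf F_\tv}(\mu,\pi)\tilde A_0 = n^{-1}\tv(\mu,\pi)\tilde A_0$ is implied by $\epsilon \ge n^{-1}\tv(\mu,\pi)(1+\Gamma_\tv)\sp(f)$ thanks to $\tilde A_0 \le (1+\Gamma_\tv)\sp(f)$; and under this hypothesis the two quantities $n\epsilon - \tv(\mu,\pi)(1+\Gamma_\tv)\sp(f)$ and $n\epsilon - \ipm_{\mbf F_\tv}(\mu,\pi)\tilde A_0/2 = n\epsilon - \tv(\mu,\pi)\tilde A_0$ are both positive with the former no larger, so replacing the numerator of the exponent by $[\,n\epsilon - \tv(\mu,\pi)(1+\Gamma_\tv)\sp(f)\,]^2$ and the denominator $\sum_{i=0}^{n-1}\tilde A_i^2$ by $n(1+\Gamma_\tv)^2\sp(f)^2$ only enlarges the right-hand side. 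Rewriting $[n\epsilon - c]^2/n = n[\epsilon - c/n]^2$ with $c = \tv(\mu,\pi)(1+\Gamma_\tv)\sp(f)$ then produces exactly the bound in Equation~\ref{eqn: tv d tail bound}. The argument is essentially bookkeeping; the only points requiring care are the factor-of-two conventions linking $\ipm_{\mbf F_\tv}$, $\tv$, and $\Delta$, and the centering of $f$ which yields $M = \tfrac12\sp(f)$, so that the multiplicative $M$ inside $\tilde A_i$ fuses with the additive $\sp(f)$ into the single clean factor $(1+\Gamma_\tv)\sp(f)$.
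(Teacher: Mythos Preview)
Your proposal is correct and follows the same route the paper indicates: the paper's entire argument is the one-line remark that the result ``can be readily recovered by setting $\mbf F = \bb M_1(\bb X)$ in Theorem~\ref{thm: main result dobrushin}'', and you carry out precisely that specialization with all the bookkeeping made explicit. Your centering trick to force $M=\tfrac12\sp(f)$ and the careful tracking of the factor of two between $\ipm_{\mbf F_\tv}$ and $\tv$ are exactly what is needed to make the constants match.
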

Our proof is largely inspired by the proof of \citet[Theorem 23.2.2]{douc2018markov}, thus the constants in the exponent of our tail bound are identical to theirs after setting $\mbf F = \bb M_1(\bb X)$. By Remark \ref{rmk: gamma tv}, the tail bound in Equation \ref{eqn: tv d tail bound} is tighter than that in Equation \ref{eqn: doebline tail bound}.
 
To conclude this section, we note that uniform ergodicity is known to be rather strict for dynamic systems. We give two simple cases where Doeblin's condition (Assumption \ref{asp: Doeblin condition}) can easily fail. Consider a deterministic stable linear system (or an AR(1) process) on $\bb X$, $P^m(x, \cdot)$ and $P^m(x', \cdot)$ are Dirac measures with disjoint supports for any $x\neq x'\in\bb X$ and $m\in\bb Z_+$, thus the minorization measure $\phi$ does not exist in this case. Suppose $\bb X$ is unbounded and the linear system is now perturbed by an additive noise supported on the entire $\bb X$ (such as a Gaussian random variable). One can readily verify that for any $x\neq x'\in\bb X$ and $m\in\bb Z_+$ the minorization measure for $P^m(x, \cdot)$ and $P^m(x', \cdot)$ converges to the trivial measure as $\msf d(x, x') \to \infty$. The exclusion of simple systems as such leaves more to be desired, which motivates us to search for a more flexible notion of convergence. 

In fact, it is rather tedious to verify the uniform ergodicity for general linear and nonlinear systems. See \citet[Chapter 7]{meyn2012markov} for a complete overview of the characterization of uniformly ergodic state-space models. It is also available as a summary in Section III-C in \citet{chen2022change}.

\subsubsection{MC converging in \texorpdfstring{$V$}{V}-Norm}
\label{sec: mc converging in v-tv}

Hoeffding's inequality for $V$-uniform ergodic Markov chains is presented in \citet{douc2011consistency} and \citet[Chapter 23.3]{douc2018markov}. It generalizes the TV metric to $V$-total variation metric (Definition \ref{def: v norm for measures}), which is an IPM with $\bb M_{V}(\bb X)$ as its generator. The resulting Hoeffding-type inequality can be applied to $V$-geometrically ergodic Markov chains. 

\begin{definition}[$V$-TV Metric]
\label{def: v norm for measures}
    For probability measures $\mu,\nu \in \cal P_V(\cal X)$ and function $V: \bb{X} \to [1, +\infty)$, $V$-total variation metric is defined as
    \begin{align}
        \ipm_V(\mu, \nu) = \sup_{f\in\bb M_V}\bigg|\int f d\mu - \int f d\nu \bigg|,\nn
    \end{align}
    where $|\cdot|$ denotes the absolute value. 
\end{definition}
From the definition of the V-TV metric, we can see that for $\mu, \nu \in \cal P_V(\cal X)$, $\tv(\mu, \nu) \leq \ipm_V(\mu, \nu)$. Thus, convergence in $\ipm_V$ constitutes a stronger assumption of the Markov chain than the TV metric. 
\begin{definition}[$V$-uniform ergodicity]
\label{def: v uniform ergodicity}
A Markov kernel $P$ is $V$-uniform ergodic if it possesses an invariant distribution $\pi$ and there exists constant $R < \infty$ and $\beta\in[0, 1)$ such that
\begin{align}
    \ipm_V(P^n(x, \cdot), \pi) \leq RV(x) \beta^{n},
\end{align}
for $\forall x\in\bb{X},\ \forall n\in\bb{N}$.
\end{definition}

\begin{prop}[\citealp{douc2011consistency}]
\label{thm: v hoeffding ineq}
    Suppose the Markov chain $\{X_n\}_{n\geq 0}$ is $V$-uniform ergodic with unique invariant probability distribution $\pi$ and initial distribution $\eta$. Let $f:\bb{X}\to \bb{R}$ be a  measurable function with $\|f\|_\infty<\infty$ and $S_n = \sum_{i=0}^{n-1} f(X_i)$. Then, for any $\epsilon > 0$ and initial distribution $\eta$, there exists a constant $K > 0$ such that 
    \begin{align}
        \bb{P}_{\eta}(|S_n - n\pi (f)|>n\epsilon) \leq K\bb{E}_{\eta}[V] \exp\left[-\frac{1}{K}\left(\frac{n\epsilon^2}{\|f\|^2_{\infty}}\right) \wedge \frac{n\epsilon}{\|f\|_\infty}\right],
    \end{align}
    where $V$ is defined in Definition \ref{def: v norm for measures}.
\end{prop}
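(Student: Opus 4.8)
The plan is to reproduce the argument of \citet{douc2011consistency}, and the first observation is that the generalized concentrability machinery of Theorems \ref{thm: main result time dep}--\ref{thm: main result time indep} does not apply directly: $V$-uniform ergodicity only yields $\ipm_V(P^n(x,\cdot),\pi)\le RV(x)\beta^n$, so the concentrability constant $\sup_{x}\sum_{i\ge 1}\ipm_V(P^i(x,\cdot),\pi)$ is infinite whenever $V$ is unbounded. Instead I would use the classical equivalence between $V$-uniform ergodicity and a geometric Foster--Lyapunov drift toward a small set: there exist a small set $C$ and constants $\lambda\in(0,1)$, $b<\infty$ with $PV\le \lambda V+b\bbm{1}_C$ (see \citet{meyn2012markov} and \citet[Ch.~15]{douc2018markov}); after replacing $P$ by an $m$-skeleton if necessary, the minorization may be taken to hold in a single step.

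Armed with the one-step minorization, apply the Nummelin splitting construction to obtain regeneration times $0\le T_0<T_1<T_2<\cdots$ at which the split chain restarts from the minorization measure; the excursions $B_j=(X_{T_{j-1}},\dots,X_{T_j-1})$, $j\ge 1$, are i.i.d.\ and independent of the first (incomplete) block $B_0=(X_0,\dots,X_{T_0-1})$. Set $Y_j=\sum_{i\in B_j}(f(X_i)-\pi(f))$; then $\bb E[Y_j]=0$ for $j\ge 1$ by the regeneration ratio identity $\pi(f)=\bb E\big[\sum_{i\in B_1}f(X_i)\big]/\bb E[T_1-T_0]$, and $S_n-n\pi(f)=\sum_{j=1}^{N_n}Y_j$ plus boundary contributions from $B_0$ and from the incomplete final block, where $N_n$ counts the completed excursions by time $n$. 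Since $f$ is bounded, $|Y_j|\le 2\|f\|_\infty(T_j-T_{j-1})$, so it remains to control excursion lengths: the geometric drift gives $\bb E_x[\kappa^{\tau_C}]\le c\,V(x)$ for some $\kappa>1$ (\citet[Ch.~15]{douc2018markov}), a property the split chain inherits, so each $Y_j$ with $j\ge 1$ is centered and sub-exponential with parameters depending only on $R,\beta$ and $\|f\|_\infty$, while $\bb E_\eta[\kappa^{T_0}]\le c\,\bb E_\eta[V]$ --- the origin of the $\bb E_\eta[V]$ prefactor.

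Next, apply Bernstein's inequality for i.i.d.\ centered sub-exponential variables to $\sum_{j=1}^{N_n}Y_j$, handling the random index $N_n$ by conditioning on the high-probability event $\{N_n\ge cn\}$ (again controlled by the excursion-length tails, since $\bb E[T_1-T_0]<\infty$ and $T_1-T_0$ is sub-exponential); this yields a bound of order $K\exp[-K^{-1}((n\epsilon^2/\|f\|_\infty^2)\wedge(n\epsilon/\|f\|_\infty))]$. A Chernoff bound on the boundary contributions, using the exponential moments of $T_0$ and of a generic excursion length from the previous step, contributes the $K\,\bb E_\eta[V]$ factor; collecting every constant into a single $K$ depending only on the ergodicity data $(R,\beta)$ (equivalently $\lambda,b,C$) completes the proof. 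An alternative route bypasses the split chain: solve the Poisson equation $\hat f-P\hat f=f-\pi(f)$ --- the series $\sum_{k\ge 0}P^k(f-\pi(f))$ converges in $V$-norm with $\|\hat f\|_V\le c\|f\|_\infty$ by $V$-uniform ergodicity --- telescope to get $S_n-n\pi(f)=M_n+\hat f(X_0)-\hat f(X_n)$ with $M_n=\sum_{i=1}^n\big(\hat f(X_i)-P\hat f(X_{i-1})\big)$ a martingale whose increments are bounded by $C'\|f\|_\infty(V(X_i)+V(X_{i-1}))$, and then bound $\bb E_\eta[e^{\theta M_n}]$ by recursively absorbing the $V(X_{i-1})$ factors through the drift inequality $\bb E[V(X_i)\mid \mathcal F_{i-1}]\le\lambda V(X_{i-1})+b$.

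The main obstacle, common to both routes, is that the summands are not uniformly bounded, so the Hoeffding lemma --- and hence the clean sub-Gaussian rate of Theorems \ref{thm: main result time dep}--\ref{thm: main result time indep} --- is unavailable; one must settle for a sub-exponential tail, which is precisely why the exponent here carries the extra $\wedge\,n\epsilon/\|f\|_\infty$ term, and must carefully dominate the unbounded $V$-factors (excursion lengths in the regeneration route, or the $V(X_{i-1})$ in the martingale increments) using the geometric drift. The price of this bookkeeping is the non-explicit constant $K$ and the dependence on the initial distribution through $\bb E_\eta[V]$.
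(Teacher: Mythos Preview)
The paper does not prove this proposition; it is quoted verbatim from \citet{douc2011consistency} (see also \citet[Chapter~23.3]{douc2018markov}) purely as a point of comparison in Section~\ref{sec: mc converging in v-tv}, with only Remark~\ref{rmk: v is too strong} following it. So there is no in-paper argument to match against, and your plan is effectively a reconstruction of the cited literature's proof.

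That reconstruction is essentially right. The regeneration route you sketch is the one used in \citet{douc2011consistency}: pass from $V$-uniform ergodicity to a one-step geometric drift plus minorization, split the chain, decompose $S_n-n\pi(f)$ into i.i.d.\ block sums with sub-exponential tails (because $|Y_j|\le 2\|f\|_\infty\,\tau_j$ and $\tau_j$ has an exponential moment from the drift), apply a Bernstein-type bound, and absorb the first and last partial blocks using $\bb E_\eta[\kappa^{T_0}]\le c\,\bb E_\eta[V]$. Your diagnosis of why the exponent carries the $\wedge\, n\epsilon/\|f\|_\infty$ term and why the constant $K$ is not explicit is exactly the point the paper makes in Remark~\ref{rmk: v is too strong} and in the surrounding discussion. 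The alternative Poisson-equation/martingale route you mention is also standard and leads to the same form of bound; either is acceptable here. The only step where one should be a bit more careful than your outline suggests is handling the random number of blocks $N_n$: rather than just conditioning on $\{N_n\ge cn\}$, the usual argument bounds $\bb P_\eta(N_n>n)$ trivially and on $\{N_n\le n\}$ takes a union over the possible values of $N_n$, which costs only a polynomial factor absorbed into $K$.
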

\begin{remark}
\label{rmk: v is too strong}
    This tail bound is a hybrid between a Bernstein-type inequality and a Hoeffding-type inequality. In the regime where $n\to\infty$, the exponential term would be $\exp\left[-\frac{1}{K}\left(\frac{n\epsilon^2}{\|f\|^2_{\infty}}\right)\right]$ which behaves more like a Hoeffding-type inequality. Although in this case, the generator of $\ipm_V$ is $\bb M_V(\bb X)$, the functions applied to the Markov chain still belong to $\bb M_1(\bb X)$ which is a subset of $\bb M_V(\bb X)$ for general choices of $V$ other than $V\equiv1$. $\ipm_V$ chosen here is unnecessarily strong for the task. In fact, if one only considers the bounded functions of Markov chains, it suffices to use an IPM as strong as the TV metric.  
\end{remark}

\subsubsection{MC converging in \texorpdfstring{$L^2(\pi)$}{L2(pi)}}
\label{sec: mc converging in l2}

In this section, we review a Hoeffding-type inequality obtained with spectral methods for Markov chains. Markov transition kernels can be thought of as a bounded linear operator on a functional space, such as $L^2(\pi) = \{f\in\bb M(\bb X): \|f\|_{L^2(\pi)} < \infty\}$. One can study the convergence behavior of Markov chains by looking at the spectrum of the corresponding operator. For a comprehensive overview of spectral methods for Markov chains, we advise interested readers to look at \citet[Chapter 22]{douc2018markov}. We only introduce the necessary facts to facilitate our discussion in the sequel.

We start by introducing the definition of $L^2(\pi)$-absolute spectral gap. Denote $\{f\in L^2(\pi) : \pi(f) = 0\}$ as $L^2_0(\pi)$.
\begin{definition}[\citealp{douc2018markov}]
    Let $P$ be a Markov transition kernel with invariant probability $\pi$. The transition kernel $P$ has an \textbf{$L^2(\pi)$-absolute spectral gap} if 
    \begin{align}
        \absgap_{L^2(\pi)}(P) \coloneqq 1 - \sup\{|\lambda|: \lambda \in \spec(P| L^2_0(\pi)) \} >0,
    \end{align}
    where $\sup\{|\lambda|: \lambda \in \spec(P| L^2_0(\pi)) \} = \lim_{m\to \infty} \sup\{\|P^m g\|_{L^2(\pi)}: g\in L^2_0(\pi), \|g\|_{L^2(\pi)} \leq 1\}^{1/m}$.
\end{definition}
Under the assumption that $P$ has a positive absolute spectral gap, \citet{fan2021hoeffding} obtains a Hoeffding-type for bounded functions. We repeat their main result in Proposition \ref{thm: l2 hoeffding ineq} for completeness. 

\begin{prop}[\citealp{fan2021hoeffding}]
\label{thm: l2 hoeffding ineq}
    Let $\{X_n\}_{n\geq 0}$ be a Markov chain with invariant measure $\pi$ and absolute spectral gap $1-\lambda> 0$. For any $r \in\bb{R}$, uniformly for all bounded functions $f_i: \bb{X}\to [a_i, b_i]$, we have
    \begin{align}
        \bb{E}_\pi\bigg[e^{r[\sum_{i=1}^n f_i(X_i)-\sum_{i=1}^n\pi (f_i)]}\bigg] \leq \exp\bigg(\frac{1+\lambda}{1-\lambda}\sum_{i=1}^n\frac{(b_i-a_i)^2 r^2}{8}\bigg).
    \end{align}
    It follows that for any $\epsilon > 0$,
    \begin{align}
        \bb{P}_\pi \bigg(\sum_{i=1}^n f_i(X_i) - \sum_{i=1}^n \pi (f_i) > n\epsilon\bigg)\leq \exp\bigg(-\frac{1-\lambda}{1+\lambda}\cdot\frac{2(n\epsilon)^2}{\sum_{i=1}^n(b_i - a_i)^2}\bigg)
    \end{align}
\end{prop}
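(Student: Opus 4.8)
The plan is to first establish the moment generating function (MGF) bound and then deduce the tail bound by a routine Chernoff/Markov argument. Set $g_i := f_i - \pi(f_i)$, so that each $g_i \in \ltzpi$ with oscillation $\osc(g_i) = b_i - a_i =: c_i$; since $X_0 \sim \pi$ is invariant, every $X_i$ is marginally distributed as $\pi$. The object to bound is $\mathbb{E}_\pi\!\big[\exp(r\sum_{i=1}^n g_i(X_i))\big]$, and the target is $\exp\!\big(\frac{1+\lambda}{1-\lambda}\sum_{i=1}^n \frac{c_i^2 r^2}{8}\big)$.

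First I would put the MGF in operator form. Writing $M_i$ for the bounded positive multiplication operator $h \mapsto e^{r g_i} h$ on $\ltpi$ and $P$ for the transition operator, the Markov property and the tower rule give $\mathbb{E}_\pi\!\big[\prod_{i=1}^n e^{r g_i(X_i)}\big] = \langle \mathbf 1,\, M_1 P M_2 P \cdots P M_n \mathbf 1\rangle_\pi$, where invariance $\pi P = \pi$ is used to absorb the initial step. So the problem reduces to bounding the pairing of the operator product $M_1 P M_2 P \cdots P M_n$ against $\mathbf 1$ on both sides.

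Next I would bring in the spectral gap through the decomposition $P = \Pi + Q$, where $\Pi h = \pi(h)\mathbf 1$ is the orthogonal projection of $\ltpi$ onto the constants and $Q = P - \Pi$ satisfies $Q\Pi = \Pi Q = 0$ and leaves $\ltzpi$ invariant with spectral radius $\lambda$. Expanding $M_i = I + (M_i - I)$ and $P = \Pi + Q$ inside the product and grouping terms by their blocks of consecutive $Q$'s, the $\Pi$-pieces correspond to restarting from stationarity, while a $Q$-block of length $k$ contributes a factor controlled by the operator norm of $Q^k$ on $\ltzpi$, whose sum over $k$ is governed by $\lambda$. Combined with Hoeffding's lemma applied marginally --- $\mathbb{E}_\pi[e^{r g_i(X_i)}] \le e^{c_i^2 r^2/8}$ since $X_i \sim \pi$ and $g_i$ has oscillation $c_i$ --- and keeping careful track of constants, the correlations at all lags aggregate into the multiplicative exponent factor $1 + 2\sum_{k\ge 1}\lambda^k = \frac{1+\lambda}{1-\lambda}$. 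Concretely I would run an induction on $n$: peel off $M_1 P$, split $Ph = \pi(h)\mathbf 1 + Qh$, estimate the two resulting terms with the single-step perturbation bound, and feed the remainder back into the inductive hypothesis. Once the MGF bound is in hand, $\mathbb{P}_\pi\!\big(\sum_i g_i(X_i) > n\epsilon\big) \le e^{-rn\epsilon}\,\mathbb{E}_\pi[e^{r\sum_i g_i(X_i)}]$, optimized at $r^\star = \frac{1-\lambda}{1+\lambda}\cdot\frac{4n\epsilon}{\sum_i c_i^2}$, yields exactly the claimed exponent $-\frac{1-\lambda}{1+\lambda}\cdot\frac{2(n\epsilon)^2}{\sum_i c_i^2}$.

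The main obstacle is the single-step perturbation estimate when $P$ is non-reversible: the absolute spectral gap is defined via the spectral radius of $P$ on $\ltzpi$, which for a non-self-adjoint $P$ can be strictly smaller than the operator norm of $P$ on $\ltzpi$, so naive operator-norm bounds would produce a factor that is too large. Bridging this gap --- for instance by passing to a high power $P^m$, where the $m$-th root of the operator norm of $P^m$ on $\ltzpi$ is close to $\lambda$, or by replacing the $\ltzpi$ norm with an equivalent norm in which $Q$ contracts by $\lambda + \delta$ and then letting $\delta \downarrow 0$ --- together with squeezing out the sharp constants $\tfrac18$ and $\tfrac{1+\lambda}{1-\lambda}$ rather than cruder ones, is the delicate part; the operator representation, Hoeffding's lemma, and the Chernoff optimization are all routine. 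One may alternatively recast the single-step estimate via the Perron eigenvalue of the tilted kernel $h \mapsto e^{rg}\,Ph$ and control it by spectral perturbation theory, which tends to be more robust to non-reversibility.
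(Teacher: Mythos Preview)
The paper does not prove this proposition: it is quoted verbatim from \citet{fan2021hoeffding} as part of the literature review in Section~\ref{sec: mc converging in l2}, with no proof supplied. So there is no ``paper's own proof'' to compare against.

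That said, your outline is the standard operator-theoretic route used in \citet{fan2021hoeffding}: write the MGF as $\langle \mathbf 1, M_1 P M_2 P\cdots P M_n \mathbf 1\rangle_\pi$, decompose $P=\Pi+Q$, peel off one factor at a time by induction, and control the stationary piece by Hoeffding's lemma while the $Q$-contributions sum geometrically to produce the factor $(1+\lambda)/(1-\lambda)$. The Chernoff optimization at $r^\star$ is routine and your computation of the exponent is correct. You have also correctly isolated the only real difficulty in the non-reversible case: the absolute spectral gap is defined through the spectral radius of $P$ on $\ltzpi$, not its operator norm, so a naive $\|Q\|_{\ltzpi}$ bound can overshoot. \citet{fan2021hoeffding} handle this by a careful one-step perturbation estimate (in the spirit of the Le\'on--Perron comparison) rather than by the ``pass to $P^m$ and take $m$-th roots'' or ``equivalent norm with $\lambda+\delta$'' devices you suggest; those alternatives would recover the correct rate but may not deliver the sharp constant $\tfrac{1+\lambda}{1-\lambda}$ without additional work. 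Your Perron-eigenvalue remark is apt and is close in spirit to how the sharp constant is actually obtained.
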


\begin{remark}
    Notice that the initial distribution is required to be $\pi$. The authors discussed it could be extended to all probability measures with $\ltpi$ Radon-Nikodym derivative against $\pi$. In contrast, our framework and results mentioned earlier do not have this restriction on the initial distribution.
\end{remark}

To draw the connection with our framework, we need to further understand the implications of the absolute spectral gap. By Proposition 22.2.4 in \citet{douc2018markov}, having an $L^2(\pi)$-absolute spectral gap implies being $L^2 (\pi)$ geometrically ergodic which is defined formally as follows. Denote the stationary transition kernel as $\Lambda_\pi(x, \cdot) \coloneqq \pi(\cdot)$ and the unit ball in $L^2(\pi)$ as $\mbf F_{L^2} \coloneqq \{f\in L^2(\pi): \|f\|_{L^2(\pi)} \leq 1\}$.  With an abuse of notation, we write the $L^2(\pi)$-norm for a probability measure as $\|\mu\|_{L^2(\pi)} \coloneqq \|d\mu / d\pi\|_{L^2(\pi)}$ whenever $\mu$ in dominated by $\pi$ and $\|\mu\|_{L^2(\pi)} = \infty$  otherwise. Denote the space of probability measures with finite $\ltpi$-norm as $\cal P_\ltpi(\cal X)\coloneqq \{\nu \in \cal P (\cal X) : \|\nu\|_\ltpi < \infty\}$.
\begin{definition}[\citealp{douc2018markov}]
    $P$ is said to be $L^2(\pi)$-\textbf{geometrically ergodic} if there exist $\beta\in[0, 1)$, a constant $\tilde \alpha(\mu) < \infty$, and for all $\mu \in\cal P_\ltpi(\cal X)$ satisfying
    \begin{align}
        \|\mu P^n - \pi\|_{L^2(\pi)} \leq \tilde\alpha(\mu)\beta^n,
    \end{align}
    for $n\in\bb N$.
\end{definition}

Let us consider the $L^2(\pi)$-IPM denoted as $\ipm_{L^2(\pi)}$ with generator $\mbf F_{L^2}$. A geometric convergence in $\ipm_{L^2(\pi)}$ can be analogously defined as follows.
\begin{definition}
$P$ is said to be \textbf{geometrically convergent in $\ipm_{L^2(\pi)}$} if there exist $\beta \in [0, 1)$, a constant $\alpha(\mu) < \infty$, and for all $\mu \in\cal P_\ltpi (\cal X)$ satisfying
    \begin{align}
    \label{eqn: l2 ipm geometric convergence}
        \ipm_{L^2(\pi)} (\mu P^n, \pi) = \sup_{f\in\mbf F_\ltpi} |\mu P^n f - \pi(f) | \leq \alpha(\mu)\beta^n,
\end{align}
for $n\in\bb N$.
\end{definition}
The following lemma establishes the equivalence between a geometric convergence in $\ipm_{L^2(\pi)}$ and $L^2(\pi)$-geometric ergodicity.
\begin{lemma}
    Suppose $P$ has invariant probability $\pi$. The following statements are equivalent:
    \begin{enumerate}
        \item $P$ is $L^2(\pi)$-geometrically ergodic;
        \item $P$ is geometrically convergent in $\ipm_{L^2(\pi)}$.
    \end{enumerate}
\end{lemma}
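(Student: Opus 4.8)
The plan is to show the two conditions are equivalent by proving the key identity that the $\ipm_{L^2(\pi)}$-distance between $\mu P^n$ and $\pi$ coincides (up to the obvious truncation at the constant function) with the $L^2(\pi)$-norm $\|\mu P^n - \pi\|_{L^2(\pi)} = \|d(\mu P^n)/d\pi - 1\|_{L^2(\pi)}$. Once such an identity is in hand, the equivalence of ``geometric convergence in $\ipm_{L^2(\pi)}$'' and ``$L^2(\pi)$-geometric ergodicity'' is immediate: any bound of the form $\alpha(\mu)\beta^n$ on one quantity is, verbatim, a bound on the other, with the constant $\alpha(\mu)$ and rate $\beta$ carried over unchanged.

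First I would set up the computation. Fix $\mu \in \cal P_\ltpi(\cal X)$ and $n \in \bb N$. If $\mu \not\ll \pi$ then $\mu P^n$ is still dominated by $\pi$ (since $\pi$ is invariant and $\mu \in \cal P_\ltpi$ guarantees $\mu P^n \in \cal P_\ltpi$), so the right quantity $g_n \coloneqq d(\mu P^n)/d\pi - 1 \in L^2_0(\pi)$ is well defined. By the definition of the $L^2(\pi)$-norm on measures, $\|\mu P^n - \pi\|_{L^2(\pi)} = \|g_n\|_{L^2(\pi)}$. On the other side, unfolding the definition of the IPM,
\begin{align*}
    \ipm_{L^2(\pi)}(\mu P^n, \pi) = \sup_{\|f\|_{L^2(\pi)}\leq 1} |(\mu P^n)(f) - \pi(f)| = \sup_{\|f\|_{L^2(\pi)}\leq 1} \left| \int_{\bb X} f \, g_n \, d\pi \right|,
\end{align*}
where I used $(\mu P^n)(f) - \pi(f) = \int f \, (d(\mu P^n)/d\pi - 1)\, d\pi = \langle f, g_n\rangle_{L^2(\pi)}$. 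This is exactly the supremum of the linear functional $f \mapsto \langle f, g_n\rangle_{L^2(\pi)}$ over the closed unit ball of $L^2(\pi)$, which by the Cauchy–Schwarz inequality (with equality at $f = g_n/\|g_n\|_{L^2(\pi)}$, a legitimate witness since $g_n \in L^2(\pi) = \mbf F_{L^2}$ after scaling) equals $\|g_n\|_{L^2(\pi)}$. Hence $\ipm_{L^2(\pi)}(\mu P^n, \pi) = \|\mu P^n - \pi\|_{L^2(\pi)}$ for every $\mu$ and $n$, and the equivalence of the two definitions follows by matching the constants $\tilde\alpha(\mu) = \alpha(\mu)$ and rates $\beta$.

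I expect the main obstacle to be the bookkeeping around absolute continuity and the measure-theoretic fine print, rather than the core duality argument. Specifically, one must check: (i) that $\mu \in \cal P_\ltpi(\cal X)$ forces $\mu P^n \ll \pi$ with $d(\mu P^n)/d\pi \in L^2(\pi)$ — this uses $\pi P = \pi$ together with the fact that $P$ acts as a contraction on $L^2(\pi)$ (operator norm $1$ by \citet[Proposition 1.6.3]{douc2018markov}, as invoked in the proof of Proposition \ref{thm: good choices of gen}), so $\|d(\mu P^n)/d\pi\|_{L^2(\pi)} \le \|d\mu/d\pi\|_{L^2(\pi)} < \infty$; (ii) that the witness-function supremum is genuinely attained in the unit ball, which is the easy direction of Cauchy–Schwarz but should be stated carefully when $g_n = 0$ (both sides are $0$); and (iii) that the convention $\|\mu\|_{L^2(\pi)} = \infty$ for $\mu \not\ll\pi$ is consistent — which is moot here since the relevant measures are always dominated by $\pi$. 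None of these is deep, but they are the places where an incautious argument could slip.
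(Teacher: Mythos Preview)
Your proposal is correct and follows essentially the same route as the paper: both establish the identity $\ipm_{L^2(\pi)}(\mu P^n,\pi)=\|\mu P^n-\pi\|_{L^2(\pi)}$ by writing $(\mu P^n)(f)-\pi(f)$ as the $L^2(\pi)$ inner product of $f$ against the (centered) Radon--Nikodym derivative and then invoking Cauchy--Schwarz with equality at the normalized density, from which the equivalence is immediate. Your treatment is slightly more explicit about the bookkeeping (absolute continuity of $\mu P^n$, the $g_n=0$ case), whereas the paper dispatches this by citing \citet[Lemma 22.1.3]{douc2018markov} for $\|\mu P\|_{L^2(\pi)}\le\|\mu\|_{L^2(\pi)}$; these are the same fact.
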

\begin{proof}
    Let $\mu\in \cal P_\ltpi(\cal X)$ be the initial distribution. Let $g_\mu = d\mu / d\pi$ be the Radon–Nikodym derivative and $g_\mu \in L^2(\pi)$. We write the inner product between $\mu$ and a real-valued function $f$ in $L^2(\pi)$ as $\langle \mu, f\rangle = \int f d\mu = \int fg_\mu d\pi = \langle g_\mu, f \rangle_{\ltpi}$.
    
    It is easy to show that $\|\mu P\|_{L^2(\pi)} \leq \|\mu\|_{L^2(\pi)}$ \citep[Lemma 22.1.3]{douc2018markov}. Thus, it follows $\mu P^n$ is dominated by $\pi$ and  $\|\mu P^n\|_\ltpi\leq \|\mu\|_\ltpi$ for $n\in\bb N$. We can write $\ipm_{L^2(\pi)}$ using the inner product form,
    \begin{align}
    \label{eqn: l2 ipm equals l2 norm}
        \ipm_\ltpi(\mu P^n, \pi) &= \sup_{f\in\mbf F_\ltpi} |\langle \mu P^n, f\rangle - \langle \pi, f \rangle| =  \sup_{f\in\mbf F_\ltpi} |\langle \mu P^n - \pi, f\rangle | \nn\\
        &= \| \mu P^n - \pi\|_\ltpi,
    \end{align}
    where the last equality is due to Cauchy–Schwarz inequality and the equality can be obtained with an appropriate choice of $f$ in $\mbf F_\ltpi$.
\end{proof}

It is obvious that the geometric convergence in $\ipm_{L^2(\pi)}$ implies the following
\begin{align}
\label{eqn: nonuniform l2 gamma}
    \Gamma_\ltpi(\mu) \coloneqq \sum_{i=1}^{\infty} \ipm_\ltpi(\mu P^i, \pi) < \infty,
\end{align}
for $\mu\in\cal P_\ltpi(\cal X)$. Equation \ref{eqn: nonuniform l2 gamma} depends on the initial distribution $\mu$, thus being less demanding than Equation \ref{eqn: gen con} used in Theorem \ref{thm: main result time dep} which requires a uniform property across the state space $\bb X$. Naturally, one would be skeptical if geometric convergence in $\ipm_{L^2(\pi)}$ leads to the same conclusion as in Theorem \ref{thm: main result time dep} where the initial distribution is arbitrary. In the rest of this section, however, we demonstrate a similar consequence (with restricted initial distribution) to that of Proposition \ref{thm: l2 hoeffding ineq} using Equation \ref{eqn: nonuniform l2 gamma}. From previous discussions, we can deduce $\Gamma_\ltpi(\mu) < \infty$ is not stronger than $\absgap_\ltpi(P) > 0$, which implies our framework generalizes that of \citet{fan2021hoeffding}. 

\begin{figure}[t]
    \centering
    \begin{tikzcd}[arrows=Rightarrow]
        \text{$\absgap_\ltpi(P) > 0$} \arrow[r]\arrow[d]
        &\text{$L^2(\pi)$-geometric ergodicity}\arrow[d, Leftrightarrow] &\ \\
        \exists m>1,\Delta_\ltpi (P^m) < 1 \arrow[ru] &\text{$\ipm_{L^2(\pi)}$ geometric convergence} \arrow[r]
         &\text{$\Gamma_\ltpi(\mu) < \infty$}
    \end{tikzcd}
    \caption{Relation between the convergence concepts in Section \ref{sec: mc converging in l2}.}
    \label{fig: Implication diagram for l2}
\end{figure}
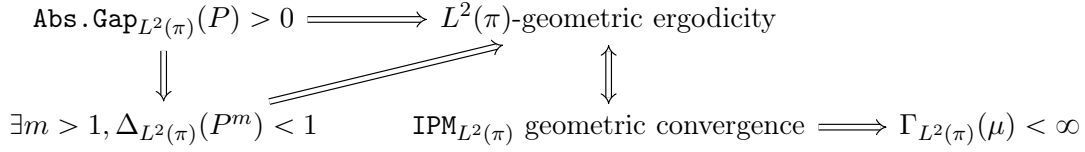

It is also possible to define the IPM Dobrushin coefficient with $\ipm_\ltpi$ which we call the \textbf{$\ltpi$-Dobrushin coefficient} and is denoted as 
\begin{align}
    \Delta_\ltpi (P) \coloneqq \sup \bigg\{\ipm_\ltpi(\xi P, \xi' P) : \xi\neq\xi' \in \cal P_\ltpi(\cal X), \ipm_\ltpi(\xi, \xi') \leq 1\bigg\}.
\end{align}
We draw the connection between $\Delta_\ltpi (P)$ and the operator norm of $P$ on $L^2_0(\pi)$ in the following lemma. From there, one can readily establish the connection with other convergence-related concepts introduced in this section. To provide a holistic view, we summarize their relations in Figure \ref{fig: Implication diagram for l2}.
\begin{lemma}
    Let $\vvvert P \vvvert_{L^2_0(\pi)} \coloneqq \sup_{f\in L^2_0(\pi)} \frac{\|Pf\|_\ltpi}{\|f\|_\ltpi}$ denote the operator norm on $L^2_0(\pi)$. It holds that $\Delta_\ltpi(P) = \vvvert P \vvvert_{L^2_0(\pi)}$.
\end{lemma}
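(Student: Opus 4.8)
The plan is to rewrite $\Delta_\ltpi(P)$ in terms of the adjoint $P^\ast$ of $P$ on the Hilbert space $\ltpi$, and then to argue that the differences of probability densities occurring in the definition of $\Delta_\ltpi(P)$ already exhaust the unit ball of $\ltzpi$.

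First I would record the density identity: arguing exactly as in the Cauchy--Schwarz computation behind Equation~\ref{eqn: l2 ipm equals l2 norm}, for probability measures $\mu,\nu$ dominated by $\pi$ with densities $g_\mu,g_\nu\in\ltpi$ one has $\ipm_\ltpi(\mu,\nu)=\|g_\mu-g_\nu\|_\ltpi$, and $g_\mu-g_\nu\in\ltzpi$ because both densities integrate to $1$ against $\pi$. Next I would identify the pushforward: since $\|\mu P\|_\ltpi\le\|\mu\|_\ltpi$ \citep[Lemma~22.1.3]{douc2018markov}, $\xi P$ is dominated by $\pi$ for every $\xi\in\cal P_\ltpi(\cal X)$, and $(\xi P)(f)=\xi(Pf)=\langle g_\xi,Pf\rangle_\ltpi=\langle P^\ast g_\xi,f\rangle_\ltpi$ for all $f\in\ltpi$, so $d(\xi P)/d\pi=P^\ast g_\xi$. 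Combining these, $\ipm_\ltpi(\xi,\xi')=\|g_\xi-g_{\xi'}\|_\ltpi$ and $\ipm_\ltpi(\xi P,\xi'P)=\|P^\ast(g_\xi-g_{\xi'})\|_\ltpi$, so that $\Delta_\ltpi(P)=\sup\{\|P^\ast h\|_\ltpi:h\in D,\ 0<\|h\|_\ltpi\le 1\}$, where $D:=\{g_\xi-g_{\xi'}:\xi,\xi'\in\cal P_\ltpi(\cal X)\}\subseteq\ltzpi$.

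The key step is then to show that $D$ contains every $h\in\ltzpi$ with $\|h\|_\ltpi\le 1$. Given such $h$, write $h=h^+-h^-$; from $\pi(h)=0$ we get $c:=\pi(h^+)=\pi(h^-)$, and Cauchy--Schwarz gives $c\le\|h^+\|_\ltpi\le\|h\|_\ltpi\le 1$; then $g:=(1-c)+h^+$ and $g':=(1-c)+h^-$ are nonnegative, lie in $\ltpi$, integrate to $1$ against $\pi$, and satisfy $g-g'=h$, so $h=g_\xi-g_{\xi'}$ for $\xi=g\pi,\ \xi'=g'\pi\in\cal P_\ltpi(\cal X)$. With this in hand the supremum over $D$ equals the supremum over the unit ball of $\ltzpi$, namely $\vvvert P^\ast\vvvert_{\ltzpi}$: the bound $\ipm_\ltpi(\xi P,\xi'P)\le\vvvert P^\ast\vvvert_{\ltzpi}\,\ipm_\ltpi(\xi,\xi')$ gives ``$\le$'' and the realization above gives ``$\ge$''. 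Finally I would note $\vvvert P^\ast\vvvert_{\ltzpi}=\vvvert P\vvvert_{\ltzpi}$: $\ltzpi$ is a closed subspace of $\ltpi$ invariant under $P$ (since $\pi(Pf)=\pi(f)$) and under $P^\ast$ (since $P^\ast$ fixes the constant function $1$, again by invariance of $\pi$), hence $P^\ast|_{\ltzpi}$ is the Hilbert-space adjoint of $P|_{\ltzpi}$ and the two operators have equal norm.

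I expect the realizability claim---that $D$ covers the unit ball of $\ltzpi$---to be the only genuinely substantive step: the point is that the normalization built into ``probability density'' is free, precisely because the estimate $\pi(h^+)\le\|h\|_\ltpi\le 1$ is exactly what permits padding $h^+$ and $h^-$ by the same nonnegative constant to turn them into densities. Everything else is routine bookkeeping with Cauchy--Schwarz and the adjoint, together with the standard fact that $P$ (hence $P^\ast$) is a contraction on $\ltpi$ \citep[Proposition~1.6.3]{douc2018markov}.
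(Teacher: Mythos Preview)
Your approach is essentially the paper's: both rewrite $\ipm_\ltpi$ as an $\ltpi$-norm of density differences via Cauchy--Schwarz, pass to the adjoint $P^\ast$ acting on densities, identify the supremum with $\vvvert P^\ast\vvvert_{\ltzpi}$, and then conclude via $\vvvert P^\ast\vvvert_{\ltzpi}=\vvvert P\vvvert_{\ltzpi}$. Your treatment is more self-contained---in particular, the explicit construction $g=(1-c)+h^+$, $g'=(1-c)+h^-$ makes rigorous the step the paper merely asserts (``$\xi-\xi'$ can be any signed measure'')---whereas the paper outsources the adjoint identity and the norm equality to \citet{douc2018markov}.

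One small slip in your final step: the invariance of $\ltzpi$ under $P^\ast$ does not follow from $P^\ast 1=1$; an operator fixing a vector need not preserve its orthogonal complement. The correct justification is $P1=1$ (the Markov-kernel property), since then $\langle 1,P^\ast h\rangle_\ltpi=\langle P1,h\rangle_\ltpi=\langle 1,h\rangle_\ltpi=0$ for $h\in\ltzpi$. With that correction your adjoint-norm argument goes through.
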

\begin{proof}
    Using Equation \ref{eqn: l2 ipm equals l2 norm}, we can deduce that 
    \begin{align*}
       \Delta_\ltpi (P) = \sup \bigg\{\|(\xi -\xi') P\|_\ltpi : \xi\neq\xi' \in \cal P_\ltpi(\cal X), \|\xi - \xi'\|_\ltpi \leq 1\bigg\}.
    \end{align*}
    Since $\xi$ and $\xi'$ are arbitrary positive measures in $\cal P_\ltpi(\cal X)$, $\xi - \xi'$ can be any signed measure in $\cal P_\ltpi(\cal X)$ and $\int \frac{d(\xi-\xi')}{d\pi} d\pi = 0$. Let $P^*$ be the adjoint operator of $P$, i.e., $\langle Pf, g\rangle_\ltpi = \langle f, P^*g\rangle_\ltpi$ for $f, g\in \ltpi$. By Proposition 22.1.5 from \citet{douc2018markov}, we have the $\xi P = P^*(d\xi / d\pi)$. Thus, we can deduce that
    \begin{align*}
         &\sup \bigg\{\|(\xi -\xi') P\|_\ltpi : \xi\neq\xi' \in \cal P_\ltpi(\cal X), \|\xi - \xi'\|_\ltpi \leq 1\bigg\}\\
         =& \sup \bigg\{\bigg\|P^*\frac{d\xi -d\xi'}{d\pi}\bigg\|_\ltpi : \xi\neq\xi' \in \cal P_\ltpi(\cal X), \bigg\|\frac{d(\xi -\xi')}{d\pi}\bigg\|_\ltpi \leq 1\bigg\}\\
         =& \vvvert P^*\vvvert_{L^2_0(\pi)} = \vvvert P\vvvert_{L^2_0(\pi)},
    \end{align*}
    where the last equality is a result of combining Theorem 22.1.8 and Lemma 22.2.1 from \citet{douc2018markov}.
\end{proof}
\begin{remark}
    Combining \citet[Proposition 22.2.4]{douc2018markov} and the above lemma, it follows that $\absgap_\ltpi(P) > 0$ implies there exists $m>1$ such that $\Delta_\ltpi^2(P^m) < 1$ which further implies $\ltpi$-geometric ergodicity. 
\end{remark}

Similar to Equation \ref{eqn: dobrushin con}, we can write  $\tilde\Gamma_\ltpi \coloneqq \sum_{i=0}^\infty \Delta_\ltpi(P^i)$. It is worth noticing that $\absgap_\ltpi(P) > 0$ also leads to $\tilde\Gamma_\ltpi < \infty$ after realizing $\vvvert P^n\vvvert_\ltzpi \leq \|P^m\|_\ltzpi^{\lfloor n/m\rfloor}$, for $n\geq m$. By conditioning on $\tilde\Gamma_\ltpi < \infty$, it is possible to obtain a Hoeffding-type inequality following the steps in the proof of Theorem \ref{thm: main result dobrushin}. However, to avoid repetitiveness, we conclude this section with the variant that uses Equation \ref{eqn: nonuniform l2 gamma} in the following corollary. The proof is omitted as it is a special case of Theorem \ref{thm: main result dobrushin}.  

\begin{corollary}
    Let $\{X_n\}_{n\in\bb{N}} \subset\bb{X}$ be a Markov chain with transition kernel $P: \bb{X}\times\cal X\to[0, 1]$. For initial distribution $\mu \in \cal P_\ltpi(\cal X)$, assume that the Markov kernel $P$ admits a unique invariant probability measure $\pi$ such that Equation \ref{eqn: nonuniform l2 gamma} holds. For $i\in\{0, \cdots, n-1\}$ and $n\in\bb{Z}_+$, given $f_i$ with bounded span and $f_i\in M_i \cdot \mbf F$ with minimal stretch $M_i=\|f_i\|_\ltpi$. Let $\tilde{S}_{n} \coloneqq \sum_{i=0}^{n-1} f_i(X_i)$. Then, for $\epsilon >  n^{-1}[M_1^{\max}\Gamma_\ltpi(\mu) + \sp(f_0)]$, we have
    \begin{align}
    \label{eqn: Hoeffding ineq l2 ipm}
        \bb{P}_\mu\bigg[\bigg|\Tilde{S}_n - \sum_{i=0}^{n-1} \pi(f_i)\bigg| > n\epsilon\bigg]\leq 2\exp\bigg(-\frac{2\{n\epsilon - [M_1^{\max}\Gamma_\ltpi(\mu) + \sp(f_0)]\}^2}{\sum_{i=0}^{n-1} [M_{i+1}^{\max}\Gamma_\ltpi(\mu) + \sp(f_i)]^2}\bigg),
    \end{align}
    where $M_i^{\max} = \max\{M_i, \cdots, M_{n-1}\}$ for $i\in\{0, \cdots, n-1\}$ and $M_i^{\max} = 0$ for $i\geq n$.
\end{corollary}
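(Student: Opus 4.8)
The plan is to re-run the martingale decomposition underlying Theorem~\ref{thm: main result time dep} (equivalently, the argument behind Theorem~\ref{thm: main result dobrushin}) for the specific generator $\mbf F=\mbf F_\ltpi=\bb B_\ltpi$, but with the state-uniform concentrability constant replaced by the initial-law version $\Gamma_\ltpi(\mu)$ of Equation~\ref{eqn: nonuniform l2 gamma}. One cannot simply quote Theorem~\ref{thm: main result time dep}: for $\mbf F_\ltpi$ the quantity $\sup_{x}\sum_i\ipm_\ltpi(P^i(x,\cdot),\pi)$ is typically infinite, since Dirac measures need not lie in $\cal P_\ltpi(\cal X)$, which is exactly why the statement restricts to $\mu\in\cal P_\ltpi(\cal X)$. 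The facts that make the substitution legitimate are: (i) by Proposition~\ref{thm: good choices of gen}, $\mbf F_\ltpi$ is stable by $P$, so $P$ is nonexpansive in $\ipm_\ltpi$ and every iterate $\mu P^i$ stays in $\cal P_\ltpi(\cal X)$; (ii) $\ipm_\ltpi(\mu P^i,\pi)=\|\mu P^i-\pi\|_\ltpi$ by Equation~\ref{eqn: l2 ipm equals l2 norm}; and (iii) since $M_i=\|f_i\|_\ltpi$, the function $f_i/M_i$ lies in $\mbf F_\ltpi$, so $|\nu(f_i)-\pi(f_i)|\le M_i\,\ipm_\ltpi(\nu,\pi)$ for every $\nu\in\cal P_\ltpi(\cal X)$.

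I would then introduce the finite-horizon correctors $g_i(x)=\sum_{k=i}^{n-1}\big((P^{k-i}f_k)(x)-\pi(f_k)\big)$ (with $g_n\equiv0$), which satisfy $\pi(g_i)=0$ and the one-step identity $f_i-\pi(f_i)=g_i-Pg_{i+1}$. Summing this identity along the chain and taking conditional expectations telescopes it to $\tilde S_n-\sum_{i=0}^{n-1}\pi(f_i)=\mu(g_0)+\sum_{i=0}^{n-1}\bar\Delta_i$, where $\bar\Delta_0=g_0(X_0)-\mu(g_0)$ and $\bar\Delta_i=g_i(X_i)-(Pg_i)(X_{i-1})$ for $i\ge1$ form a martingale difference sequence for the natural filtration. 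The deterministic bias is handled by (ii) and (iii): peeling off the $k=0$ term gives $|\mu(g_0)|\le\sp(f_0)+\sum_{k=1}^{n-1}M_k\,\ipm_\ltpi(\mu P^k,\pi)\le\sp(f_0)+M_1^{\max}\Gamma_\ltpi(\mu)$, which is exactly the centering offset in Equation~\ref{eqn: Hoeffding ineq l2 ipm}.

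What remains is to bound each increment by $A_i:=M_{i+1}^{\max}\Gamma_\ltpi(\mu)+\sp(f_i)$ and then invoke Hoeffding's lemma for bounded martingale differences to obtain $\bb{P}_\mu[|\sum_{i=0}^{n-1}\bar\Delta_i|>t]\le2\exp(-2t^2/\sum_{i=0}^{n-1}A_i^2)$; choosing $t=n\epsilon-\mu(g_0)$, which is admissible once $n\epsilon>M_1^{\max}\Gamma_\ltpi(\mu)+\sp(f_0)$, and combining with the bias bound yields Equation~\ref{eqn: Hoeffding ineq l2 ipm}; the time-independent statement is the case $f_i\equiv f$. I expect the increment estimate to be the main obstacle. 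Writing $\bar\Delta_i=[f_i(X_i)-(Pf_i)(X_{i-1})]+[(Pg_{i+1})(X_i)-(P^2g_{i+1})(X_{i-1})]$, the first bracket lies in an interval of length $\sp(f_i)$ because $f_i$ and $Pf_i$ take values in $[a_i,b_i]$, while the second bracket is a one-step fluctuation of $Pg_{i+1}=\sum_{k=i+1}^{n-1}(P^{k-i}f_k-\pi(f_k))$ whose magnitude must be bounded by the $\ipm_\ltpi$-sized defects of the iterates $P^{k-i}f_k$ from their $\pi$-averages; by the stability of $\mbf F_\ltpi$ and $M_k=\|f_k\|_\ltpi$ these defects telescope so that the bracket is controlled by $M_{i+1}^{\max}\Gamma_\ltpi(\mu)$ along $\bb{P}_\mu$-typical trajectories. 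This is the $L^2(\pi)$ analogue of the span lemma behind the proof of Theorem~\ref{thm: main result time dep}, and it is the delicate step: since $L^2(\pi)$ correctors need not be bounded in sup-norm once Dirac starts are permitted, one cannot use a plain oscillation bound, and the almost-sure increment estimate has to be extracted using the restriction $\mu\in\cal P_\ltpi(\cal X)$ together with the nonexpansiveness of $P$ in $\ipm_\ltpi$, which keeps every law appearing along the trajectory inside $\cal P_\ltpi(\cal X)$ so that the relevant IPM comparisons are both finite and summable.
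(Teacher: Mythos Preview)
Your overall architecture---the martingale decomposition via finite-horizon correctors $g_i$, the bias estimate for $\mu(g_0)$, and then Hoeffding's lemma applied to the increments---is exactly the skeleton the paper uses (the paper simply refers back to Theorem~\ref{thm: main result dobrushin} and its span lemma, Lemma~\ref{lem: span of g_i alt}). Your treatment of the bias term $|\mu(g_0)|\le \sp(f_0)+M_1^{\max}\Gamma_\ltpi(\mu)$ is fine: here only the marginals $\mu P^k$ enter, and those stay in $\cal P_\ltpi(\cal X)$ by nonexpansiveness, so the $\ipm_\ltpi$ comparisons are legitimate.

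The gap is exactly where you flag it, and the fix you sketch does not close it. Hoeffding's lemma requires that, \emph{conditional on} $\cal F_{i-1}$, the increment $\bar\Delta_i$ lies in an interval of deterministic length $A_i$. Unwinding the definition, this is a pointwise oscillation bound on $x\mapsto g_i(x)$: one needs to control
\[
\sup_{x,x'}\Big|\sum_{l=i+1}^{n-1}\big(P^{l-i}f_l(x)-P^{l-i}f_l(x')\big)\Big|
\;\le\;\sum_{l=i+1}^{n-1}M_l\,\ipm_\ltpi(\delta_xP^{l-i},\delta_{x'}P^{l-i}),
\]
with $x,x'$ ranging over the support of $P(X_{i-1},\cdot)$. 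These are IPM distances between chains started from \emph{Dirac} initial conditions, and they are not dominated by $\Gamma_\ltpi(\mu)=\sum_j\ipm_\ltpi(\mu P^j,\pi)$, which controls only the marginals of the chain started from $\mu$. The restriction $\mu\in\cal P_\ltpi(\cal X)$ keeps $\mu P^i\in\cal P_\ltpi(\cal X)$, but it says nothing about the regular conditional laws $P^j(X_{i-1},\cdot)$ being close to $\pi$ in $\ipm_\ltpi$ uniformly over realizations of $X_{i-1}$. Your phrase ``these defects telescope so that the bracket is controlled by $M_{i+1}^{\max}\Gamma_\ltpi(\mu)$ along $\bb P_\mu$-typical trajectories'' is where the argument breaks: there is no telescoping that converts marginal control into the conditional, almost-sure control that Hoeffding's lemma demands.

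The paper's route avoids this entirely by using the Dobrushin coefficient in the span lemma (Lemma~\ref{lem: span of g_i alt}): one bounds $\ipm_\ltpi(\delta_xP^{l-i},\delta_{x'}P^{l-i})\le \ipm_\ltpi(\delta_x,\delta_{x'})\,\Delta_\ltpi(P^{l-i})$ and then uses that $\ipm_\ltpi(\delta_x,\delta_{x'})\le 2$ for the generators of Proposition~\ref{thm: good choices of gen}. This produces a state-uniform span bound $\tilde A_i$, which is what Hoeffding's lemma actually consumes. If you want the final inequality to display the $\mu$-dependent constant $\Gamma_\ltpi(\mu)$, you still need a state-uniform ingredient at the increment step; the hypothesis $\Gamma_\ltpi(\mu)<\infty$ alone does not supply one.
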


\subsection{Non-Ergodic MC}

In this section, we discuss the Markov chains that are not covered by the aforementioned ergodicity conditions, namely, non-irreducible chains and periodic chains. 

\subsubsection{MC converging in Wasserstein metric}
\label{sec: non-irreducible chains}
Hoeffding's inequality for Markov chains converging in the Wasserstein metric is studied in \citet{sandric2021hoeffding} which we review in this section. Their hypothesis includes certain non-irreducible chains. 
From the definition of the small set and Doeblin's condition, we can see that they enforce the probability measure $P(x, \cdot)$ to be non-singular for $x$ in at least some subset of the state space $\bb{X}$. Since small sets and Doeblin's condition play a central role in the ergodicity of Markov chains, one can make the educated guess that the ergodicity may fail when $P(x, \cdot)$ is singular. In fact, the real problem here is caused by the TV metric. For example, if we consider two Dirac measures $\delta_0$ and $\delta_{1/n}$ for $n\in\bb{Z}_+$, it is easy to see that $\delta_{1/n}$ does not converge to $\delta_0$ in TV metric as $\tv(\delta_{1/n}, \delta_0) = 1$ for all $n\in\bb{Z}_+$. Nevertheless, $\delta_{1/n} \to \delta_0$ in some weaker sense which is to be specified later. This ``incompatibility" with singular measures raises problems when one tries to apply Markov chain theory to deterministic systems or systems with discrete noise. The solution to the issue is to consider weaker metrics that induce coarser topologies in the space of probability measures. One of the notable alternatives researchers have considered is the \textit{Wasserstein metric}. For Polish space $(\bb X, \cal X, \msf d)$, define the Wasserstein-($p,\msf d$) metric on $\cal P(\cal X)$ as
\begin{align}
    \wass_{p, \msf d}(\mu, \nu) \coloneqq \bigg[\inf_{\Pi\in\ALP C(\mu, \nu)} \int_{\bb{X}\times\bb{X}}
    \mathsf d(x, y)^p \Pi(dx, dy)\bigg]^{1/p},
\end{align}
where $\ALP C(\mu, \nu)$ denotes the family of couplings of $\mu$ and $\nu$. For Wasserstein-1 metric in particular ($p = 1$), it holds, by Kantorovich-Rubinstein duality theorem \cite{dudley2018real}, that
\begin{align}
    \wass_{1, \msf d}(\mu, \nu) = \sup_{\{f:\lip(f) \leq 1\}} \bigg|\int_{\bb{X}} f d\mu - \int_{\bb{X}} f d\nu\bigg|,
\end{align}
whenever the integrals are well-defined. Note that the presence of metric $\msf d$ is contained in the Lipschitz condition of the witness function $f$, i.e., $|f(x)-f(y)| \leq \lip(f)\msf d(x, y)$, for $x,y\in\bb X$. Sufficient conditions for geometric and subgeometric convergence for Markov chains in Wasserstein-1 metric have been established in \citet{hairer2011asymptotic} and \citet{butkovsky2014subgeometric}, respectively. We borrow the following example from \citet{butkovsky2014subgeometric} to illustrate the necessity of using the Wasserstein metric. 

\begin{example}[\citealp{butkovsky2014subgeometric}]
\label{exa: non-irreducible mc}
    Let $\boldsymbol X = \{X_n\}_{n\geq 0}$ be an autoregressive process satisfying the following equation:
    \begin{align}
        X_{n+1} = \frac{1}{10} X_{n} + \epsilon_{n+1},
    \end{align}
    where $\epsilon_1, \epsilon_2, \cdots$ are i.i.d. random variables uniformly distributed on the set $\{0, \frac{1}{10}, \frac{2}{10}, \cdots, \frac{9}{10}\}$ and independent of $X_0$. $X_0$ is uniformly distributed on $[0, 1)$. Then, $\boldsymbol{X}$ is a Markov chain with state space $(\bb{X}, \cal X) = ([0, 1),\ALP B([0, 1)))$ with a unique invariant measure $\pi$ which is the uniform distribution on $[0, 1)$. Let $d(x, y) = |x-y|$ be the metric on $\bb{X}$. One can show \citep{butkovsky2014subgeometric} the following facts:
    \begin{enumerate}
        \item $\boldsymbol{X}$ converge to $\pi$ in Wasserstien-(1, $\msf d$) metric and $\wass_{1, \msf d}(P^n(x, \cdot), \pi(\cdot)) \leq 10^{-n}$.
        \item $\tv(P^n(x, \cdot), \pi(\cdot)) = 1$ for $\forall n\in\bb{N}$.
        \item $\boldsymbol{X}$ is not $\psi$-irreducible. Consider $\bb{Q}\cap[0, 1)$ and $\bb{Q}^c\cap[0, 1)$ which are disjoint and absorbing. 
    \end{enumerate}
    Since $\boldsymbol{X}$ is not ergodic in the usual sense, Proposition \ref{thm: doebline Hoeffding inequality}, \ref{thm: tv d coeff Hoeffding inequality}, \ref{thm: v hoeffding ineq}, and \ref{thm: l2 hoeffding ineq} cannot be used. 
\end{example}
Along the same line, one can deduce that systems with stable dynamics and additive discrete noise are excluded by the usual ergodicity condition as well. To bridge this gap, \citet{sandric2021hoeffding} has made a step in this direction by giving a version of Hoeffding's inequality for non-irreducible Markov chains. We copy their theorem below for completeness.
\begin{prop}[\citealp{sandric2021hoeffding}]
\label{thm: wasserstein hoeffding ineq}
    Let $f: \bb{X} \to \bb{R}$ be bounded and Lipschitz continuous, and let $S_n \coloneqq \sum_{i=0}^{n-1} f(X_i)$ for $n\in\bb{Z}_+$. Assume that $\{X_n\}_{n\geq0}$ admits an invariant probability measure $\pi(dx)$ such that
    \begin{align}
    \label{eqn: gamma for wasserstein}
        \Gamma_{\wass}\coloneqq \sup_{x\in\bb{X}}\sum_{n=1}^\infty \wass(P^n(x, \cdot), \pi(\cdot)) < \infty
    \end{align}
    Then for any $\epsilon > 2n^{-1}\lip(f)\Gamma_{\wass}$,
    \begin{align}
    \label{eqn: tail bound wasserstein}
        \bb{P}_x[|S_{n} - n\pi (f)| > n\epsilon] \leq 2\exp\bigg\{\frac{-(n\epsilon - 2\lip(f)\Gamma_{\wass})^2}{8n(\lip(f)\Gamma_{\wass} + \|f\|_\infty)^2}\bigg\}
    \end{align}
\end{prop}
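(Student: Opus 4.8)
The plan is to obtain Proposition~\ref{thm: wasserstein hoeffding ineq} as a direct instance of Theorem~\ref{thm: main result time dep}. Take $\mbf F=\{f\in\bb M(\bb X):\lip(f)\le 1\}$, understood as its maximal generator. By the Kantorovich--Rubinstein duality recalled above, $\ipm_{\mbf F}=\wass_{1,\msf d}$, so the concentrability constant of Equation~\ref{eqn: gen con} associated with this $\mbf F$ is precisely the quantity $\Gamma_{\wass}$ of Equation~\ref{eqn: gamma for wasserstein}, which is finite by hypothesis. If $\lip(f)>0$ (when $\lip(f)=0$ one has $S_n=n\pi(f)$ almost surely and there is nothing to prove), the minimal stretch of $\mbf F$ needed to contain $f$ is $M=\lip(f)$, and $\sp(f)\le 2\|f\|_\infty$. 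Applying Theorem~\ref{thm: main result time dep} with $f_i\equiv f$ (so that $\tilde S_n=S_n$ and $M_i^{\max}=\lip(f)$) immediately yields a tail bound of the shape of Equation~\ref{eqn: tail bound wasserstein}; replacing $\sp(f)$ by $2\|f\|_\infty$ throughout and absorbing the slack into the universal constant in the exponent puts it into the stated form.

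\textbf{Self-contained route (the specialization of the proof of Theorem~\ref{thm: main result time dep}).} If one prefers an argument from first principles, set $h\coloneqq\sum_{i=1}^{\infty}\big(P^i f-\pi(f)\big)$. Since $f/\lip(f)$ lies in the unit Lipschitz ball, Kantorovich--Rubinstein gives $|P^i f(x)-\pi(f)|\le\lip(f)\,\wass_{1,\msf d}\big(P^i(x,\cdot),\pi\big)$, so the series converges uniformly with $\|h\|_\infty\le\lip(f)\,\Gamma_{\wass}$; a direct computation then shows $g\coloneqq f-\pi(f)+h$ solves the Poisson equation $g-Pg=f-\pi(f)$. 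With $\cal F_i=\sigma(X_0,\dots,X_i)$ and $Pg(X_{i-1})=\bb E[g(X_i)\mid\cal F_{i-1}]$, telescoping yields
\[
S_n-n\pi(f)=\sum_{i=0}^{n-1}\big(g(X_i)-Pg(X_i)\big)=R_n+\sum_{i=1}^{n}D_i,\qquad D_i\coloneqq g(X_i)-\bb E[g(X_i)\mid\cal F_{i-1}],
\]
where $(D_i)$ is a martingale difference sequence, each $D_i$ having conditional range (given $\cal F_{i-1}$) at most $\osc(g)\le 2\big(\|f\|_\infty+\lip(f)\Gamma_{\wass}\big)$, and the boundary term $R_n$ is controlled deterministically by the oscillations of $f$ and $h$. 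Applying the conditional Hoeffding lemma $n$ times gives $\bb E_x[e^{r\sum_i D_i}]\le\exp\!\big(nr^2\osc(g)^2/8\big)$, and a Chernoff bound combined with the control of $R_n$ and a union bound over the two tails produces, for $n\epsilon$ exceeding the remainder bound, a two-sided estimate of exactly the claimed Hoeffding form; tracking the oscillations of $g$ and $R_n$ carefully reproduces the explicit constants in Equation~\ref{eqn: tail bound wasserstein}.

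\textbf{Where the difficulty lies.} The only non-routine step is the construction of the bounded Poisson solution: one has to turn the summable Wasserstein convergence $\Gamma_{\wass}<\infty$ into a uniform bound on $h$, and this works precisely because $\wass_{1,\msf d}$ controls integrals of Lipschitz functions. This is also the structural reason why $f$ must be assumed Lipschitz rather than merely bounded, and why the denominator unavoidably carries a $\|f\|_\infty$ term in addition to $\lip(f)\Gamma_{\wass}$: the boundary term $R_n$ and the part of each $D_i$ coming from $f$ itself involve $\osc(f)$, on which $\wass_{1,\msf d}$ gives no leverage. Everything after the Poisson step --- telescoping, the conditional Hoeffding lemma, the Chernoff optimization --- is standard, and the argument goes through verbatim for an arbitrary initial distribution, not just a Dirac one, exactly as in Theorem~\ref{thm: main result time dep}.
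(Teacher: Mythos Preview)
Your proposal is correct. Part~1 is exactly the paper's route: the paper states immediately after Theorem~\ref{thm: main result time dep} that one recovers Sandri\'c's result by taking $\mbf F=\{f:\lip(f)\le 1\}$, and it does not provide a separate proof of Proposition~\ref{thm: wasserstein hoeffding ineq} beyond this reduction (the statement is simply quoted from \citet{sandric2021hoeffding}).

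Your Part~2 takes a slightly different decomposition from the one underlying Theorem~\ref{thm: main result time dep}. You build the infinite-horizon Poisson solution $h=\sum_{i\ge 1}(P^if-\pi(f))$ and telescope once; the paper's proof of Theorem~\ref{thm: main result time dep} instead uses the finite-horizon functions $g_i(x_0^i)=\sum_{l\le i}f_l(x_l)+\sum_{l>i}\bb E_{x_i}[f_l(X_l)]$ from \citet{douc2018markov} (Lemma~\ref{lem: martingale decomp of mc}) and bounds the span of each $g_i$ separately (Lemma~\ref{lem: span of g_i}). Both yield the same martingale-plus-remainder structure and the same conditional Hoeffding step; your Poisson approach is closer to Sandri\'c's original argument and to \citet{glynn2002hoeffding}, while the $g_i$ construction is what lets the paper handle time-dependent $f_i$ uniformly. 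Either way the constants you obtain will be at least as good as those in Equation~\ref{eqn: tail bound wasserstein} (cf.\ Remark~\ref{rmk: gamma tv}), so the ``absorbing slack'' you mention is genuine slack, not a fudge.
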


\begin{remark}
    The term $\lip(f)\Gamma_{\wass} + \|f\|_\infty$ in the denominator is not squared in the original statement of \citet[Theorem 1]{sandric2021hoeffding}. This term should be squared when \citet[Lemma 8.1]{devroye2013probabilistic} is applied but it is missing in their original proof.
\end{remark}

With our framework, we present the following corollary which achieves the same concentration result as in Proposition \ref{thm: wasserstein hoeffding ineq} but under a slightly more general condition. We notice that Proposition \ref{thm: wasserstein hoeffding ineq} considers bounded Lipschitz functions applied to the Markov chain, which indicates that the convergence in the Wasserstein-1 metric is stronger than needed. As pointed out in Remark \ref{rmk: main insight} and \ref{rmk: v is too strong}, the generator of the IPM should be large enough to cover the function of interests and anything more is unnecessary. In this case, the appropriate IPM generator is bounded Lipschitz functions which is smaller than that of Wasserstein-1 metric, resulting in a weaker convergence requirement. Let $\|\cdot\|_{\bl}$ denote the bounded Lipschitz norm and is given by
\begin{align*}
    \|f\|_{\bl} = \max \bigg\{ \|f\|_{\infty}, \lip(f)\bigg\}.
\end{align*}
Consider the generalized concentrability condition defined with the generator $\mbf F_\bl = \{f\in\bb M(\bb X): \|f\|_{\bl}\leq 1\}$. Let the IPM generated by $\mbf F_\bl$ be denoted as $\bl$. We define $\Gamma_\bl$ using $\bl$ in a similar fashion as Equation \ref{eqn: gen con}. The following corollary is given under the condition $\Gamma_\bl < \infty$. 


\begin{corollary}
    \label{thm: Hoeffding ineq wass bdd space}
    Let $\boldsymbol X = \{X_n\}_{n\in\bb{N}} \subset\bb{X}$ be a Markov chain with transition kernel $P: \bb{X}\times\cal X$. Assume $P$ admits a unique invariant probability measure $\pi$ such that
    \begin{align}
    \label{eqn: hoeff ineq wass cond bdd}
        \Gamma_{\bl} \coloneqq \sup_{x\in\bb X}\sum_{i=0}^\infty \bl(P^i(x, \cdot), \pi) < \infty.
    \end{align} 
    Suppose $\boldsymbol X$ initializes with $\mu\in\cal P(\cal X)$. Consider a bounded Lipschitz continuous function $f:\bb X\to \bb R$ and $S_n = \sum_{i=0}^{n-1}f(X_i)$ for $n\in\bb Z_+$. Then, for $\epsilon > (2n)^{-1}\bl(\mu, \pi) [2\|f\|_{\bl}\Gamma_\bl+\sp(f)]$, we have
    \begin{align}
    \label{eqn: Hoeffding ineq wass bdd}
        \bb{P}_\mu\bigg[\bigg|S_n - n\pi(f)\bigg| > n\epsilon\bigg]\leq 2\exp\bigg(-\frac{2[n\epsilon - \bl(\mu, \pi)[2\|f\|_{\bl}\Gamma_{\bl}+\sp(f)]/2]^2}{n[2\|f\|_{\bl}\Gamma_{\bl}+\sp(f)]^2}\bigg).
    \end{align}
\end{corollary}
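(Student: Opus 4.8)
<br>

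The plan is to derive Corollary~\ref{thm: Hoeffding ineq wass bdd space} as a special case of Theorem~\ref{thm: main result dobrushin}, exploiting the fact that $\mbf F_\bl$ is one of the admissible generators and that $\bl$-convergence behaves like a nonexpansive IPM. First I would observe that $\mbf F_\bl = \bb B_{\unif} \cap \{f : \lip(f) \le 1\}$ and that $\mbf F_\bl$ is stable by $P$ in the sense of Definition~\ref{def: stability of gen}: for $f \in \mbf F_\bl$ we have $\|Pf\|_\infty \le \|f\|_\infty \le 1$ exactly as in the proof of Proposition~\ref{thm: good choices of gen}(1), and $\lip(Pf) \le \lip(f) \le 1$ because $|Pf(x) - Pf(y)| = |\int f(z)(P(x,dz) - P(y,dz))| \le \wass_1(P(x,\cdot), P(y,\cdot))$, which in a well-behaved (e.g.\ nonexpansive) system is at most $\msf d(x,y)$; more directly, $Pf(x)-Pf(y) = \int (f(z)-f(z'))\,\Pi(dz,dz')$ for any coupling $\Pi$, and since $\lip(f)\le 1$ this is dominated by $\wass_1$. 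Hence $P$ is nonexpansive in $\bl$. Consequently $\Delta_\bl(P) \le 1$, and the generalized concentrability constant $\Gamma_\bl$ of Equation~\ref{eqn: hoeff ineq wass cond bdd} together with nonexpansiveness places us squarely in the setting where Theorem~\ref{thm: main result dobrushin} (or its precursor Theorem~\ref{thm: main result time dep} / Theorem~\ref{thm: main result time indep}) applies.

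Next I would specialize to the time-independent case $f_i = f$ for all $i$, with minimal stretch $M_i = M = \|f\|_\bl$ (since $f/\|f\|_\bl \in \mbf F_\bl$ and no smaller scaling works). The cleanest route is to invoke Theorem~\ref{thm: main result dobrushin} with $\mbf F = \mbf F_\bl$, $n$ copies of the same $f$, $\tilde\Gamma_\bl = \sum_{i\ge 0}\Delta_\bl(P^i)$, and then bound $\tilde A_i$. With $M_l = \|f\|_\bl$ and $\Delta_\bl(P^k) \le 1$, we get
\begin{align*}
\tilde A_i = 2\|f\|_\bl \sum_{l=i}^{n-1}\Delta_\bl(P^{l-i}) + \sp(f) \le 2\|f\|_\bl \sum_{k=0}^\infty \Delta_\bl(P^k) + \sp(f) = 2\|f\|_\bl\,\tilde\Gamma_\bl + \sp(f).
\end{align*}
By Proposition~\ref{eqn: ipm dobrushin bounds concentrability constant}, $\sum_{i\ge 0}\bl(\mu P^i, \pi) \le \bl(\mu,\pi)\,\tilde\Gamma_\bl$, and more to the point $\Gamma_\bl = \sup_x \sum_i \bl(P^i(x,\cdot),\pi) \le \sup_x \bl(\delta_x,\pi)\tilde\Gamma_\bl \le 2\tilde\Gamma_\bl$; actually I would go the other direction and bound things directly by $\Gamma_\bl$ rather than $\tilde\Gamma_\bl$ since the corollary is stated in terms of $\Gamma_\bl$. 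The reconciliation is that $\sum_{l=i}^{n-1} M_l \Delta_\bl(P^{l-i})$ can also be controlled by a term of order $\|f\|_\bl \Gamma_\bl$ via the same telescoping/nonexpansiveness argument used in the proof of Theorem~\ref{thm: main result time dep}, yielding the uniform bound $2\|f\|_\bl\Gamma_\bl + \sp(f)$ on each $\tilde A_i$. Plugging the common bound into the denominator $\sum_{i=0}^{n-1}\tilde A_i^2 \le n[2\|f\|_\bl\Gamma_\bl + \sp(f)]^2$ and the numerator $n\epsilon - \bl(\mu,\pi)\tilde A_0/2 \ge n\epsilon - \bl(\mu,\pi)[2\|f\|_\bl\Gamma_\bl+\sp(f)]/2$ gives exactly Equation~\ref{eqn: Hoeffding ineq wass bdd}, with the stated threshold $\epsilon > (2n)^{-1}\bl(\mu,\pi)[2\|f\|_\bl\Gamma_\bl + \sp(f)]$ being precisely the condition that makes the numerator positive.

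The main obstacle, and the step requiring the most care, is reconciling the hypothesis actually used — $\Gamma_\bl < \infty$ from Equation~\ref{eqn: hoeff ineq wass cond bdd}, a generalized concentrability condition — with the form of Theorem~\ref{thm: main result dobrushin}, which is stated in terms of $\tilde\Gamma_\bl = \sum_i \Delta_\bl(P^i)$. These are related but not identical: $\tilde\Gamma_\bl < \infty$ implies $\Gamma_\bl < \infty$ (via Proposition~\ref{eqn: ipm dobrushin bounds concentrability constant}) but not obviously the converse. I expect the honest proof to bypass Theorem~\ref{thm: main result dobrushin} as a black box and instead mirror its Appendix proof: decompose $S_n - n\pi(f)$ into a martingale-difference sum using the Poisson-equation-style telescoping $\hat f = \sum_{k\ge 0}(P^k f - \pi(f))$ (which converges because of $\Gamma_\bl < \infty$ and stability of $\mbf F_\bl$), bound the span of each martingale increment $g_i$ by $2\|f\|_\bl\Gamma_\bl + \sp(f)$ using Lemma~\ref{lem: span of g_i}-type arguments (here is where nonexpansiveness replaces the need for the supremum over $x$), control the residual boundary term by $\bl(\mu,\pi)[2\|f\|_\bl\Gamma_\bl+\sp(f)]/2$, and finish with the Azuma–Hoeffding inequality for bounded martingale differences. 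The delicate points are: (i) verifying $\hat f$ is well-defined and lies in a bounded multiple of $\mbf F_\bl$, which needs $\lip(P^k f)$ summable — this follows from $\lip(P^k f) \le \lip(f)$ plus the $\bl$-concentrability giving $\|P^k f - \pi(f)\|_\infty \to 0$ summably; and (ii) tracking the exact constants so that the factor of $2$ in $2\|f\|_\bl\Gamma_\bl$ and the $/2$ in the $\bl(\mu,\pi)$ term come out as stated. Everything else is bookkeeping.
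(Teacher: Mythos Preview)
The paper's proof is literally one sentence: ``This is a restatement of the main result using $\mbf F_\bl$.'' You are working much harder than intended, and in doing so you introduce a genuine gap. Your attempt to route through Theorem~\ref{thm: main result dobrushin} requires showing that $\mbf F_\bl$ is stable by $P$, and your argument for $\lip(Pf)\le\lip(f)$ appeals to $\wass_1(P(x,\cdot),P(y,\cdot))\le\msf d(x,y)$ --- a Wasserstein nonexpansiveness condition on $P$ that is nowhere assumed in the corollary. You even hedge (``in a well-behaved (e.g.\ nonexpansive) system''), so you already sense this step does not go through in general. You then propose to ``bypass Theorem~\ref{thm: main result dobrushin} as a black box and instead mirror its Appendix proof,'' which is the right instinct, but you frame it as a fallback rather than the primary route.

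The intended path is simpler and avoids Dobrushin coefficients entirely: the hypothesis $\Gamma_\bl<\infty$ in Equation~\ref{eqn: hoeff ineq wass cond bdd} is exactly the generalized concentrability condition of Equation~\ref{eqn: gen con} specialized to $\mbf F=\mbf F_\bl$, so the main results (Theorem~\ref{thm: main result time dep} and its time-independent form Theorem~\ref{thm: main result time indep}) apply directly with minimal stretch $M=\|f\|_\bl$. In particular, Lemma~\ref{lem: span of g_i} gives the span bound $A_i\le 2\|f\|_\bl\Gamma_\bl+\sp(f)$ using only the triangle inequality for $\ipm_\mbf F$ between two Dirac initial conditions and the definition of $\Gamma_\mbf F$ --- no stability of $\mbf F_\bl$ under $P$, no nonexpansiveness, no $\Delta_\bl(P^k)$. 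Your worry about reconciling $\Gamma_\bl$ versus $\tilde\Gamma_\bl$ therefore dissolves: the corollary never needs $\tilde\Gamma_\bl$ at all, and the Poisson-equation/telescoping machinery you sketch at the end is unnecessary.
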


\begin{proof}
    This is a restatement of the main result using $\mbf F_\bl$.
\end{proof}

\begin{remark}
    For any bounded Lipschitz function $f$, let $M$ be the minimal stretch of $\mbf F_\bl$ that includes $f$. Then, $M = \|f\|_{\bl}$ and $f\in \|f\|_{\bl} \cdot\Gamma_{\bl}$.
\end{remark}

We apply Corollary \ref{thm: Hoeffding ineq wass bdd space} to obtain the sample complexity of Ployak-Ruppert averaging of SGD in Section \ref{sec: average of rsa}. The function of interest there is the identity function on a compact space. Thus, it can be treated as a bounded Lipschitz function.

\subsubsection{Periodic Chains}
\label{sec: periodic chains}

A version of Hoefdding's inequality for periodic Markov chains is given in \citet{liu2021hoeffding}. The work is motivated by the observation that Markov chains satisfying Doeblin's condition in Assumption \ref{asp: Doeblin condition} are not necessarily aperiodic, which indicates the uniform ergodicity condition used in \citet{glynn2002hoeffding} can be relaxed. Since the key step in \citet{glynn2002hoeffding} is to upper-bound the solution of the Poisson equation which is obtained via Doeblin's condition. The authors in \citet{liu2021hoeffding} refine this argument by directly bounding the solution of the Poisson equation in terms of the drift condition and the ergodicity coefficient. We skip their results as it is similar in form as \citet{glynn2002hoeffding}. 

\section{Applications}
\label{sec: application}

In this section, we provide three application examples in machine learning to demonstrate the utility of our theory.

\subsection{Generalization Error Bound for Non-IID Samples}
\label{sec: generalization error bound}
The assumption of i.i.d. samples is prevalent in machine learning literature. In practice, the samples may not be independent. To relax the i.i.d. assumption, samples drawn from mixing processes have been considered from the perspective of empirical process theory \citep{yu1994rates}, time series prediction \citep{alquier2012model}, online learning \citep{agarwal2012generalization}, empirical risk minimization (ERM) \citep{zou2009generalization}, etc. Stationary processes are considered in the aforementioned papers except for \citet{agarwal2012generalization} which considers asymptotically mixing processes, i.e. converging to stationary distribution in a suitable sense. 

In this section, we study the generalization error for online learning algorithms with samples drawn from a Markov chain converging in general IPM. Consider the following stochastic optimization problem
\begin{align}
\label{eqn: stochastic optimization problem}
    \minimize_{\theta\in\Theta}F(\theta) \coloneqq \bb{E} [f(u; \theta)],
\end{align}
where $f:\ALP U\times \Theta \to \bb{R}$ is the loss function depending on the random sample $u\in\ALP U$ and parameter $\theta\in\Theta$, and the expectation is taken with respect to the distribution of $u$. Before moving forward, we state the following assumptions.

\begin{assumption}
\label{asp: stochastic optimization problem}
\hfill

\begin{enumerate}
    \item $f$ is uniformly bounded on $\ALP U\times \Theta$.
    \item $f$ is strongly convex in $\theta$.
\end{enumerate}
\end{assumption}
In practice, the objective in Equation \ref{eqn: stochastic optimization problem} is approximated with the empirical loss estimated from data. Suppose we have $N$ samples $\{u_i\}_{i=1}^N \subset \ALP U$. The empirical loss in Equation \ref{eqn: stochastic optimization problem} is written as 
\begin{align}
\label{eqn: approximated stochastic optimization problem}
    \minimize_{\theta\in\Theta} \hat{F}(\theta) \coloneqq \frac{1}{N}\sum_{i=1}^N f(u_i; \theta).
\end{align}
Under appropriate assumption on $f$, both the original problem in Equation \ref{eqn: stochastic optimization problem} and the approximated problem in Equation \ref{eqn: approximated stochastic optimization problem} have unique solutions, which are denoted as $\theta^*$ and $\hat{\theta}$ respectively. In general, the optimal solution $\theta^*$ and approximated solution $\hat{\theta}$ differ, and one would like to bound the performance degradation caused by the approximated solution. That is, finding an upper bound for $|F(\hat{\theta}) - F(\theta^*)|$ which is referred to as the generalization error bound. When the samples are i.i.d., the problem can be solved readily using Hoeffding's inequality for independent random variables \citep{hoeffding1994probability}. Similarly, if we assume the samples $\{u_i\}_{i=1}^N$ follow a Markov chain converging in some sense to be specified, then the generalization error bound can be obtained by applying our version of Hoeffding's inequality for Markov chains.

\begin{theorem}
Suppose Assumption \ref{asp: stochastic optimization problem} holds and $f\in M\cdot\mbf F$ for $M\in(0, \infty)$ and $\mbf F\subset \bb M_1$. Let $\{u_i\}_{i=1}^\infty$ be a Markov chain converging in $\ipm_\mbf F$ with constant $\Gamma_\mbf F$. For $\delta\in(0, 1)$, the following holds with probability at least $1-\delta$
\begin{align}
    F(\hat{\theta}) \leq F(\theta^*) + \frac{4M\Gamma_{\mbf F} + 2\sp(f)}{N}\bigg(1+ \sqrt{\frac{N}{2}\log(\frac{1}{\delta})}\bigg).
\end{align}
\end{theorem}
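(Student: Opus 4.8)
The plan is to reduce the excess risk $F(\hat\theta)-F(\theta^*)$, which is nonnegative because $\theta^*$ minimizes $F$, to a single instance of Theorem~\ref{thm: main result time indep}. Throughout, identify $F(\theta)=\pi\!\left(f(\cdot;\theta)\right)$ (the expectation is under the invariant law $\pi$ of $\{u_i\}$), $\hat F(\theta)=\hat\mu_N\!\left(f(\cdot;\theta)\right)$ with $\hat\mu_N=\tfrac1N\sum_{i=1}^N\delta_{u_i}$, and read the hypothesis ``$f\in M\cdot\mbf F$'' as $\{f(\cdot;\theta):\theta\in\Theta\}\subseteq M\cdot\mbf F$. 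The strong-convexity part of Assumption~\ref{asp: stochastic optimization problem} is used only to guarantee that $\theta^*$ and $\hat\theta$ are well-defined unique minimizers, making the decomposition below legitimate. Using $\hat F(\hat\theta)\le\hat F(\theta^*)$ and regrouping,
\begin{align}
 0\;\le\;F(\hat\theta)-F(\theta^*)\;\le\;\bigl[F(\hat\theta)-\hat F(\hat\theta)\bigr]+\bigl[\hat F(\theta^*)-F(\theta^*)\bigr]\;=\;\tfrac1N\textstyle\sum_{i=1}^N\psi(u_i)-\pi(\psi),\nn
\end{align}
where $\psi\coloneqq f(\cdot;\theta^*)-f(\cdot;\hat\theta)$.

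Next I would locate $\psi$ inside a dilate of the generator and read off its span. Since $\mbf F$ is treated as its maximal generator $\mbf G_{\mbf F}$, which is absolutely convex by Proposition~\ref{thm: characterization of maximum generator}, and since $f(\cdot;\theta^*),f(\cdot;\hat\theta)\in M\cdot\mbf F$, the difference satisfies $\psi\in 2M\cdot\mbf F$; uniform boundedness of $f$ (Assumption~\ref{asp: stochastic optimization problem}) gives $\sp(\psi)\le 2\sp(f)<\infty$. Applying Theorem~\ref{thm: main result time indep} to the partial sums $\sum_{i=1}^N\psi(u_i)$ — with minimal stretch $2M$ and span $2\sp(f)$ in place of $M$ and $\sp(f)$ in Equation~\ref{eqn: ipm tail bound time indep} — gives a sub-Gaussian tail for $\bb P_\mu[\,|\sum_i\psi(u_i)-N\pi(\psi)|>N\epsilon\,]$. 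Setting that tail bound equal to $\delta$ and solving the resulting quadratic in $\epsilon$ inverts it into the high-probability statement: with probability at least $1-\delta$, $F(\hat\theta)-F(\theta^*)$ is at most a constant multiple of $\tfrac{M\Gamma_{\mbf F}+\sp(f)}{N}\bigl(1+\sqrt{\tfrac N2\log\tfrac1\delta}\bigr)$, which is exactly the claimed form; the explicit constants $4M\Gamma_{\mbf F}+2\sp(f)$ fall out of tracking the factors of $2$ through this bookkeeping.

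The step that requires genuine care — and the one I expect to be the main obstacle — is that $\psi$ depends on the sample through $\hat\theta$, so the concentration inequality is being invoked ``at a random witness function,'' which is not literally covered by Theorem~\ref{thm: main result time indep} (stated for a fixed $f$). The principled fix is to pass to a uniform bound, $F(\hat\theta)-F(\theta^*)\le 2\sup_{\theta\in\Theta}|\hat F(\theta)-F(\theta)|\le 2M\,\ipm_{\mbf F}(\hat\mu_N,\pi)$ using $\{f(\cdot;\theta):\theta\in\Theta\}\subseteq M\cdot\mbf F$, but then one needs a tail bound for $\ipm_{\mbf F}(\hat\mu_N,\pi)=\sup_{h\in\mbf F}|\hat\mu_N(h)-\pi(h)|$ rather than for a single $h$; closing that gap requires either a covering/chaining argument over $\mbf F$ or the restriction that $\Theta$ be finite, so that a union bound over $\{f(\cdot;\theta):\theta\in\Theta\}$ suffices. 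Apart from this uniformity issue, every ingredient — the optimality-based decomposition, the containment $\psi\in 2M\cdot\mbf F$ via absolute convexity of the maximal generator, and the algebra of inverting the sub-Gaussian tail bound — is routine.
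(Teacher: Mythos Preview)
Your approach is essentially the paper's: both use the optimality decomposition
\[
F(\hat\theta)-F(\theta^*)\le\bigl[F(\hat\theta)-\hat F(\hat\theta)\bigr]+\bigl[\hat F(\theta^*)-F(\theta^*)\bigr]
\]
and then invoke the time-independent Hoeffding bound. The only packaging difference is that you collapse the two terms into a single witness $\psi=f(\cdot;\theta^*)-f(\cdot;\hat\theta)\in 2M\cdot\mbf F$ with $\sp(\psi)\le 2\sp(f)$ and apply the concentration once, whereas the paper applies it separately to each bracket (with stretch $M$ and span $\sp(f)$), obtains the deviation $\tfrac{2M\Gamma_{\mbf F}+\sp(f)}{N}\bigl(1+\sqrt{\tfrac N2\log\tfrac1\delta}\bigr)$ for each, and adds. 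Either route lands on the same constant $4M\Gamma_{\mbf F}+2\sp(f)$.

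The concern you raise about $\hat\theta$ being data-dependent is exactly right, and it is worth noting that the paper's own proof has the identical issue: its first application of the Hoeffding bound is to $F(\hat\theta)-\hat F(\hat\theta)$, i.e.\ at the random parameter $\hat\theta$, and the paper does not comment on this. (The paper also states each of the two high-probability events at level $1-\delta$ and then combines without a visible union-bound adjustment.) So your flagged ``main obstacle'' is not a defect of your route relative to the paper's --- it is a gap shared by both arguments as written, and your diagnosis of what would be needed to close it (a uniform deviation bound over $\{f(\cdot;\theta):\theta\in\Theta\}$ or over $\mbf F$) is accurate.
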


\begin{proof}
Since $\mbf F \subset \bb M_1$, we can apply Theorem \ref{thm: main result dobrushin} to $F(\hat{\theta}) - \hat{F}(\hat{\theta})$,
    \begin{align}
        \bb{P}\{F(\hat{\theta}) - \hat{F}(\hat{\theta}) < \epsilon\} =& \bb{P}\bigg\{ \bb{E}_\pi f(u; \hat{\theta}) - \frac{1}{N}\sum_{i=1}^N f(u_i;\hat{\theta}) < \epsilon \bigg\} \\
        \geq& 1 - \exp \bigg\{-\frac{2\{N\epsilon - [2M\Gamma_{\mbf F} + \sp(f)]\}^2}{N[2M\Gamma_{\mbf F} + \sp(f)]^2}\bigg\}.
    \end{align}
    With probability at least $1-\delta$, we have
    \begin{align}
        F(\hat{\theta}) &\leq \hat{F}(\hat{\theta}) + \frac{2M\Gamma_{\mbf F} + \sp(f)}{N}\bigg(1+ \sqrt{\frac{N}{2}\log(\frac{1}{\delta})}\bigg)\nn\\
        &\leq \hat{F}(\theta^*) + \frac{2M\Gamma_{\mbf F} + \sp(f)}{N}\bigg(1+ \sqrt{\frac{N}{2}\log(\frac{1}{\delta})}\bigg), \label{eqn: generalization bound part 1}
    \end{align}
    where the last inequality is due to $\theta^*$ being the minimizer of $\hat F$. Applying Theorem \ref{thm: main result dobrushin} again to $\hat{F}(\theta^*) - F(\theta^*)$ yields
    \begin{align}
        \bb{P}\{\hat{F}(\theta^*) - F(\theta^*) <\epsilon \} =& \bb{P}\bigg\{\frac{1}{N}\sum_{i=1}^N f(u_i;\theta^*) - \bb{E}_\pi f(u; \theta^*) < \epsilon \bigg\} \\
        \geq& 1 - \exp \bigg\{-\frac{2\{N\epsilon - [2M\Gamma_{\mbf F} + \sp(f)]\}^2}{N[2M\Gamma_{\mbf F} + \sp(f)]^2}\bigg\}. 
    \end{align}
    With probability at least $1-\delta$, we have
    \begin{align}
        \hat{F}(\theta^*)\leq F(\theta^*) + \frac{2M\Gamma_{\mbf F} + \sp(f)}{N}\bigg(1+ \sqrt{\frac{N}{2}\log(\frac{1}{\delta})}\bigg).\label{eqn: generalization bound part 2}
    \end{align}
    Combine Equation \ref{eqn: generalization bound part 1} and Equation \ref{eqn: generalization bound part 2}, we have, with probability at least $1-\delta$, 
    \begin{align}
    \label{eqn: generalization bound}
         F(\hat{\theta}) \leq  F(\theta^*) + \frac{4M\Gamma_{\mbf F} + 2\sp(f)}{N}\bigg(1+ \sqrt{\frac{N}{2}\log(\frac{1}{\delta})}\bigg).
    \end{align}
\end{proof}

\subsection{Ployak-Ruppert averaging for SGD}
\label{sec: average of rsa}
Stochastic recursive algorithms (RSA), including SGD, empirical value iteration (EVI), etc., are widely used in optimization, stochastic approximation, and machine learning. In \citet{gupta2020some}, the authors model the general form of RSAs with constant step size as a Markov chain and study its convergence properties. In particular, the time average of the RSA iterates is shown to have the variance reduction property given the Markov chain induced by the RSA converges weakly to its invariant distribution \citep{gupta2020some}. This result aligns well with the common acceleration technique used in SGD and stochastic approximation (SA), namely Ployak-Ruppert averaging \citep[see][]{polyak1992acceleration, ruppert1988efficient, nemirovski2009robust}. In the following example, we complement the existing study by demonstrating a \textit{finite sample} analysis of the Ployak-Ruppert averaging of the SGD algorithm. For \textit{finite time} convergence guarantees are derived for SGD-based algorithms with constant step size, please refer to \citet{dieuleveut2017harder, bach2013non, bach2014adaptivity}.

We first introduce the necessary theoretic background on stochastic contractive operators, then dive into a case study of the empirical risk minimization (ERM) via SGD. Let us start with the contractive property for deterministic operators on complete separable metric spaces. Let the Polish space $\bb{X}$ be equipped with metric $\msf d: \bb{X}\times\bb{X}\to [0, \infty)$. Let $T: \bb{X}\to\bb{X}$ be a contraction operator with coefficient $\alpha\in(0,1)$ and the unique fixed point $x^*\in\bb{X}$, i.e., 
\begin{align}
    \msf d(T(x_1), T(x_2)) \leq \alpha \msf d(x_1, x_2),
\end{align}
for all $x_1, x_2\in\bb{X}$, and $T(x^*) = x^*$. Given any starting point $x_0\in\bb{X}$, recursively applying the operator $T$ generates a sequence $(x_k)_{k\in\bb{N}}$ where $x_{k+1} = T(x_k)$ for $k\in\bb{N}$. By Banach contraction mapping theorem \citet[Theorem 1.1]{granas2003fixed}, $\msf d(x_k, x^*) \leq \alpha^k \msf d(x_0, x^*)$, thus $x_k \to x^*$ geometrically fast as $k\to\infty$.

For stochastic operators, we need a few more notations. Let $(\Omega, \ALP F, \bb{P})$ be a probability space where $\Omega$ is the set of uncertainties, $\ALP F$ is the Borel $\sigma$-algebra over $\Omega$, and $\bb{P}$ is the base probability measure. Let $\hat {T}^n_k: \bb{X}\times\Omega\to \bb{X}$ be a random operator used at $k$-th iteration and indexed by $n\in\bb{N}$. This random operator is used to model the approximation of the true operator $T$ carried out by RSAs. In the case of SGD, $n$ is the number of samples (batch size) used to estimate the gradient. In EVI, $n$ is the number of samples used to estimate the expected cost-to-go. For simplicity of the presentation, we suppress the dependence of $\hat{T}^n_k$ on $\omega \in\Omega$ and we write $\hat{T}^n_k (x) \coloneqq \hat{T}^n_k(x, \omega)$. For each $x\in\bb{X}$, $\hat{T}^n_k(x)$ is a random variable, and we make the following independence assumption for random operators at a different time step.
\begin{assumption}[\citealp{gupta2020some}]
\label{asp: operator independence}
    For each $x\in\bb{X}$, $\hat{T}^n_k(x)$ and $\hat{T}^n_{k'}(x)$ are independent and identically distributed (i.i.d.) for $k\neq k'$.
\end{assumption}

\begin{assumption}[\citealp{diaconis1999iterated}]
\label{asp: assumption 5.3}
The following conditions are satisfied:
\begin{enumerate}
    \item $\bb{X}$ is a Polish space.
    \item There exists $a, b > 0$ such that the operator $\hat{T}^n_k$ satisfy
    \begin{align}
    \label{eqn: stochastic operator close to true in prob}
        \bb{P}\bigg\{\msf d\bigg(\hat{T}^n_k(z^*), z^*\bigg) > \epsilon \bigg\} \leq \frac{a}{\epsilon^b},
    \end{align}
    for all $\epsilon >0$ and $k\in\bb{N}$.
    \item Let $\hat{\alpha}^n_k$ denote the Lipschitz coefficient of $\hat{T}_k^n$. Then, for all $k\in\bb{N}$, 
    \begin{align}
        \bb{E}[\hat{\alpha}^n_k] < \infty, \quad \bb{E}[\log(\hat{\alpha}^n_k)] < 0.
    \end{align}
\end{enumerate}
\end{assumption}

\begin{lemma}[\citealp{diaconis1999iterated}]
\label{thm: theorem 5.5}
Suppose that Assumption \ref{asp: operator independence} and \ref{asp: assumption 5.3} hold, then there exists a unique invariant probability measure $\pi_n$ such that the distribution of $\hat{z}^n_k$ converges to $\pi_n$ in weak* topology. 
\end{lemma}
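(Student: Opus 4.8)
**Proof proposal for Lemma~\ref{thm: theorem 5.5} (existence and uniqueness of the invariant measure $\pi_n$ for the SGD-type Markov chain).**

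The plan is to recognize that the sequence $\{\hat z^n_k\}_{k\in\bb N}$ is exactly an \emph{iterated random function system} in the sense of \citet{diaconis1999iterated}: writing $\hat z^n_{k+1} = \hat T^n_{k+1}(\hat z^n_k)$ with the maps $\hat T^n_1, \hat T^n_2, \dots$ i.i.d.\ (Assumption~\ref{asp: operator independence}), the convergence theory from that paper applies directly once its hypotheses are checked. First I would fix $n$ throughout (it plays no role beyond indexing the family of random operators) and suppress it in the notation, so the object under study is a Markov chain $\{Z_k\}$ on the Polish space $\bb X$ driven by i.i.d.\ random Lipschitz maps $T_k$ with random Lipschitz coefficients $\alpha_k$ satisfying $\bb E[\log \alpha_k] < 0$ and the moment bound $\bb E[\alpha_k] < \infty$, together with the tail condition \eqref{eqn: stochastic operator close to true in prob} at the deterministic anchor point $z^*$.

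Next I would verify the two structural conditions that the iterated-random-function theorem of \citet{diaconis1999iterated} requires: (i) an \emph{average contraction} condition, which here is immediate from $\bb E[\log \alpha_k] < 0$ — by the strong law of large numbers $\frac1m \sum_{j=1}^m \log \alpha_j \to \bb E[\log\alpha_1] < 0$ a.s., so the composed maps contract geometrically in the backward iteration; and (ii) a moment/integrability condition at a reference point, which follows from \eqref{eqn: stochastic operator close to true in prob}: the polynomial tail bound $\bb P\{\msf d(T_k(z^*), z^*) > \epsilon\} \le a/\epsilon^b$ guarantees $\bb E[\msf d(T_k(z^*), z^*)^s] < \infty$ for every $s < b$, in particular for some small positive moment, which is what is needed to control the ``noise injected per step'' and ensure the backward process $T_1 \circ T_2 \circ \cdots \circ T_m(z^*)$ is almost surely Cauchy. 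I would then invoke the main theorem of \citet{diaconis1999iterated}: the backward iterates converge a.s.\ to a limit $Z_\infty$ whose law $\pi_n$ is the unique stationary distribution, and the forward chain $Z_k$ converges to $\pi_n$ in distribution (weak$^*$ topology) from any initial condition. Uniqueness follows because any two stationary laws would have to agree with the a.s.\ limit of the backward process, which does not depend on the starting point.

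The main obstacle — really the only nontrivial point — is bridging the gap between the hypotheses as stated here (a contraction condition at the mean $\bb E[\log\hat\alpha^n_k] < 0$ plus a tail bound at the single point $z^*$) and the precise integrability hypotheses in \citet{diaconis1999iterated}, which are typically phrased as $\bb E[\log^+ \msf d(T(z_0), z_0)] < \infty$ (or a small-moment version) at an arbitrary base point $z_0$. I would handle this by the standard argument that moves the base point: for arbitrary $x$, $\msf d(T_k(x), x) \le \msf d(T_k(x), T_k(z^*)) + \msf d(T_k(z^*), z^*) + \msf d(z^*, x) \le \alpha_k \msf d(x, z^*) + \msf d(T_k(z^*), z^*) + \msf d(x, z^*)$, so the combination of $\bb E[\alpha_k] < \infty$ and the polynomial tail at $z^*$ transfers the required integrability to every base point. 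A secondary subtlety is that $\bb E[\log\alpha_k] < 0$ alone does not give a \emph{uniform} contraction, so the convergence is obtained through the Borel--Cantelli / SLLN route on $\sum \log\alpha_j$ rather than a one-step Banach argument; I would make sure the exposition flags that the geometric rate in the backward coupling is random (governed by $e^{m\,\bb E[\log\alpha_1]}$ up to subexponential fluctuations) but still summable almost surely, which is all that is needed for the a.s.\ Cauchy conclusion. With these points in place the lemma is a direct citation of \citet{diaconis1999iterated}, and indeed the statement is attributed to them.
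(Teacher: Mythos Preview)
Your proposal is correct and aligns with the paper's treatment: the paper does not supply its own proof but simply attributes the lemma to \citet{diaconis1999iterated}, so your plan of verifying the iterated-random-function hypotheses (average contraction via $\bb E[\log\hat\alpha^n_k]<0$ and a moment condition at the anchor $z^*$ from the polynomial tail bound) and then invoking their main theorem is exactly the intended route. If anything, you are more explicit than the paper about why Assumption~\ref{asp: assumption 5.3} matches the Diaconis--Freedman hypotheses, which is helpful rather than a deviation.
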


Now, we look at a concrete machine learning example. Consider an empirical risk minimization problem with the loss function $l:\ALP U\times\Theta\to\bb{R}$, where $\ALP U \subset\bb{R}^{d_s}$ is the sample space , $\Theta \subseteq \bb{R}^{d_p}$ is the parameter space. Suppose $\Theta$ is convex, equipped with the Euclidean metric $\|\cdot\|_2$, and has a finite diameter. Let $\ALP D = \{u_i\}_{i=1}^N \subset \ALP U$ be the training data set with i.i.d. samples. Let $l(u_i, \theta)$ be the loss associated with sample $u_i$ when the parameter is $\theta$. We formulate the problem as follows:
\begin{align}
\label{eqn: empirical risk minization}
    \minimize_{\theta\in\Theta} F_{\ALP D}(\theta) \coloneqq \frac{1}{N}\sum_{i=1}^N l(u_i, \theta).
\end{align} 
We make the following assumptions before moving forward.
\begin{assumption}
\label{asp: strong convex and smooth}
Suppose the following conditions hold:
\begin{enumerate}
    \item $l(u, \cdot)$ is twice-differentiable almost everywhere, for all $u\in\ALP U$.
    \item $\nabla_\theta l(u, \cdot)$ is Lipschitz continuous with constant $L > 0$ for all $u \in\ALP U$, i.e., 
    \begin{align}
        \|\nabla_\theta l(u, \theta) - \nabla_\theta l(u, \theta')\|_2 \leq L \|\theta - \theta'\|_2, \quad \text{for }\forall \theta,\theta'\in\Theta. \nn
    \end{align}
    \item $l(u, \cdot)$ is strongly convex with modulus $\eta \in (0, L)$ for all $u\in\cal U$, i.e.,
    \begin{align}
        \langle \nabla_\theta l(u, \theta) - \nabla_\theta l(u, \theta'), \theta - \theta' \rangle \geq \eta \|\theta - \theta'\|_2^2, \quad \text{for }\forall \theta, \theta'\in\Theta.\nn
    \end{align}
    \item $\nabla_\theta l(u, \theta)$ is uniformly bounded function of $u\in\ALP U$ for all $\theta \in \Theta$.
    \item $\Theta \subset \bb R^{d_p}$ is compact.
\end{enumerate}
\end{assumption}

As an immediate result, the problem in Equation \ref{eqn: empirical risk minization} has a unique minimizer $\theta^* \in\Theta$ and can be readily solved by the projected gradient descent algorithm. The gradient of $F_{\ALP D}$ is given by
\begin{align}
    \nabla_\theta F_{\ALP D}(\theta) = \frac{1}{N}\sum_{i=1}^N \nabla_\theta l(u_i, \theta).\nn
\end{align}
The usual projected gradient descent algorithm approximates $\theta^*$ by performing the following iterative operation $T:\Theta\to\Theta$. Given an initial point $\theta_0\in\Theta$,
\begin{align}
\label{eqn: exact gradient descent}
    \theta_{k+1} = T(\theta_k) \coloneqq \proj_{\Theta}\{\theta_k -\beta \nabla_\theta F_{\ALP D}(\theta_k)\} = \proj_{\Theta}\bigg\{\theta_k - \frac{\beta}{N}\sum_{i=1}^N \nabla_\theta l(u_i, \theta_k)\bigg\},
\end{align}
where $\beta\in(0, 1/L]$ is the learning rate and $\proj_\Theta\{\cdot\}$ is the projection operator on to $\Theta$. The following Lemma gives the fact that $T$ is a contraction with a unique fixed point $\theta^*$. 
\begin{lemma}
Suppose Assumption \ref{asp: strong convex and smooth} is satisfied. Then, $F_{\ALP D}$ has unique minimizer $\theta^*\in\bb R^{d_p}$. Given $\theta_0\in\Theta$ and $\beta \in (0, 2/L)$, $T$ is a contraction with coefficient $\alpha_T = \max\{|1-\beta\eta|, |1-\beta L|\}$ and unique fixed point $\theta^*$, and the sequence $(\theta_k)_{k\in\bb{N}}$ generated by Equation \ref{eqn: exact gradient descent} converges geometrically,
\begin{align}
    \|\theta_{k} - \theta^*\|_2 \leq \alpha_T^k\|\theta_0 - \theta^*\|_2.
\end{align}
\end{lemma}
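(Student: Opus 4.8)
The plan is to establish the three assertions of the lemma in turn: existence and uniqueness of $\theta^*$, the contraction property of $T$ with coefficient $\alpha_T$ (together with $\theta^*$ being its fixed point), and finally the geometric rate, which will be an immediate consequence of the contraction. For the first point, each $l(u_i,\cdot)$ is $\eta$-strongly convex by Assumption \ref{asp: strong convex and smooth}, so the average $F_{\ALP D}=\frac1N\sum_{i=1}^N l(u_i,\cdot)$ is $\eta$-strongly convex on $\Theta$; being differentiable it is continuous, and $\Theta$ is compact, so $F_{\ALP D}$ attains its infimum on $\Theta$, and strong convexity forces the minimizer to be unique. Call it $\theta^*$.

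For the contraction property, write $G(\theta)\coloneqq\theta-\beta\nabla_\theta F_{\ALP D}(\theta)$, so that $T=\proj_\Theta\circ G$. Averaging the hypotheses of Assumption \ref{asp: strong convex and smooth} over $i$, the map $\nabla_\theta F_{\ALP D}$ is $L$-Lipschitz and $\eta$-strongly monotone; equivalently, wherever $F_{\ALP D}$ is twice differentiable one has $\eta I\preceq\nabla_\theta^2 F_{\ALP D}(\theta)\preceq L I$. Since $\nabla_\theta F_{\ALP D}$ is Lipschitz it is absolutely continuous along every segment $[\theta_1,\theta_2]\subset\Theta$, so by the fundamental theorem of calculus $G(\theta_1)-G(\theta_2)=\big(\int_0^1 [I-\beta\nabla_\theta^2 F_{\ALP D}(\theta_2+t(\theta_1-\theta_2))]\,dt\big)(\theta_1-\theta_2)$. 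Each integrand is a symmetric matrix with eigenvalues in $[1-\beta L,\,1-\beta\eta]$, hence of operator norm at most $\max\{|1-\beta\eta|,|1-\beta L|\}=\alpha_T$, and so is the average; therefore $\|G(\theta_1)-G(\theta_2)\|_2\le\alpha_T\|\theta_1-\theta_2\|_2$. The Euclidean projection onto the closed convex set $\Theta$ is nonexpansive, so $\|T(\theta_1)-T(\theta_2)\|_2\le\alpha_T\|\theta_1-\theta_2\|_2$. Finally $\alpha_T<1$: from $0<\beta<2/L$ we get $1-\beta L\in(-1,1)$, and from $0<\beta\eta<\beta L<2$ we get $1-\beta\eta\in(-1,1)$, so both terms in the maximum are strictly below $1$.

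For the fixed point and the rate, the Banach contraction mapping theorem \citep[Theorem 1.1]{granas2003fixed} gives that $T$ has a unique fixed point. Since $\theta^*$ minimizes the convex function $F_{\ALP D}$ over the convex set $\Theta$, the first-order optimality condition $\langle\nabla_\theta F_{\ALP D}(\theta^*),\theta-\theta^*\rangle\ge0$ for all $\theta\in\Theta$ holds, which is precisely the variational characterization of $\theta^*=\proj_\Theta(\theta^*-\beta\nabla_\theta F_{\ALP D}(\theta^*))=T(\theta^*)$; hence $\theta^*$ is that unique fixed point. Iterating the contraction bound along the sequence generated by Equation \ref{eqn: exact gradient descent}, $\|\theta_k-\theta^*\|_2=\|T(\theta_{k-1})-T(\theta^*)\|_2\le\alpha_T\|\theta_{k-1}-\theta^*\|_2\le\cdots\le\alpha_T^k\|\theta_0-\theta^*\|_2$, which is the claimed estimate.

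The only genuinely delicate step is justifying the mean-value representation of $G(\theta_1)-G(\theta_2)$ through the Hessian bound under the hypothesis ``twice differentiable almost everywhere'': one must ensure that, along the relevant segments, the a.e.\ derivative of $t\mapsto\nabla_\theta F_{\ALP D}(\theta_2+t(\theta_1-\theta_2))$ equals $\nabla_\theta^2 F_{\ALP D}(\cdot)\,(\theta_1-\theta_2)$ with the Hessian confined to $[\eta I,L I]$. A rigorous way to bypass Hessians entirely is to write $\nabla_\theta F_{\ALP D}(\theta)=\nabla_\theta\phi(\theta)+\eta\theta=L\theta-\nabla_\theta\psi(\theta)$ with $\phi\coloneqq F_{\ALP D}-\tfrac{\eta}{2}\|\cdot\|_2^2$ and $\psi\coloneqq\tfrac{L}{2}\|\cdot\|_2^2-F_{\ALP D}$ both convex and $(L-\eta)$-smooth, hence with $1/(L-\eta)$-co-coercive gradients; a short case analysis on the signs of $1-\beta\eta$ and $1-\beta L$ (using that $\ident-s\nabla_\theta\phi$ and $\ident-s\nabla_\theta\psi$ are nonexpansive for $s\le 2/(L-\eta)$) then yields $\|G(\theta_1)-G(\theta_2)\|_2\le\alpha_T\|\theta_1-\theta_2\|_2$ from first-order information alone, and the remainder of the argument is unchanged.
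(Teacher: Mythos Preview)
Your proof is correct and takes a somewhat different route from the paper's. Your primary argument bounds $\|G(\theta_1)-G(\theta_2)\|_2$ via an integral of $I-\beta\nabla^2_\theta F_{\ALP D}$ along the segment and reads off the contraction rate directly from the spectral bounds $\eta I\preceq\nabla^2_\theta F_{\ALP D}\preceq L I$; this is the cleanest path but, as you rightly flag, leans on second-order information that is only assumed a.e.\ in Assumption~\ref{asp: strong convex and smooth}(1). The paper instead works entirely at first order: it introduces $h=F_{\ALP D}-\tfrac{\eta}{2}\|\cdot\|_2^2$, invokes the co-coercivity inequality $\langle\nabla F_{\ALP D}(\theta)-\nabla F_{\ALP D}(\theta'),\theta-\theta'\rangle\ge\tfrac{L\eta}{L+\eta}\|\theta-\theta'\|_2^2+\tfrac{1}{L+\eta}\|\nabla F_{\ALP D}(\theta)-\nabla F_{\ALP D}(\theta')\|_2^2$, expands $\|T(\theta)-T(\theta')\|_2^2$ after using nonexpansiveness of the projection, and then does a two-case analysis on whether $\beta\le 2/(L+\eta)$ or $\beta\ge 2/(L+\eta)$, replacing the gradient-difference norm by its lower bound $\eta\|\theta-\theta'\|_2$ or upper bound $L\|\theta-\theta'\|_2$ accordingly. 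Your sketched alternative in the last paragraph---using the two auxiliary convex functions $\phi$ and $\psi$ and the nonexpansiveness of $\ident-s\nabla\phi$ and $\ident-s\nabla\psi$---is in the same first-order spirit and equally valid. Both approaches land on $\alpha_T=\max\{|1-\beta\eta|,|1-\beta L|\}$; yours is shorter when Hessians are available, while the paper's (and your sketched fallback) is the robust version under the stated hypotheses.
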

\begin{proof}
    $F_\cal D$ has a unique minimizer comes from the fact $F_\cal D$ is continuous and strongly convex. To see $T$ is a contraction, we first consider $h(\theta) = F_\cal D(\theta) - \frac{\eta}{2}\|\theta\|_2^2$ which is convex and with $(L-\eta)$-Lipschitz gradient. Due to the co-coercivity of $\nabla_\theta h$, we have, for $\theta, \theta' \in \Theta$,
    \begin{align}
    \label{eqn: co-coercivity }
        \langle\nabla_\theta F_\cal D (\theta) - \nabla_{\theta} F_\cal D (\theta'), \theta - \theta'\rangle \geq \frac{L\eta}{L+\eta}\|\theta - \theta'\|^2_2 + \frac{1}{\eta + L} \|\nabla_\theta F_\cal D(\theta) - \nabla_\theta F_\cal D(\theta')\|_2^2.
    \end{align}
    Now, we bound the different between $T(\theta)$ and $T(\theta')$ using Equation \ref{eqn: co-coercivity } and the nonexpansiveness of the projection operator,
    \begin{align}
        &\|T(\theta) - T(\theta')\|_2^2 \leq \|(\theta - \theta') - \beta(\nabla_\theta F_\cal D (\theta) - \nabla_\theta F_\cal D (\theta))\|^2_2\nn\\
        =& \|\theta - \theta'\|_2^2 + \beta^2\|\nabla_\theta F_\cal D (\theta) - \nabla_\theta F_\cal D (\theta)\|^2_2 - 2\beta\langle\nabla_\theta F_\cal D (\theta) - \nabla_{\theta} F_\cal D (\theta'), \theta - \theta'\rangle\nn\\
        \leq & \bigg[1- 2\beta\bigg(\frac{L\eta}{L+ \eta}\bigg)\bigg]\|\theta - \theta'\|_2^2 + \bigg(\beta^2 - \frac{2\beta}{L+\eta}\bigg)\|\nabla_\theta F_\cal D (\theta) - \nabla_\theta F_\cal D (\theta)\|^2_2.\nn
    \end{align}
    Suppose $\beta \in (0, 2/L)$, we consider the following two cases:

    \textbf{Case 1} $\beta \leq \frac{2}{L+\eta}$: in this case, $\beta^2 - \frac{2\beta}{L+\eta} \leq 0$, we replace $\|\nabla_\theta F_\cal D (\theta) - \nabla_\theta F_\cal D (\theta)\|_2$ with its lower bound $\eta\|\theta - \theta'\|_2$ which is a consequence of the $\eta$-strong convexity. Then, we arrive at 
    \begin{align}
        \|T(\theta) - T(\theta')\|_2 \leq (1 -\beta\eta)\|\theta - \theta'\|_2,\nn
    \end{align}
    which indicates the contraction rate of $T$ is $1 - \beta\eta$. One can check $1 - \beta\eta \geq |1 - \beta L|$ when $\beta \in (0, \frac{2}{L+\eta})$ .

    \textbf{Case 2} $\beta \geq \frac{2}{L+\eta}$: in this case, $\beta^2 - \frac{2\beta}{L+\eta} \geq 0$, we replace $\|\nabla_\theta F_\cal D (\theta) - \nabla_\theta F_\cal D (\theta)\|_2$ with its upper bound $L\|\theta - \theta'\|_2$ which is a consequence of having a $L$-Lipschitz gradient. Then, we arrive at 
    \begin{align}
        \|T(\theta) - T(\theta')\|_2 \leq (\beta L -1)\|\theta - \theta'\|_2,\nn
    \end{align}
    which indicates the contraction rate of $T$ is $(\beta L - 1)$. One can check $1 - \beta\eta \geq |1 - \beta L|$ when $\beta \in (\frac{2}{L+\eta}, \frac{2}{L})$.

    Combining the above two cases, we have that $T$ is a contraction with rate $\alpha_T = \max\{|1-\beta\eta|, |1-\beta L|\}$ and $\beta\in (0, 2/L)$ ensures $\alpha_T < 1$. By Banach contraction mapping theorem \cite[Theorem 1.1]{granas2003fixed}, we arrive at the desired result after verifying $T(\theta^*) = \theta^*$. 
\end{proof}

In practice, the exact gradient descent in Equation \ref{eqn: exact gradient descent} is replaced by the SGD to reduce the computational complexity, especially when $N$ is large. At the time $k$, the algorithm chooses a subset of $n$ samples $\ALP N_k\subset\{1, \cdots, N\}$ randomly and independently from previous steps. The gradient at time $k$ is approximated with samples in $\ALP N_k$. Let $\hat{T}_k^n$ be the SGD operator, and we write the update equation as
\begin{align}
    \hat{\theta}^n_{k+1} = \hat{T}_k^n(\hat{\theta}^n_{k}) \coloneqq \hat{\theta}_k^n -\frac{\beta}{n}\sum_{j\in\ALP N_k} \nabla_\theta l(u_j, \hat \theta^n_k),\nn
\end{align}
and $\boldsymbol{\hat\theta}^n \coloneqq \{\hat{\theta}^n_k\}_{k\in\bb{N}}$ is in fact a Markov chain. Under some mild conditions, we show, in Lemma \ref{thm: theta chain conv in wass}, that $\boldsymbol{\hat\theta}^n$ admits a unique invariant distribution $\pi_n$ and converges to $\pi_n$ weakly and in Wasserstein metric as $k\to\infty$. Lemma \ref{thm: theorem 5.5} enables the weak convergence of $\boldsymbol{\hat{\theta}}$, however, it remains to check that the random operators $\{\hat T^n_k\}_{k\in\bb{N}}$ satisfy Assumption \ref{asp: assumption 5.3}. 
\begin{lemma}
\label{thm: sgd satisfies weak convergence assumption}
    Under Assumption \ref{asp: strong convex and smooth}, $\{\hat T^n_k\}_{k\in\bb{N}}$ satisfies Assumption \ref{asp: assumption 5.3}
\end{lemma}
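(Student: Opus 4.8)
The plan is to verify, one at a time, the three conditions that make up Assumption~\ref{asp: assumption 5.3} for the family of SGD operators $\{\hat T^n_k\}_{k\in\bb N}$. Conditions~1 and~2 are immediate from the compactness of $\Theta$ and the uniform boundedness of the gradient; the only condition that genuinely uses the convexity/smoothness structure of $l$ is condition~3, which is a contraction estimate for $\hat T^n_k$ obtained by recycling the co-coercivity argument already used to show that the exact operator $T$ of~\eqref{eqn: exact gradient descent} is a contraction.

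For condition~1, $\Theta$ is a compact (item~5 of Assumption~\ref{asp: strong convex and smooth}) subset of $\bb R^{d_p}$ under the Euclidean metric, hence separable and completely metrizable, i.e. Polish. For condition~2, note that the mini-batch indices $\ALP N_k$ are drawn uniformly, so $\bb E[\hat T^n_k(\theta)]=T(\theta)$ and the natural reference point $z^*$ is the fixed point $\theta^*$ of $T$. Since $\hat T^n_k(\theta^*)=\theta^*-\tfrac{\beta}{n}\sum_{j\in\ALP N_k}\nabla_\theta l(u_j,\theta^*)$, item~4 of Assumption~\ref{asp: strong convex and smooth} gives the \emph{deterministic} bound $\msf d(\hat T^n_k(\theta^*),\theta^*)\le \beta\sup_{u\in\ALP U}\|\nabla_\theta l(u,\theta^*)\|_2 =: C<\infty$, uniformly in $k$ and $n$ (and the same estimate holds at any fixed reference point by item~4). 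Consequently $\bb P\{\msf d(\hat T^n_k(\theta^*),\theta^*)>\epsilon\}$ equals $0$ for $\epsilon>C$ and is at most $1$ otherwise, so it is dominated by $a/\epsilon^b$ with, for instance, $b=1$ and $a=C$.

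Condition~3 is the substantive step. For any realization of $\ALP N_k$, the mini-batch objective $\theta\mapsto\tfrac{1}{n}\sum_{j\in\ALP N_k} l(u_j,\theta)$ is a convex combination of functions that are each $\eta$-strongly convex with $L$-Lipschitz gradient (items~2 and~3 of Assumption~\ref{asp: strong convex and smooth}), hence it inherits the same modulus $\eta$ and the same smoothness constant $L$. Thus $\hat T^n_k$ is exactly a gradient step (possibly composed with the nonexpansive Euclidean projection onto the convex set $\Theta$) for such a function, and I would reuse verbatim the co-coercivity computation already carried out for $T$: it shows that $\hat T^n_k$ is Lipschitz with coefficient $\hat\alpha^n_k \le \alpha_T := \max\{|1-\beta\eta|,|1-\beta L|\}$. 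Since the step size obeys $\beta\in(0,1/L]\subset(0,2/L)$, we have $\alpha_T<1$, so $\hat\alpha^n_k\le\alpha_T<1$ holds surely; in particular $\bb E[\hat\alpha^n_k]\le\alpha_T<\infty$ and $\bb E[\log\hat\alpha^n_k]\le\log\alpha_T<0$, which is exactly what condition~3 demands. This (together with Assumption~\ref{asp: operator independence}, valid because the batches are sampled independently across iterations) is precisely the hypothesis needed to invoke Lemma~\ref{thm: theorem 5.5}.

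The one place that requires care is the inheritance claim in condition~3: one must confirm that an average with positive weights summing to one of $\eta$-strongly convex, $L$-smooth functions is again $\eta$-strongly convex and $L$-smooth, so that the contraction rate $\alpha_T$ is uniform over all batches and in particular independent of $\omega$. I do not expect this to be a genuine obstacle — it follows by summing the defining inequalities — but it is the only step where the structural hypotheses on $l$ are actually used, and once a uniform, almost-sure contraction rate strictly below $1$ is established, both moment conditions on $\hat\alpha^n_k$ are automatic.
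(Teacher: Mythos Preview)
Your argument is correct and handles conditions~1 and~3 exactly as the paper does (compactness of $\Theta$ for condition~1; the uniform $\eta$-strong convexity and $L$-smoothness of each $l(u_j,\cdot)$, inherited by the mini-batch average, together with the co-coercivity computation to get the deterministic bound $\hat\alpha^n_k\le\max\{|1-\beta\eta|,|1-\beta L|\}<1$ for condition~3).

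The one genuine difference is condition~2. The paper invokes Hoeffding's inequality for i.i.d.\ bounded random variables (the mini-batch gradients at $\theta^*$) to obtain an exponential tail bound $\bb P\{\|\hat T^n_k(\theta^*)-\theta^*\|_1>\beta\epsilon\}\le 2d_p\exp(-n\epsilon^2/\sigma)$, which of course dominates $a/\epsilon^b$. You instead exploit item~4 of Assumption~\ref{asp: strong convex and smooth} directly: since the gradient is uniformly bounded, $\msf d(\hat T^n_k(\theta^*),\theta^*)\le C$ deterministically, so the tail probability is trivially bounded by $C/\epsilon$. Your route is more elementary and fully sufficient for the polynomial bound that Assumption~\ref{asp: assumption 5.3}(2) actually requires; the paper's route yields a sharper bound but is not needed for the lemma as stated. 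Either way the conclusion stands.
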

\begin{proof}
    Assumption \ref{asp: assumption 5.3} (1) is satisfied by Assumption \ref{asp: strong convex and smooth} (5). Assumption \ref{asp: assumption 5.3} (2) can be shown by applying Hoeffding's inequality for bounded i.i.d. random variables. Without loss of generality, we can pick the metric $\msf d$ in Equation \ref{eqn: stochastic operator close to true in prob} to be $\|\cdot\|_1$ as the $\Theta$ has finite dimensions. Given $\epsilon > 0$ and $k \in \bb N$,
    \begin{align}
        \bb P\bigg\{ \bigg\|\hat T^n_k(\theta^*) - \theta^*\bigg\|_1 > \beta\epsilon\bigg\} &= \bb P\bigg\{ \bigg\|\frac{1}{n}\sum_{j\in\ALP N_k}\nabla_\theta l(u_j, \theta^*)\bigg\|_1 > \epsilon\bigg\} \stackrel{(a)}{\leq} 2d_p\exp\bigg\{\frac{-n\epsilon^2}{\sigma}\bigg\},\nn
    \end{align}
    where $(a)$ is due to \citet{hoeffding1994probability} and $\sigma \in (0, \infty)$ is a constant related to the uniform norm of $\nabla_\theta l(\cdot, \theta)$.
    
    Assumption \ref{asp: assumption 5.3} (3) can be established by Assumption \ref{asp: strong convex and smooth} (1)(2)(3) which, together, imply the fact that eigenvalues of $\nabla^2_{\theta\theta}l(u, \theta)$ fall in the interval of $[\eta, L]$. By picking the appropriate learning rate $\beta$, $\hat T^n_k$ is a contraction for every realization with contraction rate $\hat a_k^n = \max\{|1-\beta L|, |1 - \beta \eta|\} < 1$.
\end{proof}

 Combining the results in Lemma \ref{thm: sgd satisfies weak convergence assumption} and \ref{thm: theorem 5.5}, we can conclude $\boldsymbol{\hat\theta}^n$ converges to $\pi_n$ weakly. Conveniently, for metric spaces with bounded diameter, which is the case for $\Theta$, Wasserstein-(1, $\|\cdot\|_2$) metrizes the weak topology \citep{gibbs2002choosing}, i.e. weak convergence is equivalent to convergence in Wasserstein-(1, $\|\cdot\|_2$). Using the fact that $\tilde{\mbf F}_\bl \subseteq \tilde{\mbf F}_\wass = \{f\in\bb M(\Theta) : \lip(f)\leq 1\}$, we can deduce that the bounded Lipschitz metric $\bl$ is bounded by Wasserstein-(1, $\|\cdot\|_2$), and $\boldsymbol{\hat\theta}^n$ also converges in $\bl$. Therefore, it suffices to verify that $\Gamma_\wass$ is finite.

\begin{lemma}
\label{thm: theta chain conv in wass}
    $\boldsymbol{\hat\theta}^n$ converges to its unique invariant distribution $\pi_n$ in Wasserstein-(1, $\|\cdot\|_2$) with constant $\Gamma_\wass$ defined in Equation \ref{eqn: hoeff ineq wass cond bdd},
    \begin{align}
        \Gamma_{\wass} = \sup_{\theta\in\Theta}\wass_{1, \|\cdot\|_2}(\delta_{\theta}, \pi_n) \sum_{k=1}^\infty \prod_{i=0}^k\hat\alpha_i^n < \infty,\nn
    \end{align}
    where $\hat\alpha_i^n$ is the contraction coefficient of $T_i^n$ for $i=0, 1, 2, \cdots$.
\end{lemma}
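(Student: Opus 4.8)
The plan is to run a \emph{synchronous coupling} of the SGD chain against a copy of itself started from the invariant distribution, and to propagate the per-step contraction established in Lemma~\ref{thm: sgd satisfies weak convergence assumption}. Fix $n$ and a point $\theta\in\Theta$. On a common probability space, let $\{\hat T^n_i\}_{i\in\bb N}$ be the i.i.d.\ family of SGD operators of Assumption~\ref{asp: operator independence}, and let $\tilde\theta^n_0\sim\pi_n$ be drawn independently of this family. Define two chains driven by the \emph{same} operators: $\hat\theta^n_0=\theta$ with $\hat\theta^n_{k+1}=\hat T^n_k(\hat\theta^n_k)$, and $\tilde\theta^n_{k+1}=\hat T^n_k(\tilde\theta^n_k)$. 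Then $\hat\theta^n_k\sim P^k(\theta,\cdot)$ by construction, and since $\pi_n$ is invariant for the chain (Lemma~\ref{thm: theorem 5.5} together with Lemma~\ref{thm: sgd satisfies weak convergence assumption}) we have $\tilde\theta^n_k\sim\pi_n$ for every $k$; hence $(\hat\theta^n_k,\tilde\theta^n_k)$ is an admissible coupling of $P^k(\theta,\cdot)$ and $\pi_n$.

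First I would iterate the contraction inequality. For the step size $\beta$ fixed earlier, each realization of $\hat T^n_i$ is a contraction with the deterministic coefficient $\hat\alpha^n_i=\alpha_T:=\max\{|1-\beta\eta|,|1-\beta L|\}<1$, which is exactly what the proof of Lemma~\ref{thm: sgd satisfies weak convergence assumption} shows. Consequently, almost surely,
\begin{align*}
    \|\hat\theta^n_k-\tilde\theta^n_k\|_2
    \;\le\;\hat\alpha^n_{k-1}\,\|\hat\theta^n_{k-1}-\tilde\theta^n_{k-1}\|_2
    \;\le\;\cdots\;\le\;\Big(\prod_{i=0}^{k-1}\hat\alpha^n_i\Big)\,\|\theta-\tilde\theta^n_0\|_2 .
\end{align*}
Taking expectations and recalling that $\wass_{1,\|\cdot\|_2}$ is an infimum over couplings,
\begin{align*}
    \wass_{1,\|\cdot\|_2}\big(P^k(\theta,\cdot),\pi_n\big)
    \;\le\;\bb E\|\hat\theta^n_k-\tilde\theta^n_k\|_2
    \;\le\;\Big(\prod_{i=0}^{k-1}\hat\alpha^n_i\Big)\,\bb E\|\theta-\tilde\theta^n_0\|_2
    \;=\;\Big(\prod_{i=0}^{k-1}\hat\alpha^n_i\Big)\,\wass_{1,\|\cdot\|_2}(\delta_\theta,\pi_n),
\end{align*}
where the last equality uses that $\delta_\theta\otimes\pi_n$ is the only coupling of $\delta_\theta$ and $\pi_n$. (If the coefficients $\hat\alpha^n_i$ were genuinely random, one would instead write $\bb E[\prod_i\hat\alpha^n_i]=\prod_i\bb E[\hat\alpha^n_i]$ using the independence of the driving operators from $\tilde\theta^n_0$; here they equal a fixed constant, so this point is vacuous.)

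Finally I would sum the resulting geometric bound over $k\ge1$ and take the supremum over $\theta\in\Theta$:
\begin{align*}
    \Gamma_\wass
    \;=\;\sup_{\theta\in\Theta}\sum_{k\ge1}\wass_{1,\|\cdot\|_2}\big(P^k(\theta,\cdot),\pi_n\big)
    \;\le\;\Big(\sup_{\theta\in\Theta}\wass_{1,\|\cdot\|_2}(\delta_\theta,\pi_n)\Big)\sum_{k\ge1}\prod_{i=0}^{k-1}\hat\alpha^n_i
    \;\le\;\diam(\Theta)\,\frac{\alpha_T}{1-\alpha_T}\;<\;\infty,
\end{align*}
which is finite because $\Theta$ is compact (Assumption~\ref{asp: strong convex and smooth}(5)) and $\alpha_T<1$; this is the asserted formula up to the obvious re-indexing of the product. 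Since $\bl\le\wass_{1,\|\cdot\|_2}$ pointwise, we also obtain $\Gamma_\bl\le\Gamma_\wass<\infty$, so Corollary~\ref{thm: Hoeffding ineq wass bdd space} applies. There is no deep obstacle here; the points needing care are purely bookkeeping: verifying that the second coupled chain genuinely has marginal $\pi_n$ at every time step (invariance of $\pi_n$, already established), that the synchronous coupling is a legitimate measurable construction on a single probability space, and that the contraction coefficient of $\hat T^n_i$ is bounded by a constant strictly below one uniformly in $i$ and in the mini-batch randomness — all of which are supplied by the preceding two lemmas.
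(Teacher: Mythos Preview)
Your proposal is correct and follows essentially the same route as the paper: both arguments use the synchronous coupling (driving two copies of the chain with the same random operators $\hat T^n_k$) to convert the almost-sure contraction $\|\hat T^n_k(\theta)-\hat T^n_k(\theta')\|_2\le\hat\alpha^n_k\|\theta-\theta'\|_2$ into a one-step Wasserstein contraction, iterate, and then sum the resulting geometric series. The paper phrases the one-step bound directly at the level of the Wasserstein infimum over couplings, while you make the coupling construction explicit and track the pointwise bound before taking expectations; the content is the same, and your bookkeeping remarks (invariance of $\pi_n$ for the second marginal, the product index shift, and the final diameter bound) are all in order.
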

\begin{proof}
    Let $Q_n$ denote the transition kernel of $\boldsymbol{\hat\theta}^n$. Consider two independent chains starting from $\mu$ and $\nu$, respectively. The contraction in the Wasserstein metric from time $k$ to $k+1$ is written as follows,
    \begin{align*}
        \wass_{1, \|\cdot\|_2} (\mu Q_n^{k+1}, \nu Q_n^{k+1}) =& \inf_{\Pi\in\Pi(\mu P^k, \nu P^k)}\int_{\Theta\times\Theta} \|\hat T^n_k(\theta) - \hat T^n_k(\theta')\|_2 d\Pi(\mu Q_n^k, \nu Q_n^k)\nn\\
        \leq& \hat\alpha_k^n \inf_{\Pi\in\cal C(\mu Q_n^k, \nu Q_n^k)} \int_{\Theta\times\Theta} \|\theta - \theta'\|_2d\Pi(\mu Q_n^k, \nu Q_n^k)\nn\\
        = & \hat\alpha_k^n \wass_{1, \|\cdot\|_2} (\mu Q_n^k, \nu Q_n^k),
    \end{align*}
    for $k = 0, 1, 2,\cdots$. Thus, $\{\alpha_k^n\}$ is also the contraction coefficient of the $\boldsymbol{\hat\theta}$ in Wasserstein-(1, $\|\cdot\|_2$). Consequently, we can deduce that
    \begin{align*}
        \Gamma_\wass &= \sup_{\theta\in\Theta} \sum_{k=0}^\infty \wass_{1, \|\cdot\|_2}(Q_n^k(\theta, \cdot), \pi_n) =  \sup_{\theta\in\Theta} \sum_{k=0}^\infty \wass_{1, \|\cdot\|_2}(\delta_{\theta} Q_n^k, \pi_n Q_n^k) \\
        &\leq  \sup_{\theta\in\Theta} \sum_{i=0}^\infty \wass_{1, \|\cdot\|_2} (\delta_{\theta}, \pi_n) \prod_{i=0}^k\hat\alpha_i^n.
    \end{align*}
    The proof is complete by noting $\hat\alpha_i^n < 1$ for every realization.
\end{proof}
Combining Lemma \ref{thm: theta chain conv in wass} and the fact that $\bl \leq \wass_{1, \|\cdot\|_2}$ yields the following corollary which states that $\Gamma_\bl$ defined in Equation \ref{eqn: hoeff ineq wass cond bdd} is also bounded. This completes our preparation for the main conclusion in this example.
\begin{corollary}
\label{cor: bl gamma is bounded}
    It holds that $\Gamma_\bl \leq \Gamma_\wass < \infty$.
\end{corollary}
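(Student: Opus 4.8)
The plan is to reduce the corollary to two facts that are already in hand: the pointwise domination of the bounded Lipschitz metric by the Wasserstein-$1$ metric on $\Theta$, and the finiteness of $\Gamma_\wass$ established in Lemma~\ref{thm: theta chain conv in wass}.

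First I would record the relevant generator inclusion. By definition, the generator of $\bl$ on $\Theta$ is $\mbf F_\bl = \{f\in\bb M(\Theta): \max\{\|f\|_\infty, \lip(f)\}\le 1\}$, whereas by Kantorovich--Rubinstein duality the generator of $\wass_{1,\|\cdot\|_2}$ on $\Theta$ is $\{f\in\bb M(\Theta): \lip(f)\le 1\}$, where in both cases the Lipschitz constant is taken with respect to the \emph{same} ground metric $\|\cdot\|_2$ on $\Theta$. Since $\max\{\|f\|_\infty,\lip(f)\}\le 1$ trivially forces $\lip(f)\le 1$, we have $\mbf F_\bl \subseteq \{f:\lip(f)\le 1\}$, so Proposition~\ref{thm: equivalent ipm}(1) yields $\bl(\mu,\nu)\le \wass_{1,\|\cdot\|_2}(\mu,\nu)$ for every pair $\mu,\nu$ of probability measures on $\Theta$.

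Next I would apply this inequality term by term along the chain. For every $\theta\in\Theta$ and every $i\in\bb N$, setting $\mu = Q_n^i(\theta,\cdot)$ and $\nu = \pi_n$ gives $\bl(Q_n^i(\theta,\cdot),\pi_n)\le \wass_{1,\|\cdot\|_2}(Q_n^i(\theta,\cdot),\pi_n)$; summing over $i\ge 0$ and then taking the supremum over $\theta\in\Theta$ produces $\Gamma_\bl \le \Gamma_\wass$. Finiteness of the right-hand side is precisely the content of Lemma~\ref{thm: theta chain conv in wass}, which provides $\Gamma_\wass = \sup_{\theta\in\Theta}\wass_{1,\|\cdot\|_2}(\delta_\theta,\pi_n)\sum_{k=1}^\infty\prod_{i=0}^k\hat\alpha_i^n < \infty$: each realization contraction coefficient satisfies $\hat\alpha_i^n < 1$ so the series converges, and $\Theta$ has finite diameter so $\sup_{\theta}\wass_{1,\|\cdot\|_2}(\delta_\theta,\pi_n)$ is finite. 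Chaining the two displays gives $\Gamma_\bl \le \Gamma_\wass < \infty$, as claimed.

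There is no genuinely hard step here; the only point deserving a moment's care is the bookkeeping that both metrics are built from the same metric $\|\cdot\|_2$ on $\Theta$, so that the generator comparison is literally a set inclusion and Proposition~\ref{thm: equivalent ipm}(1) applies, together with the observation that $\pi_n$ and all the iterated laws $Q_n^i(\theta,\cdot)$ have finite first moment, which is automatic since $\Theta$ is compact and hence every probability measure on $\Theta$ is admissible for both $\bl$ and $\wass_{1,\|\cdot\|_2}$.
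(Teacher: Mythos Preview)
Your proposal is correct and follows exactly the same approach as the paper: use the generator inclusion $\mbf F_\bl \subseteq \{f:\lip(f)\le 1\}$ to get $\bl \le \wass_{1,\|\cdot\|_2}$ pointwise, then pass to the term-by-term sum and supremum to conclude $\Gamma_\bl \le \Gamma_\wass$, with finiteness of $\Gamma_\wass$ supplied by Lemma~\ref{thm: theta chain conv in wass}. The paper's own proof is a one-line remark to this effect; you have simply filled in the details it leaves implicit.
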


With the above development, we are ready to present the main result of this example. Corollary \ref{cor: bl gamma is bounded} allows us to apply Hoeffding's inequality in Corollary \ref{thm: Hoeffding ineq wass bdd space} to determine the concentration of Ployak-Ruppert averaging. As a result, we demonstrate the sample complexity bound for the averaged iterates in the following theorem. 
\begin{theorem}
\label{thm: RSA sample complexity}
     Let $\bar{\theta}^n_m = \frac{1}{m}\sum^{m-1}_{k=0}\hat{\theta}^n_k$ for $m\geq 1$. Suppose the Markov chain $\boldsymbol{\hat\theta}^n$ converges in Wasserstein-(1, $\|\cdot\|_2$) metric with constant $\Gamma_\bl$. Given $\epsilon > 0$ and $\delta \in (0, 1)$, the $\|\bar{\theta}^n_m - \bb E_{\pi_n}(\theta)\|_2 < \epsilon$ holds with probability at least $1-\delta$ whenever
    \begin{align}
    \label{eqn: RSA sample complexity}
        m \geq \frac{d_p^2}{4\epsilon^2}\log\frac{2d_p}{\delta} \diam(\Theta)^2(2\Gamma_\bl +1)^2,
    \end{align}
    where $\diam(\Theta) <\infty$ is the diameter of the parameter space $\Theta$ in $\|\cdot\|_2$.
\end{theorem}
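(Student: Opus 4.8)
The plan is to reduce the $d_p$-dimensional bound on $v \coloneqq \bar\theta^n_m - \bb E_{\pi_n}(\theta)$ to $d_p$ scalar concentration statements, each handled by Corollary \ref{thm: Hoeffding ineq wass bdd space}, and then glue them together with a union bound. For $j\in\{1,\dots,d_p\}$ let $f_j(\theta)\coloneqq e_j^{\transpose}\theta$ be the $j$-th coordinate functional, so that the $j$-th component of $v$ is $v_j = \frac{1}{m}\sum_{k=0}^{m-1}f_j(\hat\theta^n_k) - \pi_n(f_j)$. Since $\|v\|_2 \le \|v\|_1 = \sum_{j=1}^{d_p}|v_j|$, the event $\{\|v\|_2 \ge \epsilon\}$ is contained in $\bigcup_{j=1}^{d_p}\{|v_j| \ge \epsilon/d_p\}$, and a union bound gives
\begin{align*}
    \bb{P}_\mu\bigl[\|\bar\theta^n_m - \bb E_{\pi_n}(\theta)\|_2 \ge \epsilon\bigr] \;\le\; \sum_{j=1}^{d_p} \bb{P}_\mu\Bigl[\bigl|\tfrac{1}{m}{\textstyle\sum_{k=0}^{m-1}}f_j(\hat\theta^n_k) - \pi_n(f_j)\bigr| \ge \tfrac{\epsilon}{d_p}\Bigr].
\end{align*}

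Before invoking Corollary \ref{thm: Hoeffding ineq wass bdd space} on each summand I would check its hypotheses. The requirement $\Gamma_\bl < \infty$ is exactly Corollary \ref{cor: bl gamma is bounded}, which in turn rests on Lemma \ref{thm: theta chain conv in wass}. For the witness functions: since $\Theta$ is compact and translating it by a fixed vector alters neither $v$ nor the contraction coefficients of $\boldsymbol{\hat\theta}^n$, we may assume $0\in\Theta$; then each $f_j$ is bounded with $\|f_j\|_\infty \le \sup_{\theta\in\Theta}\|\theta\|_2 \le \diam(\Theta)$, is $1$-Lipschitz with respect to $\|\cdot\|_2$, and satisfies $\sp(f_j)\le\diam(\Theta)$, so its minimal stretch into $\mbf F_\bl$ is $M_j = \|f_j\|_\bl \le \max\{1,\diam(\Theta)\}$. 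Also $\bl(\mu,\pi_n) \le \wass_{1,\|\cdot\|_2}(\mu,\pi_n) \le \diam(\Theta)$, since both measures are supported on $\Theta$. With $B \coloneqq \diam(\Theta)(2\Gamma_\bl+1)$ we have $2M_j\Gamma_\bl + \sp(f_j) \le B$ for every $j$, and Corollary \ref{thm: Hoeffding ineq wass bdd space} applied with $n\leftarrow m$, $\epsilon\leftarrow\epsilon/d_p$, $f\leftarrow f_j$ yields, whenever $\epsilon/d_p$ exceeds the stated threshold,
\begin{align*}
    \bb{P}_\mu\Bigl[\bigl|\tfrac{1}{m}{\textstyle\sum_{k=0}^{m-1}}f_j(\hat\theta^n_k) - \pi_n(f_j)\bigr| \ge \tfrac{\epsilon}{d_p}\Bigr] \;\le\; 2\exp\Bigl(-\frac{2\bigl[m\epsilon/d_p - \diam(\Theta)B/2\bigr]^2}{m B^2}\Bigr).
\end{align*}

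Finally I would combine the pieces: the union bound gives a failure probability of at most $2d_p\exp\bigl(-2[m\epsilon/d_p - \diam(\Theta)B/2]^2/(mB^2)\bigr)$, which I set $\le\delta$ and solve for $m$. Requiring the exponent to be at least $\log(2d_p/\delta)$, and absorbing the order-one shift $\diam(\Theta)B/2$ — once $m$ is as large as claimed it is dominated by $m\epsilon/d_p$, so the bracketed term is a fixed fraction of $m\epsilon/d_p$ — produces a sufficient condition which, after tracking the numerical factors and substituting $B = \diam(\Theta)(2\Gamma_\bl+1)$, is exactly Equation \ref{eqn: RSA sample complexity}. The main (and essentially only delicate) obstacle is this last bookkeeping step: one has to verify that the stated lower bound on $m$ simultaneously implies the threshold hypothesis $\epsilon/d_p > (2m)^{-1}\bl(\mu,\pi_n)(2M_j\Gamma_\bl+\sp(f_j))$ of Corollary \ref{thm: Hoeffding ineq wass bdd space} and makes the quadratic exponent dominate $\log(2d_p/\delta)$, and that the per-coordinate constants $\|f_j\|_\bl$, $\sp(f_j)$, $\bl(\mu,\pi_n)$ are all uniformly controlled by $\diam(\Theta)$. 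The norm reduction $\|v\|_2\le\|v\|_1$, the identification of the $f_j$ as bounded Lipschitz functions, and the union bound itself are routine.
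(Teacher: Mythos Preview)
Your proposal is correct and follows essentially the same route as the paper: reduce $\|\cdot\|_2$ to $\|\cdot\|_1$, apply Corollary \ref{thm: Hoeffding ineq wass bdd space} coordinatewise to the functions $f_j(\theta)=e_j^{\transpose}\theta$, union bound, and then solve the resulting inequality for $m$. The paper carries out the last step by explicitly writing the quadratic in $m$ and invoking the quadratic formula before simplifying in the regime of small $\epsilon,\delta$ and large $d_p,\Gamma_\bl,\diam(\Theta)$, whereas you absorb the additive shift heuristically; otherwise the arguments are the same.
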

\begin{proof}
    Let $f$ be the identity function on $\Theta$ and $\|\cdot\|_1$ denote the 1-norm on $\bb{R}^{d_p}$. We have $\|\theta\|_2 \leq \|\theta\|_1$ for all $\theta \in \bb{R}^{d_p}$, which implies
    \begin{align}
    \label{eqn: 2 norm to 1 norm ineq}
        &\|\bar{\theta}^n_m - \pi_n(f)\|_2 = \bigg\|\frac{1}{m}\sum^{m-1}_{k=0}\hat{\theta}^n_k - \pi_n(f)\bigg\|_2 \nn\\ 
        \leq &\bigg\|\frac{1}{m}\sum^{m-1}_{k=0}\hat{\theta}^n_k - \pi_n(f)\bigg\|_1 = \sum_{i=1}^{d_p}\bigg|\frac{1}{m} \sum_{k=0}^{m-1}f_i(\hat{\theta}^n_k) - \pi_n(f_i)\bigg|,
    \end{align}
    where $f_i:\bb{R}^{d_p} \to \bb{R}$ maps a vector in $\bb{R}^{d_p}$ to its $i$-th component for $i\in\{1, \cdots, d_p\}$. For identity function $f$, it holds $\|f_i\|_\bl = \max\{\lip(f_i), \|f_i\|_\infty\} \leq \max\{1, \diam(\Theta)/2\}$. Applying Equation \ref{eqn: Hoeffding ineq wass bdd} to each summand of the last term in Equation \ref{eqn: 2 norm to 1 norm ineq} yields
    \begin{align}
        \bb{P}_{\theta_0}\bigg[\bigg|\frac{1}{m} \sum_{k=0}^{m-1}f_i(\hat{\theta}^n_k) - \pi_n(f_i)\bigg| > \epsilon \bigg] \leq 2\exp\bigg\{\frac{-2(m\epsilon - 2\|f_i\|_\bl\Gamma_\bl - \sp(f_i))^2}{m(2\|f_i\|_\bl\Gamma_\bl+ \sp(f_i))^2}\bigg\},
    \end{align}
    for $i\in\{1,\cdots, d_p\}$. After applying union bound, we have
    \begin{align}
         \bb{P}_{\theta_0} &\bigg[\sum_{i=1}^{d_p}\bigg|\frac{1}{m} \sum_{k=0}^{m-1}f_i(\hat{\theta}^n_k) - \pi_n(f_i)\bigg| > \epsilon \bigg] \leq \bb{P}_{\theta_0}\bigg[\bigcup_{i=1}^{d_p}\bigg\{\bigg|\frac{1}{m} \sum_{k=0}^{m-1}f_i(\hat{\theta}^n_k) - \pi_n(f_i)\bigg| > \frac{\epsilon}{d_p}\bigg\} \bigg]\nn\\
         &\leq \sum_{i=1}^{m_{\theta}}\bb{P}_{\theta_0}\bigg[\bigg|\frac{1}{m} \sum_{k=0}^{m-1}f_i(\hat{\theta}^n_k) - \pi_n(f_i)\bigg| > \frac{\epsilon}{d_p}\bigg]\nn\\
         &\leq 2d_p\exp\bigg\{\frac{-2(m\epsilon/d_p - 2\diam(\Theta)\Gamma_\bl - \max_{i\in\{1, \cdots, d_p\}}\|f_i\|_\infty)^2}{m(2\diam(\Theta)\Gamma_\bl + \max_{i\in\{1, \cdots, d_p\}}\|f_i\|_\infty)^2}\bigg\}.
    \end{align}
    Note that $f$ is the identity map, thus $\max_{i\in\{1, \cdots, d_p\}}\|f_i\|_\infty = \sup_{\theta\in\Theta}\|\theta\|_\infty \leq \diam(\Theta)$. Using the inequality in Equation \ref{eqn: 2 norm to 1 norm ineq}, we conclude that
    \begin{align}
        \bb{P}_{\theta_0}[\|\bar{\theta}^n_m - \pi_n(f)\|_2 > \epsilon] &\leq  \bb{P}_{\theta_0}\bigg[\sum_{i=1}^{d_p}\bigg|\frac{1}{m} \sum_{k=0}^{m-1}f_i(\hat{\theta}^n_k) - \pi_n(f_i)\bigg| > \epsilon \bigg] \nn\\
        &\leq 2d_p\exp\bigg\{\frac{-2(m\epsilon/d_p - 2\diam(\Theta)\Gamma_\bl-\diam(\Theta))^2}{m(2\diam(\Theta)\Gamma_\bl + \diam(\Theta))^2}\bigg\}.\nn
    \end{align}
    By setting $\delta \geq2d_p\exp\bigg\{\frac{-2(m\epsilon/d_p - 2\diam(\Theta)\Gamma_\bl-\diam(\Theta))^2}{m(2\diam(\Theta)\Gamma_\bl + \diam(\Theta))^2}\bigg\}$ and solving for the lower bound of $m$, we have
    \begin{align*}
        m &\geq \frac{-b + \sqrt{b^2 - 4ac}}{2a}, \text{ where}\\
        a = \frac{2\epsilon^2}{d_p^2},\ 
        b &= -\log\frac{2d_p}{\delta} (2\diam(\Theta)\Gamma_\bl + \diam(\Theta))^2 - 4(2\diam(\Theta)\Gamma_\bl + \diam(\Theta))\frac{\epsilon}{d_p},\\
        c &= 2(2\diam(\Theta)\Gamma_\bl + \diam(\Theta)).
    \end{align*}
    For small $\epsilon$, $\delta$, and large $d_p$, $\Gamma_\cal W$, $\diam(\Theta)$, we have 
    \begin{align*}
        m \geq \frac{d_p^2}{4\epsilon^2}\log\frac{2d_p}{\delta} \diam(\Theta)^2(2\Gamma_\bl +1)^2,
    \end{align*}
    which yields the desired result in Equation \ref{eqn: RSA sample complexity}.
\end{proof}

\begin{remark}
    So far we have discussed the concentration of averaged iterates around the mean of $\pi_n$, where $n$ is the batch size. It is expected that as $n\to\infty$ the mean of $\pi_n\to\theta^*$. Indeed, one can directly apply \citet[Theorem 2.2]{gupta2018probabilistic} to the current example to show $\pi_n$ converges weakly to $\mathbbm{1}_{\theta^*}$ without additional assumption other than Assumption \ref{asp: strong convex and smooth}. 
\end{remark}

\subsection{Rested Bandit with UCB}
\label{sec: rested bandit}
In this section, we consider the finite sample analysis of the UCB-type policy for the multi-arm bandit (MAB) with Markovian reward. Suppose $\bb{X}$ is a Polish space. We assume the state of the $i$-th arm $\{X_{i, n}\}_{n\in\bb{N}} \subset \bb{X}$ is an Markov chain with transition kernel $P_i$, for $i\in\{1, \cdots, K\}$. The states of arms evolve independently. The reward of the $i$-th arm is a non-negative function of the state $r_i: \bb{X}\to [0, \infty)$. This is known as the rested or restless bandit problem in the literature. The distinction between rested and restless bandits is that the Markov chain associated with an arm in the former evolves only when the arm is pulled, whereas the Markov chains in the latter evolve regardless of which arm is pulled at each time step. Since the purpose of this example is to demonstrate the usage of Hoeffding's inequality in the analysis of the regret bound, we opt for the classical UCB-type policy for the sake of simplicity. For finite state space and uniform ergodic Markov chains, the UCB-type policy for both rested and restless bandits is studied in \citet{tekin2012online}. We restate their UCB-M algorithm for rested bandit in Algorithm \ref{alg: ucb-m for rested bandit} for completeness.

With our theory, the regret analysis can be readily extended to general state space Markov chains with weaker convergence requirements. The logarithmic regret bound is presented in Theorem \ref{thm: rested bandit regret bound}. Before presenting the regret analysis, we make the following further assumptions for the state process and reward function of each arm.

\begin{assumption}
\label{asp: rested bandit markov chain}
    For each arm $i$, suppose there exist a measurable function $V_i: \bb{X} \to [1, \infty)$
    such that the following condition holds:
    \begin{enumerate}
        \item There exists a differentiable concave increasing function $\phi_i: \bb{R}_+ \to \bb{R}_+$,  $\phi_i(0) = 0$ and a constant $b_i \geq 0$ such that the following drift condition holds with Lyapunov function $V_i$,
        \begin{align}
            P_iV_i \leq  V_i - \phi_i \circ V_i + b_i.\nn
        \end{align}
        \item The Markov chain $\{X_i(n)\}_{n\in\bb{N}}$ induced by transition kernel $P_i$ converges to its unique stationary distribution $\pi_i$ in IPM with generator $\mbf F_i \subset \bb{M}_1(\bb{X})$ and constant $\Gamma_{\mbf F_i}$.
    \end{enumerate}
\end{assumption}
\begin{assumption}
\label{asp: rested bandit reward function}
For $i\in\{1,\cdots, K\}$, the reward function $r_{i}\in R_i\cdot\mbf F_i$ for $R_i<\infty$. Denote $R_{\max} = \max\{R_1, \cdots, R_K\}$ and $R_{\min} = \min\{R_1, \cdots, R_K\}$
\end{assumption}
\begin{remark}
Assumption \ref{asp: rested bandit markov chain} (1) comes from \citet[Theorem 2.1]{butkovsky2014subgeometric} which shows a set of sufficient conditions for convergence of Markov chains in Wasserstein-(1, $\msf d$) metric with subgeometric rate. Assumption \ref{asp: rested bandit markov chain} (1) is known as the drift condition for subgeometric convergence. It becomes the drift condition for geometric convergence when $\phi_i$ is a linear function. Assumption \ref{asp: rested bandit markov chain} (1) turns out to be a crucial component in the proof of Lemma \ref{thm: Tekin lemma 2 mod}. 
\end{remark}

For Markov chains converging in TV metric, necessary and sufficient conditions exist for both geometric and subgeometric rate \citep{douc2018markov}.
For Markov chains converging in the Wasserstein metric, sufficient conditions have been proposed for both geometric rate \citep{hairer2011asymptotic} and subgeometric rate \citep{butkovsky2014subgeometric}. To the best of our knowledge, sufficient conditions for Markov chains converging in general IPM is still an open problem. The drift condition seems to be the common requirement for convergence in TV metric and Wasserstein metric, and they only differ in the definition of the small set (or $d$-small set in the case of Wasserstein-(1, $d$) metric). We only speculate here that the convergence in general IPM requires a new definition of ``small set". 

We introduce the following notations which are used in the sequel.
\begin{itemize}
    \item $X_i(n)$ is the state of arm $i$ at time $n$.
    \item $\mu_i$ is the initial distribution of the state process of arm $i$.
    \item $A(n)$ is the set of selected arms at time $n$.
    \item $T_i(n)$ is the number of time arm $i$ pulled up to time $n$.
    \item $\bar r_i(n) = (1/n)\sum_{t=1}^n r_i(X_i(t))$ is the (sample) average reward for arm $i$ after $n$ pulls.
    \item $h_i(n)$ is the UCB index of arm $i$ at time $n$.
    \item $\eta_i = \pi_i(r_i)$ is the expected reward of arm $i$ under stationary distribution $\pi_i$.
    \item $L$ is the exploration parameter for the UCB policy.
    \item $M$ number of arms pulled per time step, $M\in\{1,\cdots,K-1\}$.
\end{itemize}
Suppose the arms are ordered by their expected rewards in decreasing order, that is, $\eta_1 \geq \eta_2 \geq \cdots \geq \eta_M \geq \eta_{M+1} \geq \cdots \geq \eta_K$. The equality between $\eta_M$ and $\eta_{M+1}$ is allowed under the rested-arm assumption, whereas a strict inequality is required for restless bandits \citep{tekin2012online}. The regret $\ALP R(n)$ is defined as the difference in the expected total reward up to time $n$ between playing the first $M$ arms only and playing according to some policy that determines the selected arms $A(n)$. Formally, we have 
\begin{align}
    \ALP R(n) = n\sum_{j=1}^M\eta_j - \bb{E} \bigg[\sum_{t=1}^n\sum_{i\in A(t)} r_i(X_i(t))\bigg].\nn
\end{align}
The goal of this application example is to show that regret is order-optimal under the UCB-M policy. Before presenting the regret bound in Theorem \ref{thm: rested bandit regret bound}, we introduce the supporting results in Lemma \ref{thm: Tekin lemma 2 mod}, \ref{thm: Tekin lemma 7 mod}, and \ref{thm: Tekin lemma 5 mod}. 
\begin{algorithm}[h]
\caption{UCB-M for rested bandit}\label{alg: ucb-m for rested bandit}
\SetAlgoLined
\SetKwInput{Input}{input}
\Input{exploration constant $L$\;}
Play each arm $M$ time in the first $K$ time steps\;
\While{$n \geq K$}{
\ForEach{$i \in \{1, \cdots, K\}$}{
   Calculate index: $h_{i}(n) = \bar r_i(T_i(n)) + \sqrt{\frac{L\log n}{T_i(n)}}$\;
}
Play $M$ arms with highest indices\;
Observe $R_{i,n+1}$ and calculate $T_i(n+1)$ for $\forall i\in\{1, \cdots, K\}$\;
$n \gets n + 1$\;
}
\end{algorithm}

Lemma \ref{thm: Tekin lemma 2 mod} is a modification of \citet[Theorem 2.1]{moustakides1999extension}. The original assumption is that the Markov chain is geometrically convergent in TV metric which implies the drift condition is satisfied \citep[Theorem 1.1]{moustakides1999extension}. We replace that with Assumption \ref{asp: rested bandit markov chain} (1) and obtain the following Lemma. Lemma \ref{thm: Tekin lemma 7 mod} gives an upper bound on the average of times the optimal arm is missed by the UCB-M policy. It is an adaption of \citet[Lemma 7]{tekin2012online} using the Hoeffding-type inequality from Theorem \ref{thm: main result time indep}. Building on top of Lemma \ref{thm: Tekin lemma 7 mod},  Lemma \ref{thm: Tekin lemma 5 mod} further upper-bounds the expected number of times each suboptimal arm is pulled. Combining Lemma \ref{thm: Tekin lemma 2 mod} and \ref{thm: Tekin lemma 5 mod}, we are able to deduce the logarithmic regret bound in Theorem \ref{thm: rested bandit regret bound}.

\begin{lemma}
\label{thm: Tekin lemma 2 mod}
    Suppose Assumption \ref{asp: rested bandit reward function} and \ref{asp: rested bandit markov chain} are satisfied and $\bb{E}V_i(X_i(0))<\infty$ where the expectation is taken with respect to initial distribution $\mu_i$, then for any stopping time $\tau$ such that $\bb{E}\tau < \infty$, we have
    \begin{align}
        \sum_{n=1}^\tau r_i(X_i(n)) - \eta_i \bb{E}\tau =  \bb{E}g_i(X_i(0)) - \bb{E}g_i(X_i(N)),\nn
    \end{align}
    where
    \begin{align}
        g_i(x) = \sum_{n=1}^\infty \bigg[\bb{E}_x [r_i(X_n)] - \pi_i(r_i)\bigg]\nn
    \end{align}
    is the solution to $g_i(x) - P_ig_i(x) = r_i(x) - \pi_i(f_i)$ which is the Poisson's equation associated with transition kernel $P_i$.
\end{lemma}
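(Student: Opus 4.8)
The plan is to run the classical Poisson-equation argument: show that the relevant Poisson solution is bounded, form its Dynkin martingale, apply the optional stopping theorem, and finish with a short index computation. Write $\hat g_i(x) \coloneqq \sum_{n=0}^{\infty}\big(P_i^n r_i(x) - \pi_i(r_i)\big)$, so that the $g_i$ appearing in the statement equals $\hat g_i - \big(r_i - \pi_i(r_i)\big)$. The first thing I would establish is that $\hat g_i$ is a bounded measurable function. Since $r_i \in R_i\cdot\mbf F_i$ (Assumption~\ref{asp: rested bandit reward function}) and $\mbf F_i \subset \bb M_1(\bb X)$, the reward $r_i$ is bounded and $|P_i^n r_i(x) - \pi_i(r_i)| \le R_i\,\ipm_{\mbf F_i}\big(P_i^n(x,\cdot),\pi_i\big)$ for every $n\ge 1$ and $x\in\bb X$; summing over $n$ and invoking the generalized concentrability of Assumption~\ref{asp: rested bandit markov chain}(2) yields absolute convergence of the defining series together with the uniform bound $\|\hat g_i\|_\infty \le 2\|r_i\|_\infty + R_i\Gamma_{\mbf F_i} < \infty$. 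Hence $g_i$ is bounded as well, every expectation in the statement is automatically finite, and the drift condition of Assumption~\ref{asp: rested bandit markov chain}(1) actually plays no role in this lemma --- it is carried only to keep the hypotheses aligned with \citet[Theorem~2.1]{moustakides1999extension}, where $r_i$ may be unbounded.

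Next I would verify the Poisson equation $\hat g_i - P_i\hat g_i = r_i - \pi_i(r_i)$. Since $\sum_{n\ge 1}|P_i^n r_i(y) - \pi_i(r_i)| \le R_i\Gamma_{\mbf F_i}$ uniformly in $y$, dominated convergence permits interchanging $P_i$ with the infinite sum, giving $P_i\hat g_i(x) = \sum_{n\ge 1}\big(P_i^n r_i(x) - \pi_i(r_i)\big) = \hat g_i(x) - \big(r_i(x) - \pi_i(r_i)\big)$. I would then introduce, relative to $\ALP F_n \coloneqq \sigma(X_i(0),\dots,X_i(n))$, the Dynkin martingale
\beqq{M_n \;\coloneqq\; \hat g_i(X_i(n)) - \hat g_i(X_i(0)) - \sum_{k=1}^{n}\big(P_i\hat g_i(X_i(k-1)) - \hat g_i(X_i(k-1))\big),}
which satisfies $M_0 = 0$ and is a martingale by the Markov property; substituting the Poisson equation rewrites it as $M_n = \hat g_i(X_i(n)) - \hat g_i(X_i(0)) + \sum_{k=0}^{n-1} r_i(X_i(k)) - n\,\pi_i(r_i)$. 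Its increments obey $|M_n - M_{n-1}| = |\hat g_i(X_i(n)) - P_i\hat g_i(X_i(n-1))| \le 2\|\hat g_i\|_\infty$, so, since $\bb E\tau < \infty$, the optional stopping theorem for martingales with uniformly bounded increments gives $\bb E[M_\tau] = 0$, i.e.\ $\bb E\big[\sum_{k=0}^{\tau-1} r_i(X_i(k))\big] - \pi_i(r_i)\,\bb E[\tau] = \bb E[\hat g_i(X_i(0))] - \bb E[\hat g_i(X_i(\tau))]$.

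Finally, substituting $\hat g_i = g_i + \big(r_i - \pi_i(r_i)\big)$ on the right-hand side and moving the resulting $r_i(X_i(0))$ and $r_i(X_i(\tau))$ terms across converts $\sum_{k=0}^{\tau-1}$ into $\sum_{n=1}^{\tau}$ and yields exactly the identity in the statement (with $\eta_i = \pi_i(r_i)$). I do not expect a serious obstacle here: the argument is essentially routine, and the only places needing genuine care are the two interchange-of-limits steps --- passing $P_i$ inside the series, and the optional-stopping/uniform-integrability step --- both of which go through cleanly thanks to the uniform bound $R_i\Gamma_{\mbf F_i}$ secured in the first step. The one genuinely fiddly point is the bookkeeping between the sum-from-$0$ Poisson solution $\hat g_i$ and the sum-from-$1$ function $g_i$ used in the statement; tracking the boundary terms $r_i(X_i(0))$ and $r_i(X_i(\tau))$ through the substitution is what makes the final arithmetic land on the stated form.
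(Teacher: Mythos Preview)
Your argument is correct and is in fact more self-contained than the paper's. The paper does not run the optional-stopping argument directly; instead it manipulates the drift condition of Assumption~\ref{asp: rested bandit markov chain}(1) (applying $\phi_i^{-1}$ and Jensen) to reach a Foster--Lyapunov inequality for $\phi_i^{-1}\circ V_i$, invokes \citet[Proposition~11.3.2]{meyn2012markov} to control $\bb E\big[\sum_{n=0}^{\tau-1}|f(X_i(n))|\big]$, and then defers the rest to the proof of \citet[Theorem~2.1]{moustakides1999extension}. Your route bypasses all of this: by using the generalized concentrability of Assumption~\ref{asp: rested bandit markov chain}(2) together with $r_i\in R_i\cdot\mbf F_i$, you obtain a \emph{uniform} bound $\|\hat g_i\|_\infty\le 2\|r_i\|_\infty+R_i\Gamma_{\mbf F_i}$ on the Poisson solution, after which the Dynkin martingale has bounded increments and the optional stopping theorem applies immediately. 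Your remark that the drift condition plays no role here is accurate and is a genuine simplification over the paper --- the drift hypothesis is only needed in Moustakides's framework to handle unbounded $r_i$, which is excluded here by $\mbf F_i\subset\bb M_1(\bb X)$. The index bookkeeping you flag (passing from $\hat g_i=\sum_{n\ge 0}$ to $g_i=\sum_{n\ge 1}$ and from $\sum_{k=0}^{\tau-1}$ to $\sum_{n=1}^{\tau}$) is handled correctly; note incidentally that it is $\hat g_i$, not the paper's $g_i$, that solves $g-Pg=r_i-\pi_i(r_i)$ exactly, so your choice to work with $\hat g_i$ is the cleaner one.
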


\begin{proof}
    From Assumption \ref{asp: rested bandit markov chain} (1), we have
    \begin{align}
    \label{eqn: subgeo drift}
        P_iV_i \leq  V_i - \phi_i \circ V_i + b_i.
    \end{align}
    Since $\phi_i$ is differentiable and concave on $\bb{R}_+$, its derivative $\phi_i'$ is non-increasing and has no simple discontinuity in any closed interval in $\bb{R}_+$. However, monotonic functions have no discontinuities of the second kind. Thus, it follows that $\phi_i'$ is a continuous function on $\bb{R}_+$. Suppose $\phi_i'(x) \neq 0$ for $x\in\bb{R}_+$, $\phi$ is invertible on $\bb{R}_+$ by inverse function theorem, and $\phi_i^{-1}$ is a convex function. Applying $\phi_i^{-1}$ to both sides of Equation \ref{eqn: subgeo drift} and Jensen's inequality to the right-hand side, we have
    \begin{align}
        P_i(\phi_i^{-1}\circ V_i) \leq  \phi_i^{-1}\circ V_i - V_i + \phi_i^{-1}(b_i).
    \end{align}
    For any function $f\in\bb{M}_{V_i}(\bb{X})$, i.e. $\|f\|_{V_i} = \sup_{x\in\bb{X}}|f(x)/V_i(x)|<\infty$, and any stopping time $\tau$, we can apply Proposition 11.3.2 from \citet{meyn2012markov} with $Z_k =\|f\|_{V_i} \phi_i^{-1} \circ V_i(X_i(k))$, $f_k(x) = |f(x)|$ and $s_k(x) = \|f\|_{V_i}\phi_i^{-1}(K_i)$ and get 
    \begin{align}
    \label{eqn: stopping time sum for mc}
        \bb{E}\bigg[\sum_{n=0}^{\tau-1}|f(X_i(n))|\bigg] \leq \|f\|_{V_i}\bigg(\bb{E}[\phi_i^{-1}\circ V_i(X_i(0))] + \phi_i^{-1}(b_i) \bb{E}\tau\bigg).
    \end{align}
    Since $r_i$ is bounded by Assumption \ref{asp: rested bandit reward function}, $\|r_i\|_{V_i}\leq\|r_i\|_\infty < \infty$ and we replace $f$ with $r_i$ in Equation \ref{eqn: stopping time sum for mc}. Then, the desired result follows the proof of Theorem 2.1 from \citet{moustakides1999extension} and the fact that we assume $r_i$ is bounded and $\bb{E}V_i(X_i(0)) < \infty$.  
\end{proof}

\begin{lemma}
\label{thm: Tekin lemma 7 mod}
Suppose Assumption \ref{asp: rested bandit markov chain} holds and all arms are rested. Under UCB-M algorithm with constant $L\geq(R_i\Gamma_{\mbf F_i} + \sp(r_i))^2\vee (R_j\Gamma_{\mbf F_j} + \sp(r_j))^2(1/\sqrt{\log K} + 2\sqrt{2})^2$, for any suboptimal arm $i\in\{M+1, \cdots, K\}$ and optimal arm $j\in\{1,\cdots,M\}$, we have
\begin{align}
    \bb{E}\bigg[\sum_{t=K+1}^n\sum_{s=1}^{t-1}\sum_{s_i=l}^{t-1} \bbm{1}\{h_{j}(s)\leq h_{i}(s_i)\}\bigg] \leq 2\sum_{t=1}^n t^{-2},
\end{align}
where $l = \left\lceil\frac{4L\log n}{(\eta_M - \eta_i)^2}\right\rceil$.
\end{lemma}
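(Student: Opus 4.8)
The plan is to mirror the proof of \citet[Lemma 7]{tekin2012online}, replacing the i.i.d.\ Hoeffding bound used there with the Markov-chain Hoeffding inequality of Theorem \ref{thm: main result time indep}. Write $c_{t,s}\coloneqq\sqrt{L\log t/s}$ for the confidence radius and set $A_i\coloneqq R_i\Gamma_{\mbf F_i}+\sp(r_i)$, $A_j\coloneqq R_j\Gamma_{\mbf F_j}+\sp(r_j)$, both finite by Assumptions \ref{asp: rested bandit reward function} and \ref{asp: rested bandit markov chain}(2) (and positive in the non-degenerate case; if $A_i=0$ or $A_j=0$ the corresponding reward average is a.s.\ exact and its tail vanishes). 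If $\eta_i=\eta_M$ then $l=+\infty$, the index set $\{l,\dots,t-1\}$ is empty and the bound is trivial, so assume $\eta_i<\eta_M$. First I would fix $t\in\{K+1,\dots,n\}$, $s\in\{1,\dots,t-1\}$ and $s_i\in\{l,\dots,t-1\}$, and use the standard deterministic decomposition: the event $\{h_j(s)\leq h_i(s_i)\}$ forces at least one of $\cal E_a\coloneqq\{\bar r_j(s)\leq\eta_j-c_{t,s}\}$, $\cal E_b\coloneqq\{\bar r_i(s_i)\geq\eta_i+c_{t,s_i}\}$, $\cal E_c\coloneqq\{\eta_j-\eta_i<2c_{t,s_i}\}$, since otherwise $h_j(s)>\eta_j\geq\eta_i+2c_{t,s_i}>h_i(s_i)$. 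The lower limit $l=\lceil 4L\log n/(\eta_M-\eta_i)^2\rceil$ is chosen precisely to rule out $\cal E_c$: for $s_i\geq l$ one has $2c_{t,s_i}\leq 2\sqrt{L\log n/l}\leq\eta_M-\eta_i\leq\eta_j-\eta_i$ (using $j\leq M$, hence $\eta_j\geq\eta_M$, and $t\leq n$). Hence $\bb P(h_j(s)\leq h_i(s_i))\leq\bb P(\cal E_a)+\bb P(\cal E_b)$.

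Next I would bound $\bb P(\cal E_a)$ and $\bb P(\cal E_b)$ with Theorem \ref{thm: main result time indep}. The rested structure is what makes this legitimate: after $s$ pulls of arm $j$, the quantity $s\bar r_j(s)=\sum_{k=1}^s r_j(X_j(k))$ is a deterministic function of the first $s$ steps of the chain with kernel $P_j$ started from $\mu_j$, so its law does not depend on the policy-determined times at which those pulls occurred --- this is exactly why the triple sum over $(s,s_i)$ decouples the event from the algorithm. Since $P_j$ satisfies Equation \ref{eqn: gen con} with constant $\Gamma_{\mbf F_j}$ and $r_j\in R_j\cdot\mbf F_j$, Theorem \ref{thm: main result time indep} applies with $f=r_j$, sample size $s$ and deviation $\epsilon=c_{t,s}$, provided the admissibility condition $\epsilon>s^{-1}A_j$, i.e.\ $sL\log t>A_j^2$, holds; the lower bound $L\geq A_j^2(1/\sqrt{\log K}+2\sqrt2)^2\geq 8A_j^2$ together with $t\geq K+1>e$ (so $\log t>1$) secures this for every $s\geq 1$. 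Moreover $s\epsilon=\sqrt{sL\log t}$, and since $\sqrt L\geq A_j(1/\sqrt{\log K}+2\sqrt2)$ while $s\log t/\log K\geq\log t/\log K>1$ (as $t>K\geq 2$), one obtains $\sqrt{sL\log t}-A_j\geq 2\sqrt2\,A_j\sqrt{s\log t}$, so the exponent in Theorem \ref{thm: main result time indep} is at least $2(2\sqrt2\,A_j\sqrt{s\log t})^2/(sA_j^2)=16\log t$. Thus $\bb P(\cal E_a)\leq 2t^{-16}\leq t^{-4}$ for $t\geq 2$, and the identical argument applied to arm $i$ (with $\epsilon=c_{t,s_i}$ and $A_i$) gives $\bb P(\cal E_b)\leq t^{-4}$.

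Finally, since the inner double sum has fewer than $t^2$ terms, $\sum_{s=1}^{t-1}\sum_{s_i=l}^{t-1}\bb P(h_j(s)\leq h_i(s_i))\leq t^2\cdot 2t^{-4}=2t^{-2}$, and $\sum_{t=K+1}^n 2t^{-2}\leq 2\sum_{t=1}^n t^{-2}$, which is the claimed bound. The conceptual content --- the deterministic decomposition and the ``freezing'' of the random number of pulls into a fixed-sample average --- is routine; I expect the only real work, and the main (if mild) obstacle, to be the constant-chasing: checking that the prescribed lower bound on $L$ simultaneously clears the admissibility threshold of Theorem \ref{thm: main result time indep} uniformly in $s\geq 1$ \emph{and} inflates the resulting exponent past $4\log t$, with the awkward factor $1/\sqrt{\log K}+2\sqrt2$ being tuned for exactly that.
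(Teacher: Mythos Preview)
Your proposal is correct and follows essentially the same route as the paper: the identical three-event decomposition $\cal E_a,\cal E_b,\cal E_c$, the elimination of $\cal E_c$ via the choice of $l$, the application of Theorem \ref{thm: main result time indep} to each arm's partial sum, and the final $t^2\cdot t^{-4}$ collapse. If anything, your constant-chasing is more explicit than the paper's --- you verify the admissibility threshold $\epsilon>s^{-1}A_j$ of Theorem \ref{thm: main result time indep} and track the exponent to $16\log t$, whereas the paper stops at the looser $4\log t$ --- but the argument is the same.
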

\begin{proof}
     Let $c_{t, s} = \sqrt{(L\log t)/s}$ and observe that the event $\{h_{j, s} \leq h_{i, s_i}\}$ implies that at least one of the following must hold
    \begin{align}
        \bar{r}_j(s) &\leq \eta_j - c_{t, s}\label{eqn: k arm bandit event decomp 1}\\
        \bar{r}_{i}(s_i) &\geq \eta_i + c_{t, s_i}\label{eqn: k arm bandit event decomp 2}\\
        \eta_j &< \eta_i + 2c_{t, s_i}.\label{eqn: k arm bandit event decomp 3}
    \end{align}
    We bound the probability of events Equation \ref{eqn: k arm bandit event decomp 1} and Equation \ref{eqn: k arm bandit event decomp 2} using Theorem \ref{thm: main result time indep},
    \begin{align}
        \bb{P}\{\bar{r}_j(s) \leq \eta_j - c_{t, s}\} &= \bb{P}\bigg\{\frac{1}{s}\sum_{t=1}^s r_j(t)\leq  \bb{E}_\pi r_j - c_{t, s} \bigg\} \leq  \exp\bigg(\frac{-2(s c_{t, s}- (R_j\Gamma_{\mbf F_j} + \sp(r_j)))^2}{s(R_j\Gamma_{\mbf F_j} + \sp(r_j))^2 }\bigg)\nn\\
        &\leq \exp\bigg(\frac{-2[\sqrt{L\log t}- (R_j\Gamma_{\mbf F_j} + \sp(r_i))/\sqrt{s}]^2}{(R_j\Gamma_{\mbf F_j} + \sp(r_j))^2}\bigg),\label{eqn: k arm bandit event bound 1}\\
        \bb{P}\{\bar{r}_{i}(s_i) \geq \eta_i + c_{t, s_i}\} &= \bb{P}\bigg\{\frac{1}{s_i}\sum_{t=1}^{s_i} r_i(t)\geq \bb{E}_\pi r_i + c_{t, s_i} \bigg\} \leq \exp\bigg(\frac{-2(s_i c_{t, s_i}- (R_i\Gamma_{\mbf F_i} + \sp(r_i)))^2}{s_i(R_i\Gamma_{\mbf F_i} + \sp(r_i))^2}\bigg)\nn\\
         &\leq \exp\bigg(\frac{-2[\sqrt{L\log t}- (R_i\Gamma_{\mbf F_i} + \sp(r_i))/\sqrt{s_i}]^2}{(R_i\Gamma_{\mbf F_i} + \sp(r_i))^2}\bigg). \label{eqn: k arm bandit event bound 2}
    \end{align}
    If we choose $s_i \geq (4L\log n) / (\eta_M - \eta_i)^2$, then
    \begin{align}
        2c_{t, s_i} = 2\sqrt{\frac{L\log t}{s_i}} \leq 2\sqrt{\frac{L\log t(\eta_M - \eta_i)^2}{4L\log n}} \leq \eta_j - \eta_i \quad \text{for } t\leq n,
    \end{align}
    which indicates event Equation \ref{eqn: k arm bandit event decomp 3} never occurs. Combining Equation \ref{eqn: k arm bandit event bound 1} and Equation \ref{eqn: k arm bandit event bound 2}, we obtain the desired result by choosing an appropriate $L \geq (R_i\Gamma_{\mbf F_i} + \sp(r_i))^2\vee (R_j\Gamma_{\mbf F_j} + \sp(r_j))^2 (1/\sqrt{\log K } + 2\sqrt{2})^2$,
    \begin{align}
        \bb{E}&\bigg[\sum_{t=K+1}^n\sum_{s=1}^{t-1}\sum_{s_i=l}^{t-1} \bbm{1}\{h_{j}(s)\leq h_{i}(s_i)\}\bigg] \\
        &\leq 2\sum_{t=K+1}^n t^2\exp\bigg(\frac{-2[\sqrt{L\log t}- [(R_i\Gamma_{\mbf F_i} + \sp(r_i))/\sqrt{s_i}\vee (R_j\Gamma_{\mbf F_j} + \sp(r_j))/\sqrt{s}]]^2}{(R_i\Gamma_{\mbf F_i} + \sp(r_i))^2\vee (R_j\Gamma_{\mbf F_j} + \sp(r_j))^2}\bigg)\nn\\
        &\leq 2\sum_{t=K+1}^n t^2\exp\bigg(\frac{-2\log t [\sqrt{L}- [(R_i\Gamma_{\mbf F_i} + \sp(r_i))/\sqrt{s_i\log t }\vee (R_j\Gamma_{\mbf F_j} + \sp(r_j))/\sqrt{s\log t }]]^2}{(R_i\Gamma_{\mbf F_i} + \sp(r_i))^2\vee (R_j\Gamma_{\mbf F_j} + \sp(r_j))^2}\bigg)\nn\\
        &\leq 2\sum_{t=K+1}^n t^2\exp(\log t^{-4})\nn\leq 2\sum_{t=K+1}^n t^{-2}.
    \end{align}
\end{proof}

\begin{lemma}
\label{thm: Tekin lemma 5 mod}
Suppose Assumption \ref{asp: rested bandit markov chain} holds and all arms are rested. Under UCB-M algorithm (Algorithm \ref{alg: ucb-m for rested bandit}) with exploration constant $L\geq \max_{i\in\{1,\cdots,K\}} \{(R_i\Gamma_{\mbf F_i} + \sp(r_i))^2\} (1/\sqrt{\log K} + 2\sqrt{2})^2$, for any suboptimal arm $i$, we have
\begin{align}
    \bb{E}[T^i(n)] \leq (1+2\beta)M + \frac{4L\log n}{(\eta_M - \eta_i)^2},
\end{align}
where $\beta = \sum_{t=1}^\infty t^{-2}$.
\end{lemma}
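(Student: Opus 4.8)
The plan is to adapt the classical Auer--Cesa-Bianchi--Fischer decomposition of the pull count for UCB1 to the top-$M$ selection rule of Algorithm~\ref{alg: ucb-m for rested bandit}, feeding the per-pair concentration estimate of Lemma~\ref{thm: Tekin lemma 7 mod} into it. First I would peel off the initialization: by the first line of Algorithm~\ref{alg: ucb-m for rested bandit}, arm $i$ is pulled exactly $M$ times during the first $K$ rounds, so $T_i(n)=M+\sum_{t=K+1}^n\bbm{1}\{i\in A(t)\}$. With $l=\lceil 4L\log n/(\eta_M-\eta_i)^2\rceil$ (the very $l$ of Lemma~\ref{thm: Tekin lemma 7 mod}), split $\bbm{1}\{i\in A(t)\}\le\bbm{1}\{i\in A(t),\,T_i(t-1)<l\}+\bbm{1}\{i\in A(t),\,T_i(t-1)\ge l\}$; since $T_i(\cdot)$ is nondecreasing and increments by one whenever the first indicator fires, $\sum_{t=K+1}^n\bbm{1}\{i\in A(t),\,T_i(t-1)<l\}\le (l-M)^+$, so the initialization count and this term together are at most $\max(M,l)\le M+4L\log n/(\eta_M-\eta_i)^2$, using $l\le 4L\log n/(\eta_M-\eta_i)^2+1$ and $M\ge 1$.

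The core of the argument is the remaining term $\sum_{t=K+1}^n\bbm{1}\{i\in A(t),\,T_i(t-1)\ge l\}$. Whenever the suboptimal arm $i\in\{M+1,\dots,K\}$ lies in $A(t)$, at least one optimal arm $j\in\{1,\dots,M\}$ must lie outside $A(t)$ by pigeonhole, and since the policy plays the $M$ arms of largest index this forces $h_j(T_j(t-1))\le h_i(T_i(t-1))$. Therefore
\begin{align*}
    \bbm{1}\{i\in A(t),\,T_i(t-1)\ge l\}\le\sum_{j=1}^M\bbm{1}\{h_j(T_j(t-1))\le h_i(T_i(t-1)),\,T_i(t-1)\ge l\}.
\end{align*}
I would then replace the random play counts $T_j(t-1)$ and $T_i(t-1)$ by a union over their admissible values $s\in\{1,\dots,t-1\}$ and $s_i\in\{l,\dots,t-1\}$ --- legitimate because, the bandit being \emph{rested}, $\bar r_j(s)$ and hence $h_j(s)$ is a well-defined function of the first $s$ draws of arm $j$'s chain no matter at which rounds those pulls happened --- which bounds the last display by $\sum_{j=1}^M\sum_{s=1}^{t-1}\sum_{s_i=l}^{t-1}\bbm{1}\{h_j(s)\le h_i(s_i)\}$.

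Summing over $t$, taking expectations, and applying Lemma~\ref{thm: Tekin lemma 7 mod} to each pair $(i,j)$ --- whose hypothesis on $L$ follows from the stronger one assumed here --- gives, for each of the $M$ optimal arms $j$, a contribution of at most $2\sum_{t=1}^n t^{-2}\le 2\beta$. Hence $\bb{E}\big[\sum_{t=K+1}^n\bbm{1}\{i\in A(t),\,T_i(t-1)\ge l\}\big]\le 2\beta M$, and adding the $M+4L\log n/(\eta_M-\eta_i)^2$ bound from the first paragraph yields $\bb{E}[T_i(n)]\le(1+2\beta)M+4L\log n/(\eta_M-\eta_i)^2$, as claimed.

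The step I expect to be the main obstacle is making the reduction of the second paragraph airtight: verifying the pigeonhole claim for the top-$M$ rule when ties among the optimal arms are permitted, checking carefully that $i\in A(t)$ together with $j\notin A(t)$ really does yield $h_j(T_j(t-1))\le h_i(T_i(t-1))$ from the ranking of the indices, and ensuring the union-bound substitution reproduces \emph{exactly} the triple sum of Lemma~\ref{thm: Tekin lemma 7 mod} (same ranges of $s$ and $s_i$, and the same convention $c_{t,s}=\sqrt{L\log t/s}$ inside $h$). The remaining bookkeeping --- the $(l-M)^+$ count and the passage from the ceiling in $l$ to the non-integer term in the statement --- is routine.
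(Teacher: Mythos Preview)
Your proposal is correct and follows essentially the same route as the paper's own proof: the Auer--Cesa-Bianchi--Fischer peeling with threshold $l=\lceil 4L\log n/(\eta_M-\eta_i)^2\rceil$, the pigeonhole reduction from $i\in A(t)$ to an index comparison with some optimal arm, the union over $(s,s_i)$, and the appeal to Lemma~\ref{thm: Tekin lemma 7 mod}. Your bookkeeping for the first piece is slightly sharper---you get $\max(M,l)$ where the paper writes the looser $M-1+l$---but both collapse to the same $M+4L\log n/(\eta_M-\eta_i)^2$ after handling the ceiling, so there is no substantive difference.
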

\begin{proof}
    For any arm $i$, we upper bound $T_i(n)$ on any sequence of plays. Let $l$ be an arbitrary positive integer. 
    \begin{align}
    \label{eqn: rested bandit ineq 1}
        T_i(n) &= M + \sum_{t=K+1}^n\bbm{1}\{i\in A(t)\} \leq M -1 + l + \sum_{t=K+1}^n \bbm{1}\{i\in A(t),\ T_i(t-1) \geq l\}
    \end{align}
    Consider the event 
    \begin{align}
        E_i &= \bigcup_{j=1}^M \bigg\{h_{j}(n) \leq h_{i}(n)\bigg\},\nn\\
        E_i^C &= \bigcap_{j=1}^M \bigg\{h_{j}(n) > h_{i}(n)\bigg\}.\nn
    \end{align}
    If $E^C_i$ occurs then $i\notin A(t)$. Thus, $\{i\in A(t)\} \subset E_i$ and
    \begin{align}
    \label{eqn: rested bandit ineq 2}
        \bbm{1}\{i\in A(t),\ T_i(t-1) \geq l\} &\leq \bbm{1}\{E_i, \ T_i(t-1) \geq l\}\nn\\
        &\leq \sum_{j=1}^M \bbm{1}\{h_{j}(n) \leq h_{i}(n),\ T_i(t-1)\geq l\}
    \end{align}
    Plug Equation \ref{eqn: rested bandit ineq 2} into Equation \ref{eqn: rested bandit ineq 1}, we have
    \begin{align}
        T_i(n) &\leq M -1 + l + \sum_{j=1}^M\sum_{t=K+1}^n \bbm{1}\{h_{j}(n) \leq h_{i}(n),\ T_i(t-1)\geq l\}\nn\\
        &\leq M -1 + l + \sum_{j=1}^M\sum_{t=K+1}^n \bbm{1}\bigg\{\min_{0<s<t}h_{j}(s) \leq \max_{l\leq s_i \leq t} h_{i}(s_i)\bigg\}\nn\\
        &\leq M -1 + l + \sum_{j=1}^M\sum_{t=K+1}^n\sum_{s=1}^{t-1}\sum_{s_i=l}^{t-1}\bbm{1}\{h_{j}(s)\leq h_{j}(s_i)\}.
    \end{align}
   Using Lemma \ref{thm: Tekin lemma 7 mod} with $l=\left\lceil\frac{4L\log n}{(\eta_M - \eta_i)^2}\right\rceil$, we have for any suboptimal arm $i$,
   \begin{align}
       \bb{E}[T_i(n)] \leq M + \frac{4L \log n}{(\eta_M - \eta_i)^2} + 2M\sum_{t=1}^\infty t^{-2}.
   \end{align}
\end{proof}

We conclude this example with the following regret bound for rested Markovian bandits under the generalized concentrability condition and general state space.
\begin{theorem}
\label{thm: rested bandit regret bound}
    Suppose the Markov chain $\{X_i(n)\}_{n\in\bb{N}}$ associated with each arm $i \in\{1, \cdots, K\}$ satisfy Assumption \ref{asp: rested bandit markov chain}, and the reward function $r_i$ satisfies Assumption \ref{asp: rested bandit reward function}. Then, the regret $\ALP R(n)$ of Algorithm \ref{alg: ucb-m for rested bandit} up to time $n$ is upper bounded by
    \begin{align}
        \ALP R(n) \leq 4L\log n \sum_{i > M} \frac{(\eta_1 - \eta_i)}{(\eta_M - \eta_i)^2} + \sum_{i > M} (\eta_1 - \eta_i)(1+2\beta)M + 2\sum_{i=1}^K R_i\Gamma_{\mbf F_i},
    \end{align}
    where $\beta > 0$ is a constant.
\end{theorem}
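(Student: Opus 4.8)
The plan is to adapt the classical regret decomposition for UCB-type index policies (in the spirit of \citet{anantharam1987asymptotically} and \citet{tekin2012online}) to the general state space setting, feeding in the two new ingredients established above: the index-deviation control of Lemma \ref{thm: Tekin lemma 7 mod} and \ref{thm: Tekin lemma 5 mod} (which ultimately rest on the IPM Hoeffding inequality of Theorem \ref{thm: main result time indep}), and the Poisson-equation correction of Lemma \ref{thm: Tekin lemma 2 mod}.

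\textbf{Step 1 (arm-by-arm rewriting of the regret).} Because the bandit is rested, the chain of arm $i$ advances exactly on the pulls of arm $i$, so the collected reward equals $\sum_{i=1}^K\sum_{s=1}^{T_i(n)} r_i(X_i(s))$ and
\begin{align}
    \ALP R(n) = n\sum_{j=1}^M\eta_j - \sum_{i=1}^K\bb E\Big[\sum_{s=1}^{T_i(n)} r_i(X_i(s))\Big].\nn
\end{align}
\textbf{Step 2 (removing the Markovian correction).} Each $T_i(n)\le n$ is a stopping time, so Lemma \ref{thm: Tekin lemma 2 mod} applies with $\tau=T_i(n)$ and yields $\bb E[\sum_{s=1}^{T_i(n)} r_i(X_i(s))] = \eta_i\,\bb E[T_i(n)] + \bb E g_i(X_i(0)) - \bb E g_i(X_i(T_i(n)))$. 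The key quantitative estimate is a uniform bound on $g_i$: since $r_i\in R_i\cdot\mbf F_i$, for every $x$ and every $s\ge 1$ one has $|P_i^s r_i(x)-\pi_i(r_i)|\le R_i\,\ipm_{\mbf F_i}(P_i^s(x,\cdot),\pi_i)$, and summing over $s\ge 1$ with Assumption \ref{asp: rested bandit markov chain}(2) gives $\|g_i\|_\infty\le R_i\Gamma_{\mbf F_i}$, hence $|\bb E g_i(X_i(0)) - \bb E g_i(X_i(T_i(n)))|\le 2R_i\Gamma_{\mbf F_i}$. Therefore
\begin{align}
    \ALP R(n) \le n\sum_{j=1}^M\eta_j - \sum_{i=1}^K\eta_i\,\bb E[T_i(n)] + 2\sum_{i=1}^K R_i\Gamma_{\mbf F_i}.\nn
\end{align}

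\textbf{Step 3 (reduction to counting suboptimal pulls).} Exactly $M$ arms are pulled at each step, so $\sum_{i=1}^K T_i(n)=Mn$. Writing $n\sum_{j\le M}\eta_j - \sum_i\eta_i\bb E T_i(n) = \sum_{j\le M}\eta_j\big(n-\bb E T_j(n)\big) - \sum_{i>M}\eta_i\bb E T_i(n)$, using $\sum_{j\le M}\big(n-\bb E T_j(n)\big)=\sum_{i>M}\bb E T_i(n)$ and $\eta_j\le\eta_1$ for $j\le M$, we obtain $n\sum_{j\le M}\eta_j - \sum_i\eta_i\bb E T_i(n)\le \sum_{i>M}(\eta_1-\eta_i)\,\bb E T_i(n)$. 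Plugging in $\bb E T_i(n)\le (1+2\beta)M + 4L\log n/(\eta_M-\eta_i)^2$ from Lemma \ref{thm: Tekin lemma 5 mod} for each $i>M$, splitting the sum, and adding back the correction term from Step 2 produces the three terms of the claimed bound with $\beta=\sum_{t\ge 1}t^{-2}$.

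The hard part is not the bookkeeping of Step 3 but making Step 2 rigorous. One must (i) confirm that the function $g_i$ of Lemma \ref{thm: Tekin lemma 2 mod} is well-defined and bounded --- this is precisely where the generalized concentrability constant $\Gamma_{\mbf F_i}<\infty$ replaces classical ergodicity --- and (ii) justify the optional-stopping identity at $\tau=T_i(n)$, even though $T_i(n)$ is adapted to a filtration strictly larger than the one generated by arm $i$'s own chain; this is legitimate because, conditionally on the pull schedule, the successive states visited by a rested arm still form a Markov chain with kernel $P_i$, so the Poisson telescoping is unaffected. One also has to carry along the moment hypothesis $\bb E V_i(X_i(0))<\infty$ required by Lemma \ref{thm: Tekin lemma 2 mod}, which is available from Assumption \ref{asp: rested bandit markov chain}(1) whenever the initial law $\mu_i$ has a finite $V_i$-moment.
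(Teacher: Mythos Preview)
Your proposal is correct and follows essentially the same route as the paper: a Poisson-equation correction via Lemma \ref{thm: Tekin lemma 2 mod} with the bound $\|g_i\|_\infty\le R_i\Gamma_{\mbf F_i}$, followed by the reduction to $\sum_{i>M}(\eta_1-\eta_i)\bb E[T_i(n)]$ and the plug-in of Lemma \ref{thm: Tekin lemma 5 mod}. The only cosmetic difference is in Step 3, where the paper uses the slot decomposition $T_{i,j}(n)$ of \citet{anantharam1987asymptotically} to reach $\sum_{i>M}(\eta_1-\eta_i)\bb E[T_i(n)]$, whereas you obtain the same inequality more directly from the global constraint $\sum_i T_i(n)=Mn$; your added remarks on the filtration issue for the stopping time $T_i(n)$ and on the $V_i$-moment hypothesis are points the paper leaves implicit.
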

\begin{proof}
    \begin{align}
    \label{eqn: rested bandit regret step 1}
        n\sum_{j=1}^M\eta_i - \sum_{i=1}^{K}\eta_i\bb{E}[T_i(n)] &= \sum_{j=1}^M\sum_{i=1}^K \eta_j \bb{E}[T_{i,j}(n)] - \sum_{j=1}^M\sum_{i=1}^K \eta_i \bb{E}[T_{i,j}(n)] \nn\\
        &= \sum_{j=1}^M \sum_{i > M} (\eta_j - \eta_i)\bb{E}[T_{i,j}(n)] \leq \sum_{i > M} (\eta_1 -\eta_i)\bb{E}[T_i(n)]
    \end{align}
    Using Lemma \ref{thm: Tekin lemma 2 mod}, we can get the following equality in $(a)$ and $g_i$ is the solution to Poisson's equation associated with the transition kernel $P_i$ and function $r_i$.
    \begin{align}
    \label{eqn: rested bandit regret step 2}
        &\bigg|\ALP R(n) - \bigg(n\sum_{j=1}^M\eta_j - \sum_{i=1}^K \eta_i\bb{E}[T_i(n)]\bigg)\bigg| =\bigg|\bb{E}\bigg[\sum_{i=1}^K\sum_{t=1}^{T_{i}(n)}r_i(X_i(t))\bigg] - \sum_{i=1}^K \eta_i\bb{E}[T_i(n)]\bigg|\nn\\
        \stackrel{(a)}{=}& \bigg|\sum_{i=1}^K\bb{E}[g_i(X_i(0)) - g_i(X_i(N))]\bigg| \stackrel{(b)}{\leq} 2\sum_{i=1}^K R_i\Gamma_{\mbf F_i},
    \end{align}
    where $(b)$ comes from Lemma \ref{lem: ipm ineq}. 
    To bound the regret, we combine Equation \ref{eqn: rested bandit regret step 1} and Equation \ref{eqn: rested bandit regret step 2} and obtain the inequality $(c)$ and $(d)$ 
    \begin{align}
        \ALP R(n) &\stackrel{(c)}{\leq} n\sum_{j=1}^M\eta_j - \sum_{i=1}^K\eta_i\bb{E}[T_i(n)] + 2\sum_{i=1}^K R_i\Gamma_{\mbf F_i}
        \stackrel{(d)}{\leq} \sum_{i > M} (\eta_1 - \eta_i) \bb{E}[T_i(n)] + 2\sum_{i=1}^K R_i\Gamma_{\mbf F_i}\nn\\
        &\stackrel{(e)}{\leq}\sum_{i > M} (\eta_1 - \eta_i)\bigg[(1+2\beta)M + \frac{4L\log n}{(\eta_M - \eta_i)^2}\bigg] + 2\sum_{i=1}^K R_i\Gamma_{\mbf F_i}\nn\\
        &= 4L\log n \sum_{i > M} \frac{(\eta_1 - \eta_i)}{(\eta_M - \eta_i)^2} + \sum_{i > M} (\eta_1 - \eta_i)(1+2\beta)M + 2\sum_{i=1}^K R_i\Gamma_{\mbf F_i},
    \end{align}
    where $(e)$ and $\beta$ are from Lemma \ref{thm: Tekin lemma 5 mod}.
\end{proof}

\section{Conclusion}
\label{sec: conclusion}

In this paper, we study the Hoeffding-type inequality for Markov chains under the generalized concentrability condition and the IPM Dobrushin coefficient. The generalized concentrability condition provides a flexible notion of convergence for Markov chains, which allows easy application of the Hoeffding-type inequality in practice. The IPM Dobrushin coefficient provides an approach to estimate the concentrability constant whenever the problem structure allows it. Under our framework, several existing Hoeffding-type inequalities for Markov chains in the literature emerge as special cases. We provide three application examples from machine learning and reinforcement learning to demonstrate the utility of our theory. 

We hope our work can shed some light on the concentration of measure and convergence of Markov chains from the IPM perspective. Nevertheless, there are many problems left unanswered which we list a few here for future studies:
\begin{itemize}
    \item What are the necessary and sufficient conditions for the Markov chains to satisfy the generalized concentrability condition?

    \item Can we establish the connection between generalized concentrability condition and usual conditions studied in Markov chain theory, such as the drift condition, Harris recurrence, and small set?

    \item How to extend the Hoeffding inequality to the unbounded functions with sub-Gaussian distribution assumption?

    \item Are there more ways to estimate the concentrability constant in Equation \ref{eqn: gen con} other than the IPM Dobrushin coefficient?
\end{itemize}

\acks{The authors gratefully acknowledge the insightful discussions with our collaborators from Cisco Systems on potential applications of this work, including Ashish Kundu, Jayanth Srinivasa, and Hugo Latapie.

Hao Chen and Abhishek Gupta's work was supported by Cisco Systems grant GR127553. Yin Sun's work was supported in part by the NSF under grant CNS-2239677, and by the ARO under grant W911NF-21-1-02444. Ness Shroff's work has been supported in part by NSF grants NSF AI Institute (AI-EDGE) CNS-2112471, CNS-2106933, CNS-2312836, CNS-1955535, and CNS-1901057, by Army Research Office under Grant W911NF-21-1-0244, and Army Research Laboratory under Cooperative Agreement Number W911NF-23-2-0225.
}

\newpage
\appendix

\section{Proofs of Section \ref{sec: main results}}

The following two lemmas encapsulate simple inequalities repeatedly used in the proofs. Let $\mbf F \subseteq \bb M(\bb X)$ be a set of real-valued measurable functions on $\bb X$.

\begin{lemma}
\label{lem: ipm ineq}
Consider a bounded real-valued function $f\in M\cdot \mbf F$ which $M \in (0, \infty)$ is the minimal stretch of $\mbf F$ to include $f$. It holds
\begin{align}
    |\mu (f) - \nu (f)|\leq M\cdot\ipm_{\mbf F}(\mu, \nu)
\end{align}
\end{lemma}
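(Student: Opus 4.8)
The statement to prove is Lemma \ref{lem: ipm ineq}: if $f \in M \cdot \mbf F$ where $M \in (0, \infty)$ is the minimal stretch of $\mbf F$ needed to include $f$, then $|\mu(f) - \nu(f)| \le M \cdot \ipm_{\mbf F}(\mu, \nu)$ for $\mu, \nu \in \cal P_V(\cal X)$. The plan is to unwind the definitions directly. Since $f \in M \cdot \mbf F$, there exists $g \in \mbf F$ with $f = M g$ (using $M > 0$, so $g = f/M$). Then by linearity of the integral, $\mu(f) - \nu(f) = M(\mu(g) - \nu(g))$, so $|\mu(f) - \nu(f)| = M\,|\mu(g) - \nu(g)|$.

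Next I would invoke Definition \ref{def: ipm}: since $g \in \mbf F$, the quantity $|\mu(g) - \nu(g)|$ is one of the terms over which the supremum defining $\ipm_{\mbf F}(\mu, \nu)$ is taken, hence $|\mu(g) - \nu(g)| \le \sup_{h \in \mbf F}|\mu(h) - \nu(h)| = \ipm_{\mbf F}(\mu, \nu)$. Combining the two displays gives $|\mu(f) - \nu(f)| = M\,|\mu(g) - \nu(g)| \le M \cdot \ipm_{\mbf F}(\mu, \nu)$, which is exactly the claim. One small point to address: the infimum in the definition of the minimal stretch $M = \inf\{m > 0 : f \in m \cdot \mbf F\}$ may not be attained, so strictly one should argue with $m = M + \varepsilon$ and let $\varepsilon \downarrow 0$ — but since the paper works with maximal generators $\mbf G_{\mbf F}$, which are absolutely convex and closed, the infimum is in fact attained, so $f = Mg$ with $g \in \mbf F$ is legitimate; alternatively, one notes that the conclusion $f \in M \cdot \mbf G_{\mbf F}$ follows from $\mbf G_{\mbf F}$ being closed under the relevant limits, by Proposition \ref{thm: characterization of maximum generator}.

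There is essentially no obstacle here — this lemma is a one-line consequence of homogeneity of the IPM in the witness function together with the definition of the minimal stretch. The only mildly delicate issue is the attainment of the infimum defining $M$, which is handled by the maximal-generator convention; if one prefers to avoid even that, the inequality $|\mu(f) - \nu(f)| \le M'\,\ipm_{\mbf F}(\mu,\nu)$ holds for every $M' > M$ with $f \in M' \cdot \mbf F$, and taking the infimum over such $M'$ on the right recovers the stated bound. I would write the proof in the attainment form for brevity, since that matches the running convention of the paper.
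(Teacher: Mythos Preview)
Your proof is correct and follows exactly the same route as the paper: factor out $M$ so that $f/M \in \mbf F$, then bound $|\mu(f/M) - \nu(f/M)|$ by the supremum defining $\ipm_{\mbf F}(\mu,\nu)$. Your additional remark about attainment of the infimum is a nice point of rigor that the paper's proof leaves implicit.
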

\begin{proof}
    \begin{align*}
        |\mu(f) - \nu(f)| &= \bigg|\int_{\bb X} f d\mu - \int_{\bb X}f d\nu \bigg| = M \bigg|\int_{\bb X} \frac{1}{M} f d\mu - \int_{\bb X} \frac{1}{M} f d\nu \bigg|\nn\\
        &\leq M \sup_{g\in\mbf F} \bigg|\int_{\bb X} g d\mu - \int_{\bb X} g d\nu \bigg| = M\cdot\ipm_{\mbf F}(\mu, \nu)
    \end{align*}
\end{proof}

\begin{lemma}
\label{lem: ipm ineq for time dependent}
For $i\in\{0,\cdots,n-1\}$, consider a bounded real-valued function $f_i\in M_i\cdot \mbf F$ with minimal stretch $M_i\in(0, \infty)$. Suppose the transition kernel $P: \bb{X}\times\ALP X\to[0, 1]$ admits a unique invariant probability measure $\pi$, then we have
\begin{align*}
     \bigg|\sum_{i=0}^{n-1}\bb{E}_\mu[f_i(X_i)] - \pi(f_i) \bigg| \leq \sum_{i=0}^{n-1} M_i \ipm_{\mbf F}(\mu P^i, \pi),
\end{align*}
for any initial distribution $\mu\in\ALP P(\cal X)$.
\end{lemma}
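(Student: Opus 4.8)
The plan is to reduce this immediately to Lemma \ref{lem: ipm ineq} applied termwise. First I would observe that, since $\{X_n\}$ is the Markov chain with kernel $P$ started from $\mu$, the marginal law of $X_i$ is $\mu P^i$, so that
\begin{align*}
    \bb{E}_\mu[f_i(X_i)] = \int_{\bb X} f_i \, d(\mu P^i) = (\mu P^i)(f_i),
\end{align*}
for each $i\in\{0,\dots,n-1\}$ (with the convention $\mu P^0 = \mu$). This is the only place the Markov structure enters.

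Next I would apply the triangle inequality to pull the summation outside the absolute value:
\begin{align*}
    \bigg|\sum_{i=0}^{n-1}\big(\bb{E}_\mu[f_i(X_i)] - \pi(f_i)\big)\bigg|
    \;\le\; \sum_{i=0}^{n-1}\big|(\mu P^i)(f_i) - \pi(f_i)\big|.
\end{align*}
Then, for each fixed $i$, since $f_i \in M_i\cdot\mbf F$ with $M_i\in(0,\infty)$ the minimal stretch, Lemma \ref{lem: ipm ineq} (with $\mu \leftarrow \mu P^i$, $\nu \leftarrow \pi$, $f\leftarrow f_i$, $M\leftarrow M_i$) gives
\begin{align*}
    \big|(\mu P^i)(f_i) - \pi(f_i)\big| \;\le\; M_i\cdot\ipm_{\mbf F}(\mu P^i, \pi).
\end{align*}
Summing over $i$ yields the claimed bound. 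Note that the invariance of $\pi$ is not actually needed for this particular estimate — it is simply carried along as a standing hypothesis because $\pi$ is the centering measure of interest; one could equally state it for any $\nu\in\cal P_V(\cal X)$.

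There is no real obstacle here: the statement is a bookkeeping consequence of the single-term inequality in Lemma \ref{lem: ipm ineq} plus the identification of the marginal law $X_i\sim\mu P^i$. The only mild point of care is making sure $\mu P^i$ and $\pi$ both lie in the domain $\cal P_V(\cal X)$ on which $\ipm_{\mbf F}$ and Lemma \ref{lem: ipm ineq} are stated (so that all the integrals $(\mu P^i)(f_i)$ are finite); this is immediate in the bounded-span setting since $f_i$ is bounded, and more generally follows from $\mu\in\cal P_V(\cal X)$ together with $P$ mapping $\cal P_V(\cal X)$ into itself.
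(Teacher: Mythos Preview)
Your proposal is correct and matches the paper's own argument exactly: the paper simply notes that the result is a direct consequence of Lemma~\ref{lem: ipm ineq} after applying the triangle inequality, which is precisely what you have written out in detail.
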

\begin{proof}
    This is a direct consequence of Lemma \ref{lem: ipm ineq} after applying triangular inequality. 
\end{proof}

The following lemma demonstrates the martingale decomposition of cumulative sums of a Markov chain. 

\begin{lemma}
\label{lem: martingale decomp of mc}
    Let $\mbf X = \{X_i\}_{i\in\bb N}$ be a Markov chain with transition kernel $P$ and initial distribution $\mu\in\cal P(\cal X)$. Assume that $P$ admits a unique invariant probability measure $\pi$. For $i\in\{0, \cdots, n-1\}$ and $n\in\bb Z_+$, let $f_i: \bb X\to \bb R$ be different functions on the Markov chain. Let $\tilde S_n \coloneqq  \sum_{i=1}^{n-1} f_i(X_i)$ be cumulative sum. Let $\{\ALP F_i\}_{i=0}^{n-1}$ be the filtration generated by $\mbf X$, i.e., $\cal F_i = \sigma(X_0, \cdots, X_i)$. Then, there exists a Martingale difference sequences $\{D_i\}_{i=1}^{n-1}$ adapted to $\{\ALP F_i\}_{i=0}^{n-1}$ such that 
    \begin{align}
    \label{eqn: martingale decomp of mc}
        \tilde S_n - \bb E_\mu[\tilde S_n] = \sum_{i=1}^{n-1} D_i + f_0(X_0) - \bb E_{\mu}[f_0].
    \end{align}
\end{lemma}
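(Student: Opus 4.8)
The plan is to construct the martingale differences explicitly via the Poisson equation / Dynkin-type decomposition, which is the standard device for turning a Markov chain partial sum into a martingale. The key objects are the solutions of the Poisson equations associated with the functions $f_i$; once these exist, the telescoping structure does all the work, and the only serious issue is verifying that the relevant conditional expectations are well-defined (i.e., that the Poisson solutions are integrable along the chain), which is exactly where the generalized concentrability condition \eqref{eqn: gen con} enters.

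First I would, for each $i \in \{1,\dots,n-1\}$, introduce the function
\beqq{
    g_i(x) \coloneqq \sum_{k=0}^{\infty} \big( P^k f_i(x) - \pi(f_i) \big),
}
and argue that this series converges absolutely and defines a measurable function: since $f_i \in M_i \cdot \mbf F$, Lemma \ref{lem: ipm ineq} gives $|P^k f_i(x) - \pi(f_i)| = |P^k(x,\cdot)(f_i) - \pi(f_i)| \leq M_i \cdot \ipm_{\mbf F}(P^k(x,\cdot), \pi)$, and summing over $k$ is controlled by $M_i \Gamma_{\mbf F} < \infty$ uniformly in $x$. One checks immediately that $g_i$ solves the Poisson equation $g_i - P g_i = f_i - \pi(f_i)$, just as in Lemma \ref{thm: Tekin lemma 2 mod}. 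Next I would define, for $i \in \{1,\dots,n-1\}$,
\beqq{
    D_i \coloneqq g_i(X_i) - P g_i(X_{i-1}) = g_i(X_i) - \bb E_\mu[g_i(X_i) \mid \cal F_{i-1}],
}
which is manifestly $\cal F_i$-measurable and satisfies $\bb E_\mu[D_i \mid \cal F_{i-1}] = 0$, so $\{D_i\}$ is a martingale difference sequence adapted to $\{\cal F_i\}$; the uniform bound on $g_i$ guarantees integrability so the conditional expectations make sense.

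Then I would do the telescoping computation. Using the Poisson equation, $f_i(X_i) - \pi(f_i) = g_i(X_i) - P g_i(X_i)$, and rewriting $P g_i(X_i) = g_i(X_{i+1}) - D_{i+1}$ is not quite the right bookkeeping because the $g_i$ depend on $i$; instead I would write $f_i(X_i) - \pi(f_i) = \big(g_i(X_i) - P g_i(X_{i-1})\big) + \big(P g_i(X_{i-1}) - P g_i(X_i)\big) = D_i + \big(P g_i(X_{i-1}) - P g_i(X_i)\big)$... actually the clean route is: $f_i(X_i) - \pi(f_i) = D_i + P g_i(X_{i-1}) - g_i(X_i) + g_i(X_i) - P g_i(X_i)$, and since $g_i - P g_i = f_i - \pi(f_i)$ this is circular, so I would instead simply sum $D_i = g_i(X_i) - P g_i(X_{i-1})$ and $f_i(X_i) - \pi(f_i) = g_i(X_i) - P g_i(X_i)$ and take the difference: $\big(f_i(X_i) - \pi(f_i)\big) - D_i = P g_i(X_{i-1}) - P g_i(X_i)$, which does not telescope across $i$ because each $g_i$ is distinct. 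The honest resolution is that the decomposition in Lemma \ref{lem: martingale decomp of mc} need not be derived by per-summand telescoping; rather, one applies the single-function Dynkin decomposition to each $f_i(X_i)$ separately: for fixed $i$, $f_i(X_i) - \bb E_\mu[f_i(X_i) \mid \cal F_{i-1}] = g_i(X_i) - P g_i(X_{i-1}) + \big(P g_i(X_{i-1}) - \bb E_\mu[f_i(X_i)\mid\cal F_{i-1}]\big)$, and since $\bb E_\mu[f_i(X_i)\mid \cal F_{i-1}] = P f_i(X_{i-1}) = g_i(X_{i-1}) - P g_i(X_{i-1}) + \pi(f_i)$... I would chase these identities carefully to land on $D_i$ plus the difference between $f_i(X_i)$ and its one-step conditional mean, then handle the gap between the one-step conditional means and $\pi(f_i)$ by a second telescoping in the $g_i$'s, and finally peel off the $i=0$ term $f_0(X_0) - \bb E_\mu[f_0]$ which has no preceding $\sigma$-algebra to condition on. The main obstacle is precisely getting this bookkeeping right when the functions are time-dependent: the single-function Poisson trick telescopes cleanly only when $f_i \equiv f$, so for the general case I expect to need $D_i \coloneqq \sum_{j \ge i} \big(\bb E_\mu[f_j(X_j)\mid \cal F_i] - \bb E_\mu[f_j(X_j)\mid \cal F_{i-1}]\big)$ (the "forward" martingale difference), whose absolute convergence is again exactly what \eqref{eqn: gen con} buys via Lemma \ref{lem: ipm ineq for time dependent}; verifying $\sum_{i=1}^{n-1} D_i = \tilde S_n - \bb E_\mu[\tilde S_n] - (f_0(X_0) - \bb E_\mu[f_0])$ then follows by interchanging the order of summation and recognizing a telescoping sum of conditional expectations, with the boundary terms producing $\bb E_\mu[\tilde S_n \mid \cal F_{n-1}] = \tilde S_n$ and $\bb E_\mu[\tilde S_n \mid \cal F_0]$ minus $\bb E_\mu[\tilde S_n]$ collapsing appropriately.
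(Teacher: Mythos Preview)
Your final approach --- defining $D_i$ as the forward martingale difference $\bb E_\mu[\tilde S_n \mid \cal F_i] - \bb E_\mu[\tilde S_n \mid \cal F_{i-1}]$ --- is correct and is precisely what the paper does: it writes $g_i(X_0^i) \coloneqq \sum_{l \le i} f_l(x_l) + \sum_{l > i}^{n-1} P^{l-i} f_l(x_i)$, which is exactly $\bb E_\mu[\tilde S_n \mid \cal F_i]$, checks that $\{g_i(X_0^i)\}$ is a $\bb P_\mu$-martingale adapted to $\{\cal F_i\}$, and sets $D_i = g_i(X_0^i) - g_{i-1}(X_0^{i-1})$. The telescoping identity $g_{n-1} = \sum_{i=1}^{n-1} D_i + g_0$ then yields \eqref{eqn: martingale decomp of mc} immediately.

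Most of your proposal, however, is a detour that should be cut. The infinite-series Poisson solution $g_i(x) = \sum_{k \ge 0}(P^k f_i(x) - \pi(f_i))$ is unnecessary here and imports a hypothesis ($\Gamma_{\mbf F} < \infty$) that is \emph{not} part of this lemma's statement: the decomposition concerns only the finite-horizon sum $\tilde S_n$, so every sum that appears is finite and no series-convergence argument is needed. The concentrability condition enters only later, in Lemma \ref{lem: span of g_i}, to bound the \emph{span} of the $D_i$, not to construct them. Your attempt to telescope per-function Poisson identities with time-varying $f_i$ genuinely fails for the reason you discover mid-proof --- the boundary terms $P g_i(X_{i-1}) - P g_i(X_i)$ do not cancel across $i$ when each $g_i$ is different --- so you should skip that entire paragraph and go straight to the Doob decomposition you sketch at the end.
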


\begin{proof}
    We employ the martingale decomposition technique introduced in Chapter 23 of \citet{douc2018markov}. For the following development, we use these short notations for tuples: $X_{m}^{n} \coloneqq (X_m, \cdots, X_{n})$ and $x_{m}^{n} \coloneqq (x_m, \cdots, x_{n})$, for $0\leq m \leq n$. For $i\in\{0, \cdots, n-1\}$, we define
    \begin{align}
    \label{eqn: definition of g_i}
        g_i (x_0^i) \coloneqq \int_{\bb{X}^n}\sum_{l=0}^{n-1} f_l(x_l)\prod_{j=i+1}^{n-1}P(x_{j-1}, dx_j) = \sum_{l=0}^{i} f_l(x_l) + \sum_{l=i+1}^{n-1} \bb{E}_{x_i}[f_l(X_l)],
    \end{align}
    where $g_{n-1}(x_{0}^{n-1}) = \sum_{l=0}^{n-1}f_l(x_l)$ and $f_l: \bb X\to [a_i, b_i]$ belongs to $M_{l}\cdot\mbf F$ with bounded span. With this definition, we observe that
    \begin{align}
        g_{n-1}(x_0^{n-1}) = \sum_{i=1}^{n-1}\bigg[g_i(x_0^i) - g_{i-1}(x_0^{i-1})\bigg] + g_0(x_0),\nn
    \end{align}
    and for $i\in\{1, \cdots, n-1\}$ and $x_0^{i-1}\in\bb{X}^{i}$,
    \begin{align}
        g_{i-1}(x_0^{i-1}) = \int_{\bb{X}^i}g_i(x_0^{i-1}, x_i) P(x_{i-1}, dx_i).\nn
    \end{align}
    Let $\ALP F_i = \sigma(X_0,\cdots, X_i)$, the above equation shows that $g_{i-1}(X_{0}^{i-1}) = \bb{E}[g_i(X_{0}^i) | \ALP F_{i-1}]$ $\bb{P}_{\mu}$-a.s. for $i\geq 1$. Thus, $\{g_i(X_0^i)\}_{i=0}^{n-1}$ is a $\bb{P}_{\mu}$-martingale adapted to filtration $\{\ALP F_i\}_{i=0}^{n-1}$ for initial distribution $\mu\in\ALP P(\ALP X)$. It follows the martingale decomposition of $\Tilde{S}_n$ centered at $\sum_{i=0}^{n-1}\bb{E}_\mu[f_i(X_i)]$,
    \begin{align}
        \Tilde{S}_n - \sum_{i=0}^{n-1}\bb{E}_\mu[f_i(X_i)] = g_{n-1}(X_0^{n-1}) = \sum_{i=1}^{n-1} D_i + g_0(X_0) - \sum_{i=0}^{n-1}\bb{E}_\mu[f_i(X_i)],\nn
    \end{align}
    where $\{D_i\}_{i=1}^{n-1} = \{g_i(X_0^i) - g_{i-1}(X_0^{i-1})\}_{i=1}^{n-1}$ is a martingale difference sequence. We arrive at the Equation \ref{eqn: martingale decomp of mc} after canceling out the common terms.
\end{proof}

The following lemma shows the span of the function $g_i$ defined in Equation \ref{eqn: definition of g_i}.

\begin{lemma}
\label{lem: span of g_i}
    Suppose the transition kernel $P$ with invariant probability $\pi$ satisfies $\Gamma_\mbf F < \infty$ (defined in Equation \ref{eqn: gen con}). For each $i\in\{0, \cdots, n-1\}$, if we define 
    \begin{align}
    \label{eqn: dummy variable}
        A_i \coloneqq 2M_{i+1}^{\max}\Gamma_{\mbf F} + \sp(f_i),
    \end{align}
    where $M_{i}^{\max} = \max \{M_i, \cdots, M_{n-1}\}$ for $i\in\{0, \cdots, n-1\}$ and $M_i^{\max} = 0$ for $i \geq n$, then it holds that for $x_0^i \in \bb X^{i+1}$ and $x_0 \in\bb X$,
    \begin{align}
        \label{eqn: g span bound}
        \inf_{x\in\bb X} g_i(x_0^{i-1}, x) &\leq g_i(x_0^i) \leq \inf_{x\in\bb X} g_i(x_0^{i-1}, x) + A_i,\\
        \label{eqn: g spand bound 0}
        \inf_{x\in\bb X} g_0(x) &\leq g_0(x_0) \leq \inf_{x\in\bb X} g_0(x) + A_0.
    \end{align}
\end{lemma}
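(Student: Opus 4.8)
The plan is to bound the oscillation of $g_i(x_0^{i-1}, \cdot)$ as a function of its last argument $x$, since \eqref{eqn: g span bound} is exactly the statement that this oscillation is at most $A_i$ (and \eqref{eqn: g spand bound 0} is the special case $i=0$, where $g_0(x_0) = f_0(x_0) + \sum_{l=1}^{n-1}\bb{E}_{x_0}[f_l(X_l)]$). Starting from the second expression for $g_i$ in \eqref{eqn: definition of g_i}, namely $g_i(x_0^i) = \sum_{l=0}^{i} f_l(x_l) + \sum_{l=i+1}^{n-1}\bb{E}_{x_i}[f_l(X_l)]$, I would fix $x_0^{i-1}$ and compare the value at two choices $x, x'$ of the last coordinate. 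The terms $\sum_{l=0}^{i-1} f_l(x_l)$ cancel, leaving
\begin{align*}
    g_i(x_0^{i-1}, x) - g_i(x_0^{i-1}, x') = \big(f_i(x) - f_i(x')\big) + \sum_{l=i+1}^{n-1}\Big(\bb{E}_{x}[f_l(X_l)] - \bb{E}_{x'}[f_l(X_l)]\Big).
\end{align*}

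The first term is bounded in absolute value by $\sp(f_i)$ by definition of the span. For the sum, note $\bb{E}_{x}[f_l(X_l)] = P^{l-i}f_l(x) = (\delta_x P^{l-i})(f_l)$, so each summand is $(\delta_x P^{l-i})(f_l) - (\delta_{x'} P^{l-i})(f_l)$. Applying Lemma \ref{lem: ipm ineq} with the function $f_l \in M_l\cdot\mbf F$ gives
\begin{align*}
    \big|(\delta_x P^{l-i})(f_l) - (\delta_{x'} P^{l-i})(f_l)\big| \leq M_l\,\ipm_{\mbf F}(\delta_x P^{l-i}, \delta_{x'} P^{l-i}) \leq M_l\big(\ipm_{\mbf F}(\delta_x P^{l-i}, \pi) + \ipm_{\mbf F}(\pi, \delta_{x'} P^{l-i})\big),
\end{align*}
using the triangle inequality for the pseudometric $\ipm_{\mbf F}$ and inserting $\pi$. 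Since $M_l \leq M_{i+1}^{\max}$ for $l \geq i+1$, summing over $l$ from $i+1$ to $n-1$ and bounding $\sum_{l=i+1}^{n-1}\ipm_{\mbf F}(\delta_x P^{l-i}, \pi) \leq \sum_{j=1}^{\infty}\ipm_{\mbf F}(P^j(x,\cdot),\pi) \leq \Gamma_{\mbf F}$ (here using $\Gamma_{\mbf F}<\infty$, which is exactly where the finiteness hypothesis and the supremum over $x$ in \eqref{eqn: gen con} are needed), and doing the same for $x'$, I get that the sum is bounded by $2M_{i+1}^{\max}\Gamma_{\mbf F}$. Altogether $|g_i(x_0^{i-1},x) - g_i(x_0^{i-1},x')| \leq \sp(f_i) + 2M_{i+1}^{\max}\Gamma_{\mbf F} = A_i$, which gives \eqref{eqn: g span bound}; the case $i=0$ is identical with no leading $f_l(x_l)$ terms to cancel, yielding \eqref{eqn: g spand bound 0}.

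The only real subtlety — not so much an obstacle as a point to state carefully — is the passage from the difference $\ipm_{\mbf F}(\delta_x P^{l-i}, \delta_{x'}P^{l-i})$ to $\Gamma_{\mbf F}$: one must insert the invariant measure $\pi$ and use that $\delta_x P^{l-i} = P^{l-i}(x,\cdot)$, so that the tail sums $\sum_{l \geq i+1}\ipm_{\mbf F}(P^{l-i}(x,\cdot),\pi)$ are genuinely sub-sums of $\sum_{j\geq 1}\ipm_{\mbf F}(P^j(x,\cdot),\pi)$, which is $\leq \Gamma_{\mbf F}$ by \eqref{eqn: gen con}; this is why $\Gamma_{\mbf F}$ (rather than something depending on $i$ or $n$) controls the bound uniformly, and why the supremum over the initial state in the definition of $\Gamma_{\mbf F}$ cannot be dropped here. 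Everything else is bookkeeping with the triangle inequality and the definition of the span.
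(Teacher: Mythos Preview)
Your proposal is correct and follows essentially the same approach as the paper's own proof: both fix $x_0^{i-1}$, compare $g_i$ at two values of the last coordinate, split off the $f_i$ term (contributing $\sp(f_i)$), and bound the remaining sum $\sum_{l=i+1}^{n-1} M_l\,\ipm_{\mbf F}(\delta_x P^{l-i}, \delta_{x'} P^{l-i})$ by $2M_{i+1}^{\max}\Gamma_{\mbf F}$. Your write-up is in fact slightly more explicit than the paper's, since you spell out the triangle-inequality insertion of $\pi$ that justifies the factor of $2$ and the appearance of $\Gamma_{\mbf F}$, whereas the paper states that bound without elaboration.
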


\begin{proof}
     It suffices to show Equation \ref{eqn: g span bound} holds. The first inequality is obvious. To show the second inequality, we pick arbitrary $x^*\in\bb X$ and we have
    \begin{align*}
        g_i(x_0^i) &= \sum_{l=0}^{i}f_l(x_l) + \sum_{l=i+1}^{n-1} \bb E_{x_i}f_l(x_l)\\
        &\leq \sum_{l=0}^{i-1}f_l(x_l) +f_i(x^*) +\sum_{l=i+1}^{n-1} \bb E_{x_i}f_l(x_l) + (b_i - a_i).
    \end{align*}
    Using Lemma \ref{lem: ipm ineq}, we can see that
    \begin{align*}
        \sum_{l=i+1}^{n-1} \bb E_{x_i}f_l(x_l) - \sum_{l=i+1}^{n-1} \bb E_{x^*}f_l(x_l) \leq \sum_{l=i+1}^{n-1} M_l \ipm_{\mbf F}(\delta_{x_i}P^{l-i}, \delta_{x^*}P^{l-i}),
    \end{align*}
    where the right-hand side is bounded by $2M_{i+1}^{\max}\Gamma_{\mbf F} < \infty$. Thus, we have
    \begin{align*}
        g_i(x_0^i) \leq & \sum_{l=0}^{i-1}f_l(x_l) +f_i(x^*) +\sum_{l=i+1}^{n-1} \bb E_{x^*}f_l(x_l) + \sum_{l=i+1}^{n-1} M_l \ipm_{\mbf F}(\delta_{x_i}P^{l-i}, \delta_{x^*}P^{l-i}) + (b_i - a_i)\\
        \leq & g_i(x_0^{i-1}, x^*) + 2M_{i+1}^{\max}\Gamma_{\mbf F} + (b_i - a_i)\\
        = & g_i(x_0^{i-1}, x^*) + A_i.
    \end{align*}
    Since $x^*$ is arbitrary, we obtain $g_i(x_0^i) \leq \inf_{x^*\in\bb X} g_i(x_0^{i-1}, x^*) + A_i$, which completes the proof of Equation \ref{eqn: g span bound} and Equation \ref{eqn: g spand bound 0} can be obtained in a similar fashion.
\end{proof}

Alternatively, we can bound the span of $g_i$ using IPM Dobrushin coefficient for certain choices of generator listed in Proposition \ref{thm: good choices of gen}. This is useful in the proof of Theorem \ref{thm: main result dobrushin}.

\begin{lemma}
\label{lem: span of g_i alt}
    Suppose the transition kernel $P$ with invariant probability $\pi$ satisfies $\tilde \Gamma_\mbf F < \infty$ (defined in Equation \ref{eqn: dobrushin con}). Let $\mbf F$ be one of the choices in Proposition \ref{thm: good choices of gen}. For each $i\in\{0, \cdots, n-1\}$, if we define 
    \begin{align}
        \tilde A_i \coloneqq 2\sum_{l=i}^{n-1} M_l\Delta_\mbf F(P^{l-i}) + \sp(f_i),
    \end{align}
    for $i\in\{0, \cdots, n-1\}$, then it holds that for $x_0^i \in \bb X^{i+1}$ and $x_0 \in\bb X$,
    \begin{align}
    \label{eqn: g span bound alt}
        \inf_{x\in\bb X} g_i(x_0^{i-1}, x) &\leq g_i(x_0^i) \leq \inf_{x\in\bb X} g_i(x_0^{i-1}, x) + \tilde A_i,\\
    \label{eqn: g span bound 0 alt}
        \inf_{x\in\bb X} g_0(x) &\leq g_0(x_0) \leq \inf_{x\in\bb X} g_0(x) + \tilde A_0.
    \end{align}
\end{lemma}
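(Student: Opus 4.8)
The plan is to run the proof of Lemma \ref{lem: span of g_i} essentially verbatim, replacing only the single step where the triangle inequality through $\pi$ was used (which produced the factor $2\Gamma_{\mbf F}$) by a telescoping estimate built on the IPM Dobrushin coefficient. As in that lemma it suffices to prove Equation \ref{eqn: g span bound alt}: the left inequality is trivial, and Equation \ref{eqn: g span bound 0 alt} is the case $i=0$ with the tuple $x_0^{-1}$ read as empty. So fix $i\in\{0,\dots,n-1\}$ and an arbitrary $x^*\in\bb X$; from the definition of $g_i$ in Equation \ref{eqn: definition of g_i},
\begin{align*}
    g_i(x_0^i)-g_i(x_0^{i-1},x^*)=\big[f_i(x_i)-f_i(x^*)\big]+\sum_{l=i+1}^{n-1}\big[\bb E_{x_i}f_l(X_l)-\bb E_{x^*}f_l(X_l)\big],
\end{align*}
and the first bracket is at most $\sp(f_i)$.

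For the sum, the key point is that every $\mbf F$ in Proposition \ref{thm: good choices of gen} is stable by $P$, hence $P$ is nonexpansive in $\ipm_{\mbf F}$ and, more precisely, satisfies the Dobrushin contraction $\ipm_{\mbf F}(\xi P^k,\xi'P^k)\le\Delta_{\mbf F}(P^k)\,\ipm_{\mbf F}(\xi,\xi')$ for all $\xi,\xi'$ and $k\ge1$ — this is the inequality underlying Proposition \ref{eqn: ipm dobrushin bounds concentrability constant}, and for these $\mbf F$ it holds because the signed measure $\xi-\xi'$ can be rescaled to a difference of two probability measures with $\ipm_{\mbf F}$-distance exactly $1$. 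Applying Lemma \ref{lem: ipm ineq} with $\mu=\delta_{x_i}P^{l-i}$ and $\nu=\delta_{x^*}P^{l-i}$, then this contraction, then the uniform bound $\ipm_{\mbf F}(\delta_{x_i},\delta_{x^*})\le2$ (valid for these generators), gives
\begin{align*}
    \bb E_{x_i}f_l(X_l)-\bb E_{x^*}f_l(X_l)\le M_l\,\ipm_{\mbf F}\big(\delta_{x_i}P^{l-i},\delta_{x^*}P^{l-i}\big)\le 2M_l\,\Delta_{\mbf F}(P^{l-i}).
\end{align*}
Summing over $l=i+1,\dots,n-1$ and adding the nonnegative term $2M_i\Delta_{\mbf F}(P^0)$ to match the definition of $\tilde A_i$ yields $g_i(x_0^i)-g_i(x_0^{i-1},x^*)\le\tilde A_i$; since $x^*$ is arbitrary, taking the infimum over $x^*$ establishes Equation \ref{eqn: g span bound alt}, and the same computation with $i=0$ gives Equation \ref{eqn: g span bound 0 alt}.

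The main obstacle is making the two generator-specific facts rigorous. For $\mbf F=\bb B_\unif$ both are immediate, since $\ipm_{\bb B_\unif}=2\tv$, so $\ipm_{\bb B_\unif}(\delta_{x_i},\delta_{x^*})\le2$ and the contraction $\Delta_{\bb B_\unif}(P^k)=\sup_{\xi\neq\xi'}\ipm_{\bb B_\unif}(\xi P^k,\xi'P^k)/\ipm_{\bb B_\unif}(\xi,\xi')$ follows from the usual rescaling of signed measures of mass zero. For $\mbf F=\bb B_\lppi$, however, $\ipm_\lppi(\delta_{x_i},\delta_{x^*})$ is typically $+\infty$, since a Dirac mass is rarely absolutely continuous with respect to $\pi$, so the Dirac-measure estimate above is unavailable; there one must instead bound $\bb E_{x_i}f_l(X_l)-\bb E_{x^*}f_l(X_l)=(P^{l-i}f_l)(x_i)-(P^{l-i}f_l)(x^*)$ by the oscillation of the centered function $P^{l-i}(f_l-\pi(f_l))$, controlled via $\|f_l-\pi(f_l)\|_\lppi\le M_l$ together with the operator-norm identity $\Delta_\lppi(P^{l-i})=\sup_{g\in L^2_0(\pi)}\|P^{l-i}g\|_\ltpi/\|g\|_\ltpi$ (for $p=2$); finiteness of all these quantities is guaranteed because the $f_l$ have bounded span. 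Handling this $\lppi$ case cleanly, and clarifying the role of $\supp\pi$, is the delicate part of the argument.
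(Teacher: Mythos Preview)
Your argument is structurally identical to the paper's: the same splitting of $g_i(x_0^i)-g_i(x_0^{i-1},x^*)$ into $f_i(x_i)-f_i(x^*)$ plus the tail sum, the same appeal to Lemma~\ref{lem: ipm ineq}, and the same use of the Dobrushin contraction to reduce matters to $\ipm_{\mbf F}(\delta_{x_i},\delta_{x^*})\le 2$. For $\mbf F=\bb B_\unif$ your proof and the paper's coincide.

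The divergence is exactly where you flag a difficulty. The paper does \emph{not} avoid the Dirac-measure route for $\mbf F=\bb B_\lppi$: it asserts $\ipm_\lppi(\delta_x,\delta_y)\le 2$ by writing $f(x)-f(y)=\int f(\bbm 1_x-\bbm 1_y)\,d\pi$ and applying H\"older to get $\|\bbm 1_x-\bbm 1_y\|_{L^q(\pi)}=2$. Your objection is well taken and the paper's step is not sound: when $\pi$ has no atom at $x$ or $y$, the displayed identity reads $f(x)-f(y)=0$, the $L^q(\pi)$-norm of $\bbm 1_x-\bbm 1_y$ is $0$ (not $2$), and one can modify $f$ on the $\pi$-null set $\{x,y\}$ to make $|f(x)-f(y)|$ arbitrarily large while keeping $\|f\|_\lppi\le 1$, so in fact $\ipm_\lppi(\delta_x,\delta_y)=+\infty$. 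Your proposed detour through the oscillation of $P^{l-i}(f_l-\pi(f_l))$ and the operator norm on $L^p_0(\pi)$ is therefore not an alternative to the paper's argument but a genuine repair; making it rigorous (and dealing with $\supp\pi\neq\bb X$) is, as you say, the real work, and the paper does not carry it out.
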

\begin{proof}
    It suffices to show Equation \ref{eqn: g span bound alt} holds. The first inequality is obvious. To show the second inequality, we pick arbitrary $x^*\in\bb X$ and we have
    \begin{align*}
        g_i(x_0^i) &= \sum_{l=0}^{i}f_l(x_l) + \sum_{l=i+1}^{n-1} \bb E_{x_i}f_l(x_l)\\
        &\leq \sum_{l=0}^{i-1}f_l(x_l) +f_i(x^*) +\sum_{l=i+1}^{n-1} \bb E_{x_i}f_l(x_l) + (b_i - a_i).
    \end{align*}
    Using Lemma \ref{lem: ipm ineq} and the nonexpansiveness of $P$ in $\ipm_\mbf F$, we can see that
    \begin{align}
    \label{eqn: ipm between deltas}
        \sum_{l=i+1}^{n-1} \bb E_{x_i}f_l(x_l) - \sum_{l=i+1}^{n-1} \bb E_{x^*}f_l(x_l) &\leq \sum_{l=i+1}^{n-1} M_l \ipm_{\mbf F}(\delta_{x_i}P^{l-i}, \delta_{x^*}P^{l-i})\nn\\
        &\leq \ipm_\mbf F(\delta_{x_i}, \delta_{x^*}) \sum_{l=i+1}^{n-1} M_l \Delta_{\mbf F}(P^{l-i}).
    \end{align}
    Suppose $\mbf F = \bb B_\lppi$ for $p\in[1,\infty]$, then we have for $f\in\bb B_\lppi$ and $x, y\in\bb X$,
    \begin{align*}
         \bigg|\int f d\delta_x - \int f d\delta_y \bigg| = |\int f (\bbm 1_x - \bbm 1_y) d\pi| \leq \|\bbm 1_x - \bbm 1_y\|_{L^q(\pi)} = 2,
    \end{align*}
    where the last inequality comes from H\"older's inequality and $q\in[1, \infty]$ is the conjugate of $p$. Thus, $\ipm_\mbf F(\delta_{x}, \delta_{y}) \leq 2$ for any $x, y\in\bb X$. As for $\mbf F = \bb B_\unif$, it is easy to deduce that $|\int f d\delta_x - \int f d\delta_y | \leq 2\|f\|_\infty \leq 2$ for any $x, y\in \bb X$. Therefore, the right-hand side of Equation \ref{eqn: ipm between deltas} is bounded by $2\sum_{l=i+1}^{n-1} M_l \Delta_{\mbf F}(P^{l-i})\leq 2 M^{\max}_{i+1} \tilde \Gamma_\mbf F(P) < \infty$. We can now go back to bound $g_i(x_0^i)$ as
    \begin{align*}
        g_i(x_0^i) \leq & \sum_{l=0}^{i-1}f_l(x_l) +f_i(x^*) +\sum_{l=i+1}^{n-1} \bb E_{x^*}f_l(x_l) + 2\sum_{l=i+1}^{n-1} M_l \Delta_{\mbf F}(P^{l-i}) + (b_i - a_i)\\
        \leq & g_i(x_0^{i-1}, x^*) + 2\sum_{l=i+1}^{n-1} M_l \Delta_{\mbf F}(P^{l-i})  + \sp(f_i)\\
        = & g_i(x_0^{i-1}, x^*) + \tilde A_i.
    \end{align*}
    Since $x^*$ is arbitrary, we obtain $g_i(x_0^i) \leq \inf_{x^*\in\bb X} g_i(x_0^{i-1}, x^*) + \tilde A_i$, which completes the proof of Equation \ref{eqn: g span bound alt} and Equation \ref{eqn: g span bound 0 alt} can be obtained in a similar fashion.
\end{proof}

\subsection{Proof of Theorem \ref{thm: main result time dep}}
\label{app: proof of ipm hoeffding time dep}
\begin{proof}
    By Lemma \ref{lem: martingale decomp of mc}, we have the martingale decomposition of $\mbf X$ as in Equation \ref{eqn: martingale decomp of mc},
    \begin{align*}
        \tilde S_n - \bb E_\mu[\tilde S_n] = \sum_{i=0}^{n-1} D_i,
    \end{align*}
    where we abuse the notation and denote $D_0 = f_0(X_0) - \bb E_{\mu}[f_0]$.
    One can use Chernoff's bounding method \citep{chernoff1952measure} to obtain an exponential bound on the desired quantity. Taking the moment generating function on both sides of Equation \ref{eqn: martingale decomp of mc} and applying the chain rule for conditional expectation recursively yield,
    \begin{align}
    \label{eqn: moment gen of tilde s decomp}
        &\bb{E}_\mu\bigg[\exp\bigg(\theta\bigg(\Tilde{S}_n - \sum_{i=0}^{n-1}\bb{E}_\mu[f_i(X_i)]\bigg)\bigg)\bigg] \nn\\
        = &\bb{E}_\mu\bigg[\exp\bigg(\theta\sum_{i=0}^{n-1} D_i\bigg)\bigg]  = \bb{E}_\mu[\exp(\theta D_0)] \prod_{i=1}^{n-1}\bb{E}_\mu[\exp(\theta D_i)| \ALP F_{i-1}],
    \end{align}
    for $\theta\geq 0$. 
   
    From lemma \ref{lem: span of g_i}, we know that $D_i = g_i(X_0^i) - g_{i-1}(X_0^{i-1})$ lies in an interval of length $A_i$ for all $i\in\{0, \cdots, n-1\}$. By Hoeffding's Lemma \citep[Lemma 23.1.4]{douc2018markov} for bounded martingale difference sequences, we have for $\theta \geq 0$,
    \begin{align}
        \bb{E}_\mu[\exp(\theta D_0)] \leq \exp(\theta^2A_0^2/8),&\quad \bb{E}_\mu[\exp(\theta D_i)| \ALP F_{i-1}] \leq \exp(\theta^2A_l^2/8),\nn
    \end{align}
    and plugging above into Equation \ref{eqn: moment gen of tilde s decomp} yields
    \begin{align}
        \bb{E}_\mu\bigg[\exp\bigg(\theta\bigg(\Tilde{S}_n - \sum_{i=0}^{n-1}\bb{E}_\mu[f_i(X_i)]\bigg)\bigg)\bigg] &\leq \exp\bigg(\frac{\theta^2}{8}\sum_{i=0}^{n-1} A_i^2\bigg).\nn
    \end{align}
    Applying Markov's inequality to the left-hand side, we have
    \begin{align}
        \bb{P}_\mu\bigg[\bigg(\Tilde{S}_n - \sum_{i=0}^{n-1}\bb{E}_\mu[f_i(X_i)]\bigg) > n\epsilon\bigg] 
        &\leq \exp(-n\epsilon\theta)\bb{E}_\mu\bigg[\exp\bigg(\theta\bigg(\Tilde{S}_n - \sum_{i=0}^{n-1}\bb{E}_\mu[f_i(X_i)]\bigg)\bigg)\bigg] \nn\\
        &\leq \exp\bigg(-n\epsilon\theta + \frac{\theta^2}{8}\sum_{i=0}^{n-1} A_i^2\bigg).\nn
    \end{align}
    Picking $\theta = 4n\epsilon / \sum_{i=0}^{n-1}A_i^2$ minimizes the right-hand side and yields
    \begin{align}
        \bb{P}_\mu \bigg[\bigg(\Tilde{S}_n - \sum_{i=0}^{n-1}\bb{E}_\mu[f_i(X_i)]\bigg) > n\epsilon\bigg] \leq \exp\bigg(-\frac{2n^2\epsilon^2}{\sum_{i=0}^{n-1} A_i^2}\bigg).\nn
    \end{align}
    The tail probability of the other side can be bounded in an analogous fashion. Therefore, we have
    \begin{align}
        \label{eqn: tail bound of alternative center}
        \bb{P}_\mu \bigg[\bigg|\Tilde{S}_n - \sum_{i=0}^{n-1}\bb{E}_\mu[f_i(X_i)]\bigg| > n\epsilon\bigg] \leq 2\exp\bigg(-\frac{2n^2\epsilon^2}{\sum_{i=0}^{n-1} A_i^2}\bigg).
    \end{align}
    To consider the following tail probability of $\Tilde{S}_n$ centered around $\sum_{i=0}^{n-1}\pi(f_i)$, we apply Lemma \ref{lem: ipm ineq for time dependent} to $(a)$ and Equation \ref{eqn: tail bound of alternative center} to $(b)$ and obtain
    \begin{align}
        \bb{P}_\mu \bigg[\bigg|\tilde{S}_n - \sum_{i=0}^{n-1}\pi(f_i) \bigg| > n\epsilon\bigg] &\stackrel{(a)}{\leq} \bb{P}_\mu \bigg[\bigg|\tilde{S}_n - \sum_{i=0}^{n-1}\bb{E}_\mu[f_i(X_i)]\bigg| + \sum_{i=0}^{n-1} M_i\ipm_{\mbf F}(\mu P^i, \pi) > n\epsilon\bigg]\nn\\
        &\stackrel{(b)}{\leq} 2\exp\bigg(-\frac{2[n\epsilon - A_0]^2}{\sum_{i=0}^{n-1} A_i^2}\bigg),\nn
    \end{align}
    for $\epsilon \geq n^{-1}\ipm_\mbf F(\mu, \pi)A_0$, which yields the desired result in Equation \ref{eqn: Hoeffding ineq IPM time dep} after expanding $A_i$ with Equation \ref{eqn: dummy variable}. 
\end{proof}

\subsection{Proof of Theorem \ref{thm: main result dobrushin}}
\label{app: proof of main result dobrushin}

\begin{proof}
    The overall procedure follows the proof in Appendix \ref{app: proof of main result dobrushin} with the exception where Lemma \ref{lem: span of g_i} is replaced by Lemma \ref{lem: span of g_i alt}. Thus, $D_i$ lies in a interval of length $\tilde A_i$ for all $i\in\{1,..., n-1\}$. The rest is the same after replacing $A_i$ with $\tilde A_i$.
\end{proof}

\newpage
\bibliography{main.bib}

\end{document}